\newcommand\myeq[1]{\stackrel{\mathclap{\normalfont\tiny\mbox{#1}}}{=}}
\newcommand\mygeq[1]{\stackrel{\mathclap{\normalfont\tiny\mbox{#1}}}{\geq}}
\newcommand{\reals}{{\mathbb{R}}}
\newcommand{\naturals}{{\mathbb{N}}}
\newcommand{\identityf}[1]{\mathbbm{1} \left\{#1\right\}}
\newcommand{\bfu}[1]{\underline{\bm{#1}}}
\newcommand{\prob}[1]{\mathbb{P}\left\{{#1}\right\}}
\newcommand{\mean}[1]{\mathbb{E}\left[{#1}\right]}
\newcommand{\means}[2]{\mathbb{E}_{#1}\left[{#2}\right]}
\newcommand{\mtx}[1]{\mathbf{#1}}
\newcommand{\vc}[1]{\mathbf{#1}}
\newcommand{\norm}[1]{\|{#1}\|}
\newcommand{\pnorm}[2]{\|{#1}\|_{#2}}
\newcommand{\tp}[1]{{#1}^{\top}}
\newcommand{\proj}[2]{\Pi_{#1}\left(#2\right)}
\newcommand{\tuple}[1]{\mathsf{#1}}
\newcommand{\calA}{{\cal A}}
\newcommand{\calB}{{\cal B}}
\newcommand{\calC}{{\cal C}}
\newcommand{\calD}{{\cal D}}
\newcommand{\calF}{{\cal F}}
\newcommand{\calG}{{\cal G}}
\newcommand{\calU}{{\cal U}}
\newcommand{\calI}{{\cal I}}
\newcommand{\calJ}{{\cal J}}
\newcommand{\calR}{{\cal R}}
\newcommand{\calS}{{\cal S}}
\newcommand{\calL}{{\cal L}}
\newcommand{\calH}{{\cal H}}
\newcommand{\calX}{{\cal X}}
\newcommand{\calT}{{\cal T}}
\newcommand{\calP}{{\cal P}}
\DeclareMathOperator*{\argmax}{arg\,max}
\DeclareMathOperator*{\argmin}{arg\,min}
\theoremstyle{plain}
\newtheorem{theorem}{Theorem}
\newtheorem*{theorem*}{Theorem}
\newtheorem{proposition}{Proposition}
\newtheorem*{proposition*}{Proposition}
\newtheorem{lemma}{Lemma}
\newtheorem{corollary}{Corollary}
\newtheorem*{corollary*}{Corollary}
\theoremstyle{remark}
\theoremstyle{definition}
\newtheorem{definition}{Definition}
\icmltitlerunning{On the Robustness of Randomized Ensembles to Adversarial Perturbations}
\begin{document}

\twocolumn[
\icmltitle{On the Robustness of Randomized Ensembles to Adversarial Perturbations}



\icmlsetsymbol{dagger}{$\dagger$}
\begin{icmlauthorlist}
\icmlauthor{Hassan Dbouk}{uiuc}
\icmlauthor{Naresh R. Shanbhag}{uiuc}
\end{icmlauthorlist}

\icmlaffiliation{uiuc}{Department of Electrical and Computer Engineering, University of Illinois at Urbana-Champaign, Urbana, USA}

\icmlcorrespondingauthor{Hassan Dbouk}{hdbouk2@illinois.edu}

\icmlkeywords{Machine Learning, ICML}

\vskip 0.3in
]



\printAffiliationsAndNotice{}  

\begin{abstract}
Randomized ensemble classifiers (RECs), where \emph{one} classifier is randomly selected during inference, have emerged as an attractive alternative to traditional ensembling methods for realizing adversarially robust classifiers with limited compute requirements. However, recent works have shown that existing methods for constructing RECs are more vulnerable than initially claimed, casting major doubts on their efficacy and prompting \emph{fundamental} questions such as: \say{\emph{When are RECs useful?}}, \say{\emph{What are their limits?}}, and \say{\emph{How do we train them?}}. In this work, we first demystify RECs as we derive fundamental results regarding their theoretical limits, necessary and sufficient conditions for them to be useful, and more. Leveraging this new understanding, we propose a new boosting
algorithm (BARRE) for training robust RECs, and empirically demonstrate its effectiveness at defending against strong $\ell_\infty$ norm-bounded adversaries across various network architectures and datasets. Our code can be found at \url{https://github.com/hsndbk4/BARRE}.
\end{abstract}

\section{Introduction}

Defending deep networks against adversarial perturbations \citep{szegedy2013intriguing,biggio2013evasion,goodfellow2014explaining} remains a difficult task. Several proposed defenses \citep{papernot2016distillation,pang2019improvingADP,yang2019me,sen2019empir,pinot2020randomization} have been subsequently \say{broken} by stronger adversaries \citep{carlini2017towards,athalye2018obfuscated,tramer2020adaptive,dbouk2022adversarial}, whereas strong defenses \citep{cisse2017parseval,tramer2018ensemble,cohen2019certified}, such as adversarial training (AT) \citep{goodfellow2014explaining,trades,madry2018towards}, achieve unsatisfactory levels of robustness\footnote{when compared to the high clean accuracy achieved in a non-adversarial setting}.

\begin{figure}[bt]
  \centering
    \includegraphics[width=0.9\columnwidth]{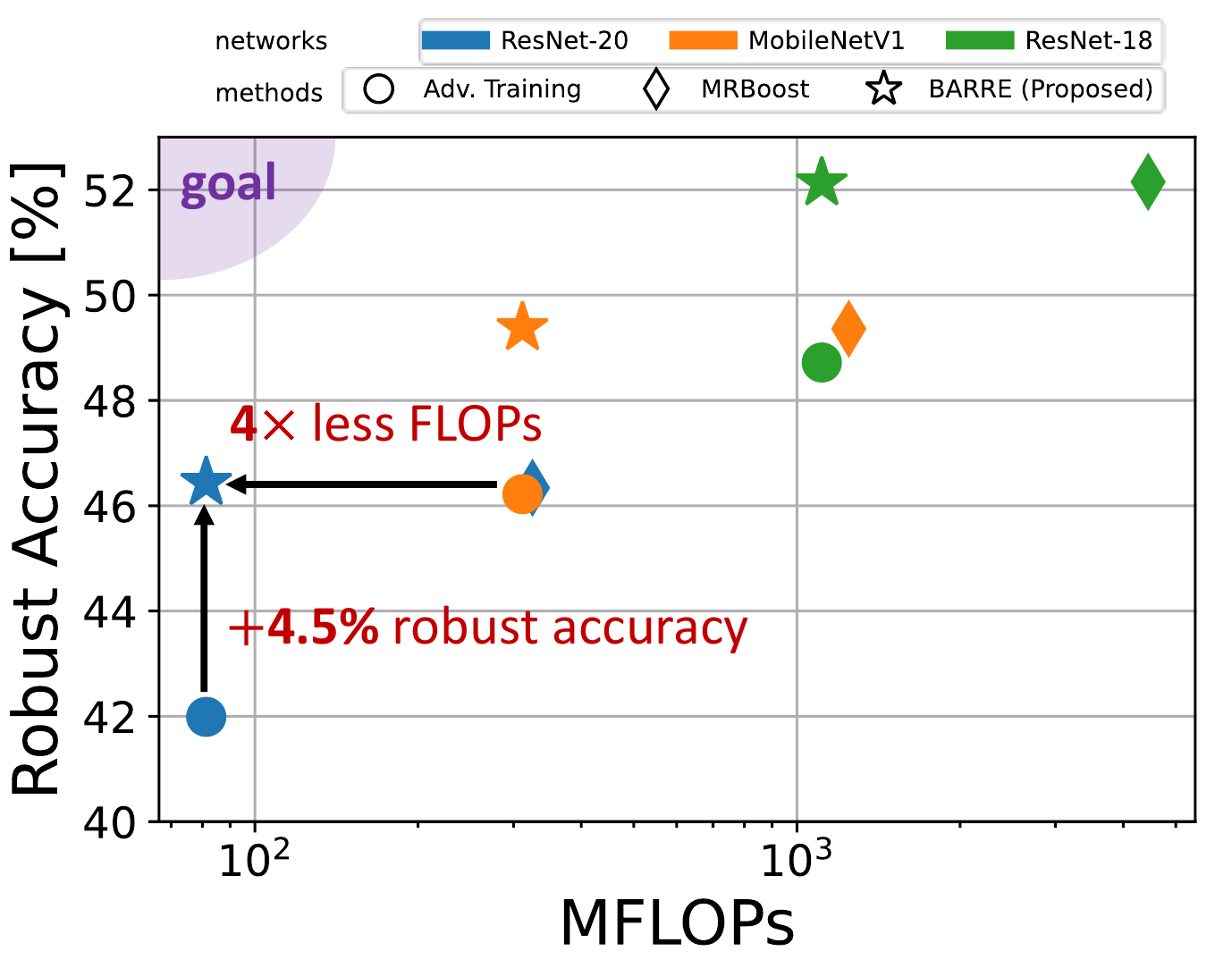}
    \caption{The efficacy of employing randomized ensembles ($\star$) for achieving robust and efficient inference compared to AT ($\bullet$) and deterministic ensembling MRBoost ($\blacklozenge$) on CIFAR-10. Robustness is measured using the standard $\ell_\infty$ norm-bounded adversary with radius $\epsilon=\nicefrac{8}{255}$.}
    \label{fig:motivation}
\end{figure}

A popular belief in the adversarial community is that single model defenses, e.g., AT, lack the capacity to defend against all possible perturbations, and that constructing an ensemble of diverse, often smaller, models should be more cost-effective  \citep{pang2019improvingADP,kariyappa2019improvingGAL,pinot2020randomization,yang2020dverge,yang2021trs,abernethy2021multiclass,mrboost}. Indeed, recent \emph{deterministic} robust ensemble methods, such as MRBoost \citep{mrboost}, have been successful at achieving higher robustness compared to AT baselines using the same network architecture, at the expense of $4\times$ more compute (see Fig.~\ref{fig:motivation}). In fact, Fig~\ref{fig:motivation} indicates that one can simply adversarially train \emph{larger} deep nets that can match the robustness and compute requirements of MRBoost models, rendering state-of-the-art boosting techniques obsolete for designing classifiers that are both \emph{robust} and \emph{efficient}.   

In contrast, \emph{randomized} ensembles, where \underline{one} classifier is randomly selected during inference, offer a unique way of ensembling that can operate with limited compute resources. However, the recent work of Dbouk \& Shanbhag \yrcite{dbouk2022adversarial} has cast major concerns regarding their efficacy, as they successfully compromised the state-of-the-art randomized defense of Pinot et al. \yrcite{pinot2020randomization} by large margins using their proposed ARC adversary. Furthermore, there is an apparent lack of \emph{proper} theory on the robustness of randomized ensembles, as fundamental questions such as: \say{when does randomization help?} or \say{how to find the optimal sampling probability?} remain unanswered.

\textbf{Contributions}. In this work, we first provide a theoretical framework for analyzing the adversarial robustness of randomized ensmeble classifiers (RECs). Our theoretical results enable us to better understand randomized ensembles, revealing interesting and useful answers regarding their limits, necessary and sufficient conditions for them to be useful, and efficient methods for finding the optimal sampling probability. 
Next, guided by our threoretical results, we propose BARRE, a boosting algorithm to construct randomized ensemble classifiers that achieve robustness on par with that of AT and MRBoost but at a \emph{fraction of their computational cost} (see Fig.~\ref{fig:motivation}). 
We validate the effectiveness of BARRE via comprehensive experiments across multiple network architectures and datasets. 

\section{Background and Related Work} 

\textbf{Adversarial Robustness}. Deep neural networks are known to be vulnerable to adversarial perturbations \citep{szegedy2013intriguing,biggio2013evasion}. In an attempt to robustify deep nets, several defense methods have been proposed \citep{katz2017reluplex,madry2018towards,cisse2017parseval,trades,yang2020dverge,mrboost,tjeng2018evaluating,xiao2018training,raghunathan2018certified,yang2020randomized}. While some heuristic-based \emph{empirical} defenses have later been broken by better adversaries \citep{carlini2017towards,athalye2018obfuscated,tramer2020adaptive}, strong defenses, such as adversarial training (AT) \citep{goodfellow2014explaining,madry2018towards,trades}, remain unbroken but achieve unsatisfactory levels of robustness.

\textbf{Ensemble Defenses}. Building on the massive success of classic ensemble methods in machine learning \citep{breiman1996bagging,freund1997decision,dietterich2000ensemble}, robust ensemble methods \citep{kariyappa2019improvingGAL, pang2019improvingADP, sen2019empir,yang2020dverge,yang2021trs,abernethy2021multiclass,mrboost} have emerged as a natural solution to compensate for the unsatisfactory performance of existing single-model defenses, such as AT. Earlier works \citep{kariyappa2019improvingGAL, pang2019improvingADP, sen2019empir} relied on heuristic-based techniques for inducing diversity within the ensembles, and have been subsequently shown to be weak \citep{tramer2020adaptive,athalye2018obfuscated}. Recent methods, such as RobBoost \citep{abernethy2021multiclass} and MRBoost \citep{mrboost},  formulate the design of robust ensembles from a margin boosting perspective, achieving state-of-the-art robustness for deterministic ensemble methods. This achievement comes at a massive ($4-5\times$) increase in compute requirements, as each inference requires executing all members of the ensemble, deeming them unsuitable for safety-critical edge applications \citep{NAS,sehwag2020hydra,dbouk2021generalized}. Randomized ensembles \citep{pinot2020randomization}, where one classifier is chosen randomly during inference, offer a more compute-efficient alternative. However, this defense has been recently broken independently by Dbouk \& Shanbhag \yrcite{dbouk2022adversarial} and Zhang et al. \yrcite{mrboost}.
In this work, we develop theoretical results to delineate scenarios when randomized ensembles can be effective at defending against adversarial perturbations, and propose a boosting algorithm for training such ensembles to achieve high levels of robustness with limited compute requirements.

\textbf{Randomized Defenses}. A randomized defense, where the defender adopts a random strategy for classification, is intuitive: if the defender does not know what is the exact policy used for a certain input, then one expects that the adversary will struggle \emph{on average} to fool such a defense. Theoretically, Bayesian Neural Nets (BNNs) \citep{neal2012bayesian} have been shown to be robust (in the large data limit) to gradient-based attacks \citep{carbone2020robustness}, whereas  Pinot et al. \yrcite{pinot2020randomization} has shown that a randomized ensemble classifier (REC) with higher robustness exists for every deterministic classifier.
However, realizing strong and practical randomized defenses remains elusive 
as BNNs are too computationally prohibitive and existing methods \citep{xie2018mitigating,dhillon2018stochastic,yang2019me} often end up being compromised by adaptive attacks \citep{athalye2018obfuscated,tramer2020adaptive}. Even BAT, the proposed method of Pinot et al. \yrcite{pinot2020randomization} for robust RECs, was recently broken by Zhang et al. \yrcite{mrboost} and Dbouk \& Shanbhag \yrcite{dbouk2022adversarial}. In contrast, our work first demystifies randomized ensembles as we derive fundamental results regarding the limit of RECs, necessary and sufficient conditions for them to be useful, and efficient methods for finding the optimal sampling probability. Empirically, our proposed boosting algorithm (BARRE) can successfully train RECs, to achieve both robust and efficient classification.

\section{Preliminaries \& Problem Setup}
\textbf{Notation}. Let $\calF = \{f_1,...,f_M\}$ be a collection of $M$ arbitrary $C$-ary classifiers $f_i: \reals^d \rightarrow [C]$. A soft classifier, denoted by $\tilde{f}: \reals^d \rightarrow \reals^C$, can be used to construct a hard classifier $f(\vc{x}) = \argmax_{c\in[C]} [\tilde{f}(\vc{x})]_c$, where $[\vc{v}]_c = v_c$. We use the notation $f(\cdot|\bm{\theta})$ to represent \emph{parametric} classifiers  where $f$ is a \emph{fixed} mapping and $\bm{\theta}\in\Theta$ represents the learnable parameters. Let $\Delta_M = \{\vc{v}\in[0,1]^M: \sum v_i =1\}$ be the probability simplex of dimension $M-1$. Given a probability vector $\bm{\alpha} \in \Delta_M$, we construct a randomized ensemble classifier (REC) $f_{\bm{\alpha}}$ such that $f_{\bm{\alpha}}(\vc{x}) = f_i(\vc{x})$ with probability $\alpha_i$. In contrast, traditional ensembling methods construct a deterministic ensemble classifier (DEC) using the soft classifiers as follows\footnote{the normalizing constant $\frac{1}{M}$ does not affect the classifier output}: $\bar{f}(\vc{x}) = \argmax_{c\in [C]}[\sum_{i=1}^M \tilde{f}_i(\vc{x})]_c$. Denote $\tuple{z}=(\vc{x},y)\in \reals^d \times [C]$ as a feature-label pair that follows some unknown distribution $\calD$. Let $\calS \subset \reals^d$ be a closed and bounded set representing the attacker's perturbation set. A typical choice of $\calS$ in the adversarial community is the $\ell_p$ ball of radius $\epsilon$: $\calB_p(\epsilon)=\{\bm{\delta}\in\reals^d: \pnorm{\bm{\delta}}{p}\leq \epsilon\}$. 
For a classifier $f_i\in\calF$ and data-point $\tuple{z}=(\vc{x},y)$, define $\calS_i(\tuple{z})=\{\bm{\delta}\in\calS: f_i(\vc{x}+\bm{\delta})\neq y\}$ to be the set of valid \emph{adversarial} perturbations to $f_i$ at $\tuple{z}$.

\begin{definition}
\label{def:adv-risk}
For any (potentially random) classifier $f$, define the \emph{adversarial risk} $\eta$:
\begin{equation}
    \eta(f) = \means{\tuple{z}\sim \calD}{\max_{\bm{\delta}\in\calS}{ \means{f}{\identityf{f(\vc{x}+\bm{\delta})\neq y}}}}
\end{equation}
\end{definition}
The adversarial risk measures the robustness of $f$ on average in the presence of an adversary (attacker) restricted to the set $\calS$. For the special case of $\calS=\{\vc{0}\}$, the adversarial risk reduces to the \emph{standard risk} of $f$:
\begin{equation}
    \eta_0(f) = \means{\tuple{z}\sim \calD}{ \means{f}{\identityf{f(\vc{x})\neq y}}} = \prob{f(\vc{x})\neq y}
\end{equation}
The more commonly reported robust accuracy of $f$, i.e., accuracy against adversarially perturbed inputs, can be directly computed from $\eta(f)$. The same can be said for the clean accuracy and $\eta_0(f)$.

When working with an REC $f_{\bm{\alpha}}$, the adversarial risk can be expressed as:
\begin{equation}\label{eq:eta-1}
    \eta(f_{\bm{\alpha}})  =\means{\tuple{z}\sim \calD}{\max_{\bm{\delta}\in\calS}{ \sum_{i=1}^M \alpha_i \identityf{f_i(\vc{x}+\bm{\delta})\neq y}}}
\end{equation}
where we use the notation $\eta(f_{\bm{\alpha}}) \equiv \eta(\bm{\alpha})$ whenever the collection $\calF$ is fixed. Let $\{\vc{e}_i\}_{i=1}^M\subset\{0,1\}^M$ be the standard basis vectors of $\reals^M$, then we employ the notation $\eta(f_i) = \eta(f_{\vc{e}_i}) \equiv \eta(\vc{e}_i) = \eta_i$.

\section{The Adversarial Risk of a Randomized Ensemble Classifier} \label{sec:theory}
In this section, we develop our main theoretical findings regarding the adversarial robustness of any randomized ensemble classifier. Detailed proofs of all statements and theorems can be found in Appendix~\ref{app:proofs}.
\subsection{Properties of \texorpdfstring{$\eta$}{}}
We start with the following statement:

\begin{proposition}
\label{prop:eta} 
For any $\calF=\{f_i\}_{i=1}^M$, perturbation set $\calS \subset \reals^d$, and data distribution $\calD$, the adversarial risk $\eta$ is a piece-wise linear convex function $\forall \bm{\alpha} \in \Delta_M$. Specifically, $\exists K\in \naturals$ configurations $\calU_k\subseteq\{0,1\}^M$ $\forall k\in[K]$ and p.m.f. $\vc{p}\in \Delta_K$ such that:
\begin{equation}\label{eq:eta-2}
    \eta(\bm{\alpha}) = \sum_{k=1}^K\left(p_k \cdot \max_{\vc{u}\in \calU_k}\left\{\tp{\vc{u}}\bm{\alpha}\right\}\right)
\end{equation}
\end{proposition}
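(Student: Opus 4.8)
The plan is to reduce the claim to a pointwise analysis in the data-point and then integrate. First I would fix a single $\tuple{z}=(\vc{x},y)$ and examine the inner quantity $g_{\tuple{z}}(\bm{\alpha}) = \max_{\bm{\delta}\in\calS}\sum_{i=1}^M \alpha_i \identityf{f_i(\vc{x}+\bm{\delta})\neq y}$. The key observation is that for each perturbation $\bm{\delta}$ the vector of indicators $\vc{u}(\bm{\delta})=\left(\identityf{f_1(\vc{x}+\bm{\delta})\neq y},\dots,\identityf{f_M(\vc{x}+\bm{\delta})\neq y}\right)$ lives in the \emph{finite} set $\{0,1\}^M$, and the inner sum is exactly the linear form $\tp{\vc{u}(\bm{\delta})}\bm{\alpha}$. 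Hence, letting $\calU(\tuple{z})=\{\vc{u}(\bm{\delta}):\bm{\delta}\in\calS\}\subseteq\{0,1\}^M$ denote the nonempty finite set of achievable \emph{fooling patterns} at $\tuple{z}$, the continuum maximization collapses to a maximization over a finite set, $g_{\tuple{z}}(\bm{\alpha})=\max_{\vc{u}\in\calU(\tuple{z})}\tp{\vc{u}}\bm{\alpha}$. This also settles that the maximum is attained, since $\tp{\vc{u}(\bm{\delta})}\bm{\alpha}$ takes only finitely many values as $\bm{\delta}$ ranges over $\calS$. Being a pointwise maximum of finitely many linear functions of $\bm{\alpha}$, each $g_{\tuple{z}}$ is piecewise-linear and convex on $\Delta_M$.

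The crucial finiteness step comes next: although $\tuple{z}$ may range over an uncountable domain, the configuration $\calU(\tuple{z})$ is a subset of the finite set $\{0,1\}^M$, so it can take at most $2^{2^M}$ distinct values. I would enumerate the distinct values as $\calU_1,\dots,\calU_K$ and set $p_k = \prob{\calU(\tuple{z})=\calU_k}$, the probability mass of data-points whose achievable fooling patterns coincide with $\calU_k$; by construction $\vc{p}\in\Delta_K$. Partitioning the expectation $\eta(\bm{\alpha})=\means{\tuple{z}\sim\calD}{g_{\tuple{z}}(\bm{\alpha})}$ according to which configuration $\calU_k$ occurs then yields \eqref{eq:eta-2} directly. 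Convexity and piecewise-linearity of $\eta$ finally follow because a nonnegative combination of piecewise-linear convex functions is again piecewise-linear and convex.

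The main obstacle I anticipate is not the algebra but the measure-theoretic bookkeeping: one must verify that each event $\{\tuple{z}:\calU(\tuple{z})=\calU_k\}$ is $\calD$-measurable so that $p_k$ is well-defined, and that $\tuple{z}\mapsto g_{\tuple{z}}(\bm{\alpha})$ is measurable so the defining expectation exists. I would handle this by writing each such event as a finite Boolean combination of sets of the form $\{\tuple{z}:\exists\,\bm{\delta}\in\calS,\ \vc{u}(\bm{\delta})=\vc{u}\}$ for fixed $\vc{u}\in\{0,1\}^M$, and invoking mild regularity of the classifiers $f_i$ (so that the sets $\calS_i(\tuple{z})$ are measurable) together with a projection/measurable-selection argument; under the standard assumptions implicit in Definition~\ref{def:adv-risk} this is routine. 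Everything else reduces to deterministic finite-dimensional convex analysis.
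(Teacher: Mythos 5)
Your proposal is correct and follows essentially the same route as the paper's proof: reduce the inner maximization at each fixed $\tuple{z}$ to a maximum of the linear forms $\tp{\vc{u}}\bm{\alpha}$ over the finite set of achievable fooling patterns in $\{0,1\}^M$, then partition the data space by configuration (finitely many possibilities, at most $2^{2^M}$), define $p_k$ as the probability of each configuration class, and conclude by linearity of expectation and closure of piecewise-linear convexity under nonnegative combinations. Your explicit treatment of attainment of the maximum and of the measurability of the events $\{\tuple{z}:\calU(\tuple{z})=\calU_k\}$ is a point of additional care that the paper's proof passes over silently, but it does not change the substance of the argument.
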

Before we explain the intuition behind Proposition~\ref{prop:eta}, we first make the following observations: 

\textbf{Generality}. Proposition~\ref{prop:eta} makes no assumptions about the classifiers $\calF$, i.e., it applies even to the enigmatic deep nets. While the majority of theoretical results in the literature have been restricted to $\ell_p$-bounded adversaries, Proposition~\ref{prop:eta} holds for any closed and bounded perturbation set $\calS$. This is crucial, as real-world attacks are often not restricted to  $\ell_p$ balls around the input \citep{liu2018dpatch,duan2020adversarial}. This generality is further inherited by all of our results, as they build on Proposition~\ref{prop:eta}.

\textbf{Analytic Form}. Proposition~\ref{prop:eta} allows us to re-write the adversarial risk in \eqref{eq:eta-1} using the analytic form in \eqref{eq:eta-2}, which is much simpler to analyze and work with. In fact, the analytic form in \eqref{eq:eta-2} enables us to derive our main theoretical results in Sections~\ref{ssec:two}
\&~\ref{ssec:bounds}, which include \emph{tight fundamental bounds} on $\eta$.

\textbf{Optimal Sampling}. The convexity of $\eta$ implies that any local minimum $\bm{\alpha}^*$ is also a global minimum. The probability simplex is a closed convex set, thus a global minimum, which need not be unique, is always achievable. Since $\eta$ is piece-wise linear, then there always exists a \emph{finite} set of candidate solutions for $\bm{\alpha}^*$. For $M\leq 3$, we  efficiently enumerate all candidates in Section~\ref{ssec:two}, eliminating the need for any sophisticated search method. For larger $M$ however, enumeration becomes intractable. In Section~\ref{ssec:sampling}, we construct an optimal algorithm for finding $\bm{\alpha}^*$ by leveraging the classic sub-gradient method \citep{shor2012minimization} for optimizing sub-differentiable functions.

\begin{figure}[t]
  \centering
    \includegraphics[width=\columnwidth]{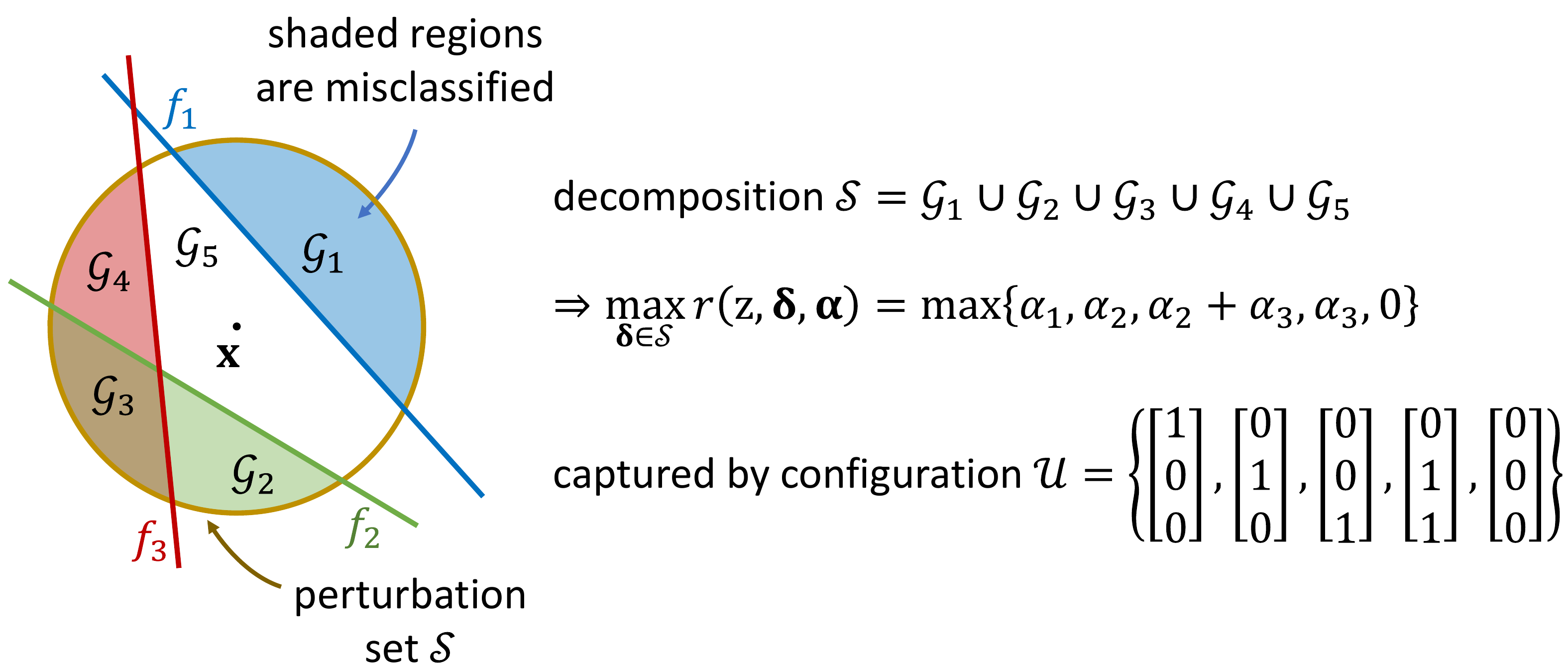}
    \caption{Illustration of the equivalence in \eqref{eq:singe-risk-equiv} using an example of three classifiers in $\reals^2$. The shaded areas represent regions in the attacker-restricted input space where each classifier makes an error. All classifiers correctly classify $\vc{x}$. The set $\calU$ uniquely captures the interaction between $\tuple{z}$ and $f_1$, $f_2$, \& $f_3$ inside $\calS$.}
    \label{fig:illustration}
\end{figure}

\textbf{Intuition}. Consider a data-point $\tuple{z}\in \reals^d \times [C]$, then for any $\bm{\delta} \in \calS$ and $\bm{\alpha}\in \Delta_M$ we have the per-sample risk:
\begin{equation}\label{eq:per-sample-risk}
    r\left(\tuple{z},\bm{\delta},\bm{\alpha}\right)= \sum_{i=1}^M \alpha_i \identityf{f_i(\vc{x}+\bm{\delta})\neq y}  = \tp{\vc{u}}\bm{\alpha}
\end{equation}
where $\vc{u}\in\{0,1\}^M$ such that $u_i=1$ if and only if $\bm{\delta}$ is adversarial to $f_i$ at $\tuple{z}$. Since $\vc{u}$ is independent of $\bm{\alpha}$, we thus obtain a many-to-one mapping from $\bm{\delta}\in \calS$ to $\vc{u}\in\{0,1\}^M$. Therefore, for any $\bm{\alpha}$ and $\tuple{z}$, we can always decompose the perturbation set $\calS$, i.e., $\calS = \calG_1 \cup ... \cup \calG_n$, into $n\leq 2^M$ subsets, such that: $\forall \bm{\delta} \in \calG_j: r\left(\tuple{z},\bm{\delta},\bm{\alpha}\right) = \tp{\bm{\alpha}}\vc{u}_j$
for some binary vector $\vc{u}_j$ independent of $\bm{\alpha}$. Let $\calU=\{\vc{u}_j\}_{j=1}^{n}$ be the collection of these vectors, then we can write:
\begin{align} \label{eq:singe-risk-equiv}
    \begin{split}
        \max_{\bm{\delta}\in\calS} r\left(\tuple{z},\bm{\delta},\bm{\alpha}\right) &= \max_{\bm{\delta}\in\calG_1 \cup ... \cup \calG_n} r\left(\tuple{z},\bm{\delta},\bm{\alpha}\right) \\
        &=\max_{j \in [n]} \left\{ \max_{\bm{\delta}\in\calG_j} r\left(\tuple{z},\bm{\delta},\bm{\alpha}\right)\right\} \\
        &= \max_{\vc{u}\in \calU} \left\{ \tp{\vc{u}}\bm{\alpha}\right\}
    \end{split}
\end{align}
The main idea behind the equivalence in \eqref{eq:singe-risk-equiv} is that we can represent any configuration of classifiers, data-point and perturbation set using a unique set of binary vectors $\calU$. For example, Fig.~\ref{fig:illustration} pictorially depicts this equivalence using a case of $M=3$ classifiers in $\reals^2$ with $\calS=\calB_2(\epsilon)$. This equivalence is the key behind Proposition~\ref{prop:eta}, since the point-wise max term in \eqref{eq:singe-risk-equiv} is piece-wise linear and convex $\forall \bm{\alpha}\in \Delta_M$. Finally, Proposition~\ref{prop:eta} holds due to the pigeon-hole principle and the linearity of expectation.

\begin{figure*}[t]
  \centering
    \includegraphics[width=1.8\columnwidth]{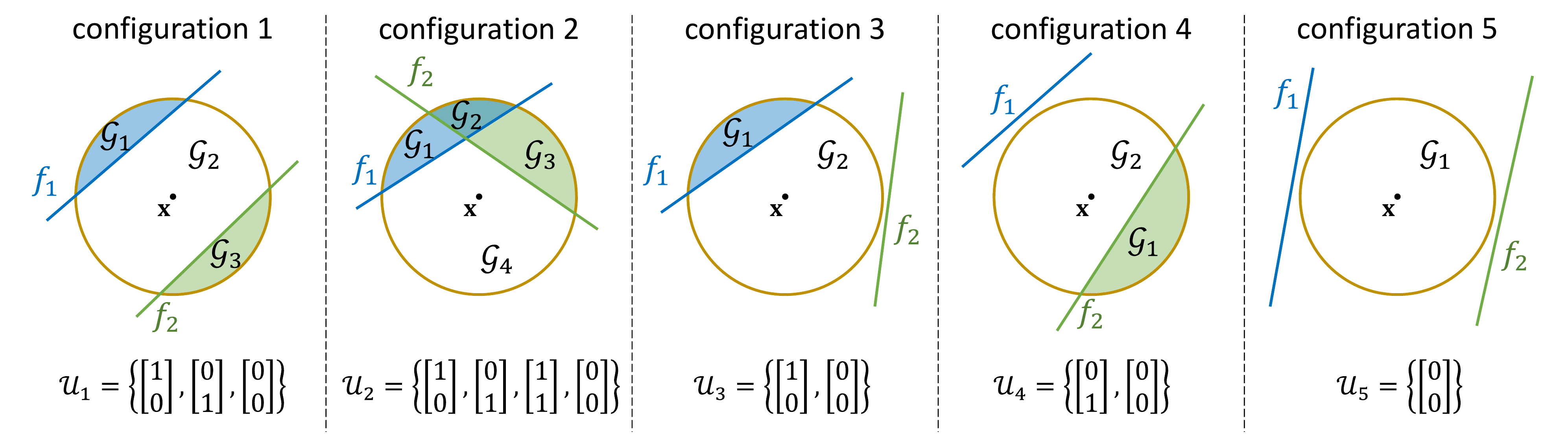}
    \caption{Enumeration of all $K=5$ unique configurations with two classifiers and a data-point around a set $\calS$. Note that since $\alpha_i \geq 0 \ \forall i$, the $\vc{0}$ vector is redundant in $\calU_k$ for $k\in [4]$, which explains why $K=5$ and not more. 
    }
    \label{fig:enumerate}
\end{figure*}

\subsection{Special Case of Two Classifiers}\label{ssec:two}
With two classifiers only, we can leverage the analytic form of $\eta$ in \eqref{eq:eta-2} and enumerate all possible classifiers/data-point configurations around $\calS$ by enumerating all configurations $\calU_k \subseteq \{0,1\}^2$. Specifically, Fig.~\ref{fig:enumerate} visualizes all $K=5$ such unique configurations, which allows us to write $\forall \bm{\alpha}\in \Delta_2$:
\begin{equation}\label{eq:eta-m2}
    \eta(\bm{\alpha}) = p_1 \cdot \max\{\alpha_1,\alpha_2\} + p_2 \cdot 1 + p_3 \cdot \alpha_1 + p_4 \cdot \alpha_2 + p_5 \cdot 0
\end{equation}
where $\vc{p} \in \Delta_5$ is the p.m.f. of \say{binning} any data-point $\tuple{z}$ into any of the five configurations, under the data distribution $\tuple{z}\sim\calD$. Using \eqref{eq:eta-m2}, we obtain the following result:

\begin{theorem}
\label{thm:two-classifiers} 

For any two classifiers $f_1$ and $f_2$ with individual adversarial risks $\eta_1$ and $\eta_2$, respectively, subject to a perturbation set $\calS \subset \reals^d$ and data distribution $\calD$, if:
\begin{equation}
    \label{eq:thm1-cond}
    \prob{\tuple{z}\in\calR_1} > |\eta_1-\eta_2|
\end{equation}
where:
\begin{equation} \label{eq:R_1}
    \calR_1 = \{\tuple{z}\in\reals^d \times [C]: \calS_1(\tuple{z}), \calS_2(\tuple{z})\neq \varnothing, \calS_1(\tuple{z})\cap \calS_2(\tuple{z})=\varnothing\}
\end{equation}
then the optimal sampling probability $\bm{\alpha}^*=\tp{\left[\nicefrac{1}{2}\  \nicefrac{1}{2}\right]}$ uniquely minimizes $\eta(\bm{\alpha})$ resulting in $\eta(\bm{\alpha}^*)=\frac{1}{2}\left(\eta_1+\eta_2-\prob{\tuple{z}\in \calR_1}\right)$. Otherwise,  $\bm{\alpha}^* \in \{\vc{e}_1, \vc{e}_2\}$ minimizes $\eta(\bm{\alpha})$, where $\vc{e}_i$s are the standard basis vectors of $\reals^2$.
\end{theorem}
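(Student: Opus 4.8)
The plan is to exploit the explicit enumeration in \eqref{eq:eta-m2}, which collapses the whole problem onto a single one-dimensional piecewise-linear convex function. First I would read off the individual risks by evaluating \eqref{eq:eta-m2} at the basis vectors: setting $\bm{\alpha}=\vc{e}_1$ gives $\eta_1 = p_1+p_2+p_3$ and $\bm{\alpha}=\vc{e}_2$ gives $\eta_2 = p_1+p_2+p_4$, so $\eta_1-\eta_2 = p_3-p_4$ and hence $|\eta_1-\eta_2| = |p_3-p_4|$. Since $\calR_1$ is by construction exactly the event defining the first configuration in Fig.~\ref{fig:enumerate} (both $\calS_i(\tuple{z})$ nonempty but disjoint), we also have $\prob{\tuple{z}\in\calR_1} = p_1$. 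Thus the hypothesis \eqref{eq:thm1-cond} is equivalent to the clean inequality $p_1 > |p_3-p_4|$, which is the form I would actually work with.

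Next I would parametrize the simplex by $\alpha_1 = t$, $\alpha_2 = 1-t$ with $t\in[0,1]$ and substitute into \eqref{eq:eta-m2}. The only nonlinearity is $\max\{t,1-t\}$, whose kink sits at $t=\nicefrac{1}{2}$. A direct computation shows $\eta$ is affine on $[0,\nicefrac{1}{2}]$ with slope $p_3-p_4-p_1$, and affine on $[\nicefrac{1}{2},1]$ with slope $p_3-p_4+p_1$. Because $\eta$ is convex by Proposition~\ref{prop:eta}, these two slopes are automatically nondecreasing across the kink, so the location of the minimizer is governed entirely by their signs.

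The heart of the argument is then a short case analysis on these slopes. The point $t=\nicefrac{1}{2}$ is the \emph{unique} minimizer precisely when the left slope is strictly negative and the right slope strictly positive, i.e. $p_3-p_4-p_1 < 0 < p_3-p_4+p_1$, which rearranges to $|p_3-p_4| < p_1$ — exactly condition \eqref{eq:thm1-cond}. In that regime I would evaluate $\eta$ at $t=\nicefrac{1}{2}$ to obtain $\frac{p_1}{2}+p_2+\frac{p_3+p_4}{2}$ and verify algebraically that this equals $\frac{1}{2}\left(\eta_1+\eta_2-p_1\right)$ using the expressions above. When instead $p_1 \le |p_3-p_4|$, both slopes share a sign (or one vanishes), so $\eta$ is monotone on $[0,1]$ and attains its minimum at an endpoint $t\in\{0,1\}$, i.e. at $\vc{e}_2$ or $\vc{e}_1$; determining which one reduces to comparing $\eta_1$ and $\eta_2$.

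The only genuinely delicate point is the uniqueness claim, which forces careful bookkeeping of strict versus non-strict inequalities: the kink minimizer is unique only when both slopes are strictly signed, whereas the boundary case $p_1 = |p_3-p_4|$ produces a flat segment along which a basis vector is still optimal — consistent with the \say{otherwise} conclusion that some $\vc{e}_i$ minimizes $\eta$, without asserting uniqueness there. Everything else is a routine slope computation, so I expect no substantial obstacle beyond this sign-tracking.
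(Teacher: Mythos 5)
Your proposal is correct and follows essentially the same route as the paper: reduce to the one-dimensional piecewise-linear convex function via the five-configuration enumeration \eqref{eq:eta-m2}, parametrize the simplex by a single variable, and decide the minimizer by the signs of the two slopes around the kink at $\nicefrac{1}{2}$. The only cosmetic difference is that the paper packages this sign analysis into a separate lemma (Lemma~\ref{lemma:opt}, proved via sub-gradient optimality conditions) with $a_1=-p_1$, $a_2=p_1$, $a_3=p_3-p_4$, whereas you carry out the identical slope case-analysis inline.
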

Theorem~\ref{thm:two-classifiers} provides us with a \emph{complete} description of how randomized ensembles operate when $M=2$. We discuss its implications below:

\textbf{Interpretation}. Theorem~\ref{thm:two-classifiers} states that randomization is \emph{guaranteed} to help when the condition in \eqref{eq:thm1-cond} is satisfied, i.e., when the probability of data-points $\tuple{z}$ ($\prob{\tuple{z}\in\calR_1}$) for which it is possible to find adversarial perturbations that can fool $f_1$ or $f_2$ but not both (see configuration 1 in Fig.~\ref{fig:enumerate}), is greater than the absolute difference ($|\eta_1-\eta_2|$) of the individual classifiers' adversarial risks. Consequently, if the adversarial risks of the classifiers are heavily skewed, i.e., $|\eta_1 - \eta_2|$ is large, then randomization is less likely to help, since condition \eqref{eq:thm1-cond} becomes harder to satisfy. This, in fact, is the case for BAT defense \citep{pinot2020randomization} since it generates two classifiers with $\eta_1 <1$ and $\eta_2 = 1$. Theorem~\ref{thm:two-classifiers} indicates that adversarial defenses should strive to achieve $\eta_1 \approx \eta_2$ for randomization to be effective. In practice, it is very difficult to make $\prob{\tuple{z}\in\calR_1}$ very large compared to $\eta_1$ and $\eta_2$ due to transferability of adversarial perturbations.

\textbf{Optimality Condition}. In fact, the condition in \eqref{eq:thm1-cond} is actually a \emph{necessary} and \emph{sufficient} condition for $\eta(\bm{\alpha}^*) < \min\{\eta_1,\eta_2\}$. That is, a randomized ensemble of $f_1$ and $f_2$ is \emph{guaranteed} to achieve smaller adversarial risk than either $f_1$ of $f_2$ if and only if \eqref{eq:thm1-cond} holds. This also implies that it is \emph{impossible} to have a nontrivial\footnote{that is different than $\vc{e}_1$ or $\vc{e}_2$} unique global minimizer other than $\bm{\alpha}^*=\tp{\left[\nicefrac{1}{2}\  \nicefrac{1}{2}\right]}$, which provides further theoretical justification for why the BAT defense \citep{pinot2020randomization} does not work, where $\bm{\alpha}^*=\tp{[0.9\ 0.1]}$ was claimed to be a unique optimum (obtained via sweeping $\bm{\alpha}$).

\textbf{Theoretical Limit}. From Theorem~\ref{thm:two-classifiers}, we can directly obtain a \emph{tight} bound on the adversarial risk: 
\begin{corollary}\label{cor:limit} For any two classifiers $f_1$ and $f_2$ with individual adversarial risks $\eta_1$ and $\eta_2$, respectively, perturbation set $\calS$, and data distribution $\calD$: 
\begin{equation}\label{eq:thm1-limit}
   \min_{\bm{\alpha}\in \Delta_2} \eta(\bm{\alpha})\geq \min\left\{\frac{1}{2} \max\{\eta_1, \eta_2\}, \min\{\eta_1,\eta_2\}\right\}
\end{equation}
\end{corollary}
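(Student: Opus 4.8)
The plan is to read the corollary directly off the two-case characterization supplied by Theorem~\ref{thm:two-classifiers}, so the only substantive work is to lower-bound the optimal risk $\min_{\bm{\alpha}\in\Delta_2}\eta(\bm{\alpha})$ in each of its two regimes. Throughout I would assume without loss of generality that $\eta_1 \geq \eta_2$, so that $\max\{\eta_1,\eta_2\}=\eta_1$, $\min\{\eta_1,\eta_2\}=\eta_2$, and $|\eta_1-\eta_2|=\eta_1-\eta_2$; the opposite ordering is handled identically after swapping the two indices. Theorem~\ref{thm:two-classifiers} already partitions $\Delta_2$ into exactly these two regimes according to whether the condition \eqref{eq:thm1-cond} holds, and it pins down the optimal value in each, so I never need to re-derive the minimizer.

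The one auxiliary fact I would establish first is the inequality $\prob{\tuple{z}\in\calR_1}\leq\min\{\eta_1,\eta_2\}$. This is immediate from the definitions: evaluating Definition~\ref{def:adv-risk} at $\bm{\alpha}=\vc{e}_i$ gives $\eta_i=\means{\tuple{z}\sim\calD}{\max_{\bm{\delta}\in\calS}\identityf{f_i(\vc{x}+\bm{\delta})\neq y}}=\prob{\calS_i(\tuple{z})\neq\varnothing}$, and the event $\{\tuple{z}\in\calR_1\}$ in \eqref{eq:R_1} requires \emph{both} $\calS_1(\tuple{z})\neq\varnothing$ and $\calS_2(\tuple{z})\neq\varnothing$; hence $\{\tuple{z}\in\calR_1\}\subseteq\{\calS_i(\tuple{z})\neq\varnothing\}$ for each $i$, giving $\prob{\tuple{z}\in\calR_1}\leq\eta_i$ for $i\in\{1,2\}$. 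Equivalently, reading the p.m.f. off \eqref{eq:eta-m2} one has $\prob{\tuple{z}\in\calR_1}=p_1$, $\eta_1=p_1+p_2+p_3$, and $\eta_2=p_1+p_2+p_4$, which again yields $p_1\leq\min\{\eta_1,\eta_2\}$.

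With this in hand the two branches fall out. In the regime where \eqref{eq:thm1-cond} holds, Theorem~\ref{thm:two-classifiers} gives the global minimum exactly as $\eta(\bm{\alpha}^*)=\frac{1}{2}\left(\eta_1+\eta_2-\prob{\tuple{z}\in\calR_1}\right)$; substituting $\prob{\tuple{z}\in\calR_1}\leq\eta_2=\min\{\eta_1,\eta_2\}$ gives $\eta(\bm{\alpha}^*)\geq\frac{1}{2}(\eta_1+\eta_2-\eta_2)=\frac{1}{2}\eta_1=\frac{1}{2}\max\{\eta_1,\eta_2\}$. In the complementary regime Theorem~\ref{thm:two-classifiers} places the minimizer at a vertex $\vc{e}_i$, so $\eta(\bm{\alpha}^*)=\min\{\eta_1,\eta_2\}$. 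In either regime the optimal value is bounded below by one of the two quantities appearing inside the outer minimum of \eqref{eq:thm1-limit}, hence by their minimum, which is precisely the claimed bound.

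There is essentially no analytic obstacle in the corollary itself: once Theorem~\ref{thm:two-classifiers} is available it is a two-line consequence, and the only point requiring care is the \emph{direction} of the bound $\prob{\tuple{z}\in\calR_1}\leq\min\{\eta_1,\eta_2\}$, since one must use the \emph{smaller} of the two risks in order to bound $\eta_1+\eta_2-\prob{\tuple{z}\in\calR_1}$ from below by $\max\{\eta_1,\eta_2\}$. All the genuine difficulty lives upstream in Theorem~\ref{thm:two-classifiers}; the corollary merely repackages its two branches into a single lower bound, whose tightness is witnessed by the extremal configurations of Fig.~\ref{fig:enumerate} (the first branch is attained when $\prob{\tuple{z}\in\calR_1}=\eta_2$, the second when $\eta_1\geq 2\eta_2$ and \eqref{eq:thm1-cond} fails).
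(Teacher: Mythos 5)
Your proof is correct and follows essentially the same route as the paper: both arguments read the optimal value off the two-case characterization of Theorem~\ref{thm:two-classifiers} and then apply the bound $\prob{\tuple{z}\in\calR_1}\leq\min\{\eta_1,\eta_2\}$ to lower-bound the uniform-sampling branch by $\frac{1}{2}\max\{\eta_1,\eta_2\}$. The only cosmetic difference is that the paper first merges the two cases into the single identity $\eta(\bm{\alpha}^*)=\min\bigl\{\frac{1}{2}\left(\eta_1+\eta_2-\prob{\tuple{z}\in\calR_1}\right),\min\{\eta_1,\eta_2\}\bigr\}$ while you treat them separately, and you additionally spell out the proof of $\prob{\tuple{z}\in\calR_1}\leq\min\{\eta_1,\eta_2\}$, which the paper simply invokes.
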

In other words, it is impossible for a REC with $M=2$ classifiers to achieve a risk smaller than the RHS in \eqref{eq:thm1-limit}. In the next section, we derive a more general version of this bound for arbitrary $M$.

\textbf{Simplified Search}. Theorem~\ref{thm:two-classifiers} \emph{eliminates} the need for sweeping $\bm{\alpha}$ to find the optimal sampling probability $\bm{\alpha}^*$ when working with $M=2$ classifiers as done in Pinot et al. \yrcite{pinot2020randomization} and Dbouk \& Shanbhag \yrcite{dbouk2022adversarial}. We only need to evaluate $\eta\left(\tp{\left[\nicefrac{1}{2}\  \nicefrac{1}{2}\right]}\right)$ and check if it is smaller than $\min\{\eta_1, \eta_2\}$ to choose our optimal sampling probability. Thus, the defender's optimal strategy is to either sample between the classifiers uniformly at random or determinstically choose one of the classifiers. This observation might sound counter-intuitive at first, as one would expect a more \say{nuanced} approached to sampling based on the classifiers' relative performances. Interestingly, Vorobeychik \& Li \yrcite{vorobeychik2014optimal} derive a similar result for $M=2$ for a different problem of an adversary attempting to reverse engineer the defender's classifier via queries.

\textbf{Extension to Three Classifiers}. In fact, a simplified search strategy for the special case of $M = 3$ can also be derived in a similar fashion, as shown below:

\begin{theorem}\label{thm:three-class} Define $\calA \subset \Delta_3$ to be the set of the following vectors:
\begin{align}
\begin{split}
    \calA &= \left\{ \begin{bmatrix}
1\\
0 \\
0
\end{bmatrix}, \begin{bmatrix}
0\\
1 \\
0
\end{bmatrix},
\begin{bmatrix}
0\\
0 \\
1
\end{bmatrix},
\begin{bmatrix}
\nicefrac{1}{2}\\
\nicefrac{1}{2} \\
0
\end{bmatrix},
\begin{bmatrix}
0\\
\nicefrac{1}{2} \\
\nicefrac{1}{2}
\end{bmatrix},
\begin{bmatrix}
\nicefrac{1}{2} \\
0\\
\nicefrac{1}{2}
\end{bmatrix},\right. \\
& \left. \quad
\begin{bmatrix}
\nicefrac{1}{2} \\
\nicefrac{1}{4}\\
\nicefrac{1}{4}
\end{bmatrix}
,
\begin{bmatrix}
\nicefrac{1}{4} \\
\nicefrac{1}{2}\\
\nicefrac{1}{4}
\end{bmatrix},
\begin{bmatrix}
\nicefrac{1}{4} \\
\nicefrac{1}{4}\\
\nicefrac{1}{2}
\end{bmatrix},
\begin{bmatrix}
\nicefrac{1}{3} \\
\nicefrac{1}{3}\\
\nicefrac{1}{3}
\end{bmatrix} \right\}
\end{split}
\end{align}
Then for any three classifiers $f_1$, $f_2$, and $f_3$, perturbation set $\calS \subset \reals^d$, and data distribution $\calD$, we have:
\begin{equation}\label{eq:thm3-min}
    \min_{\bm{\alpha} \in \Delta_3} \eta(\bm{\alpha})  =  \min_{\bm{\alpha} \in \calA} \eta(\bm{\alpha})
\end{equation}
The set $\calA$ is optimal, in the sense that there exist no smaller set $\calA'$ such that \eqref{eq:thm3-min} holds.
\end{theorem}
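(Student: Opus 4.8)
The plan is to exploit the convex piecewise-linear structure guaranteed by Proposition~\ref{prop:eta} to reduce the minimization to a finite vertex enumeration, and then separately establish minimality of $\calA$ by constructing worst-case configurations that pin down each candidate. By Proposition~\ref{prop:eta}, $\eta$ is convex and piecewise linear on the triangle $\Delta_3$, being a nonnegative combination of terms $\max_{\vc{u}\in\calU_k}\tp{\vc{u}}\bm{\alpha}$. The first step is to locate \emph{all} hyperplanes along which such a function can be non-differentiable. Each atom $\tp{\vc{u}}\bm{\alpha}$ with $\vc{u}\in\{0,1\}^3$ equals, on $\Delta_3$, one of $\alpha_i$, one of the complementary sums $1-\alpha_i$ (using $\sum_i\alpha_i=1$ for the weight-$2$ vectors), or the constant $1$ (for $\vc{u}=\tp{[1\ 1\ 1]}$). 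Setting any two atoms equal yields only the medians $\{\alpha_i=\alpha_j\}$, the lines $\{\alpha_i=\nicefrac12\}$ (from a single versus the complementary pair, $\alpha_i=\alpha_j+\alpha_k$), or faces of the simplex ($\alpha_k=0$, $\alpha_i=1$). Hence every kink of $\eta$ lies on the arrangement of the three medians and the three half-lines $\{\alpha_i=\nicefrac12\}$ inside $\Delta_3$.

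For the sufficiency direction, since a convex piecewise-linear function is affine on each cell of its induced subdivision, and these cells are closed bounded polygons contained in $\Delta_3$, $\eta$ attains its minimum over each cell at a vertex of that cell; therefore the global minimum is attained at a vertex of the subdivision, whose vertex set is contained in the vertices of the full arrangement described above. Enumerating these vertices yields exactly the ten points of $\calA$: the three corners $\vc{e}_i$; the three edge midpoints (where a median and a half-line both meet an edge); the three interior intersections of a half-line $\alpha_i=\nicefrac12$ with the opposite median $\alpha_j=\alpha_k$, giving $\tp{[\nicefrac12\ \nicefrac14\ \nicefrac14]}$ and its permutations; and the centroid $\tp{[\nicefrac13\ \nicefrac13\ \nicefrac13]}$ where all three medians concur. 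This establishes $\min_{\Delta_3}\eta=\min_{\calA}\eta$, i.e.\ \eqref{eq:thm3-min}.

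For minimality, I would show that every $\vc{v}\in\calA$ is indispensable by constructing, for one representative of each symmetry orbit under coordinate permutations (a corner, an edge midpoint, a quarter-point, the centroid), classifiers and a distribution for which $\eta$ is \emph{uniquely} minimized at $\vc{v}$. The centroid uses $\eta=\max\{\alpha_1,\alpha_2,\alpha_3\}$ (configuration $\calU=\{\vc{e}_1,\vc{e}_2,\vc{e}_3\}$); a corner uses $\eta=\alpha_2+\alpha_3$ ($\calU=\{(0,1,1)\}$); an edge midpoint uses $\eta=\tfrac12\max\{\alpha_1,\alpha_2\}+\tfrac12\alpha_3$. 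The informative case is the quarter-point: with $\eta=\tfrac12\max\{\alpha_1,\alpha_2+\alpha_3\}+\tfrac12\max\{\alpha_2,\alpha_3\}$ (one configuration $\{\vc{e}_1,(0,1,1)\}$ and one $\{\vc{e}_2,\vc{e}_3\}$), the first term forces $\alpha_1=\nicefrac12$ and the residual second term then forces $\alpha_2=\alpha_3=\nicefrac14$, so $\tp{[\nicefrac12\ \nicefrac14\ \nicefrac14]}$ is the unique minimizer; dropping it from $\calA$ would make $\min_{\calA\setminus\{\vc{v}\}}\eta$ strictly exceed the true minimum.

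The arrangement enumeration is the routine part; the delicate part is minimality. The main obstacle is verifying that each exhibited $\eta$ is genuinely \emph{realizable} by actual classifiers $f_1,f_2,f_3$ on some $\calD$ and $\calS$ — i.e.\ that the abstract configurations $\calU_k$ with the prescribed p.m.f.\ $\vc{p}$ correspond to a real geometric arrangement of error regions, which I would construct using point-mass data with disjoint local vulnerabilities (e.g.\ taking $f_2\equiv f_3$ so they are fooled jointly, and placing $f_1$'s vulnerable direction orthogonally to avoid the all-ones configuration). A secondary subtlety is confirming strict uniqueness: for the quarter-points one must check that the tie-breaking term $\max\{\alpha_j,\alpha_k\}$ is not overwhelmed by the dominant term, which constrains the admissible weight ranges in the construction.
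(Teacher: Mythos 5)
Your proposal is correct, and its first half takes a genuinely different route from the paper's. For the sufficiency direction, the paper never reasons about kink loci: it writes out the explicit twelve-term formula for $\eta(\bm{\alpha})$ (a $\vc{p}$-weighted sum of terms such as $\max\{\alpha_1+\alpha_2,\alpha_3\}$, $\max\{\alpha_1,\alpha_2\}$, $\alpha_i$, $1$, $0$), splits $\Delta_3$ into the six sorting regions, observes that on each such region exactly one term of the form $\max\{\alpha_{i_2}+\alpha_{i_3},\alpha_{i_1}\}$ stays nonlinear, cuts along it, and then proves by exhibiting explicit convex-combination coefficients that each of the resulting twelve cells is the convex hull of three of the ten points; linearity of $\eta$ on each cell finishes the argument. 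Your argument — every atom $\tp{\vc{u}}\bm{\alpha}$ equals $\alpha_i$, $1-\alpha_i$, $0$ or $1$ on $\Delta_3$, so all kinks lie on the three medians and the three lines $\alpha_i=\nicefrac{1}{2}$, hence the minimum is attained at a vertex of this arrangement — lands on exactly the same twelve-cell subdivision but derives it top-down: it is shorter and explains conceptually why precisely these ten points arise, whereas the paper's version buys a self-contained elementary verification that never appeals to general facts about polyhedral subdivisions. For minimality, the two proofs essentially coincide: your four representative instances are, up to permutation symmetry and a harmless factor of $\nicefrac{1}{2}$, the paper's ten p.m.f.\ constructions (your quarter-point instance $\tfrac{1}{2}\max\{\alpha_1,\alpha_2+\alpha_3\}+\tfrac{1}{2}\max\{\alpha_2,\alpha_3\}$ is literally the paper's $\vc{p}_7$, your centroid instance is $\vc{p}_{10}$, etc.), and both arguments correctly rely on uniqueness of the global minimizer over all of $\Delta_3$, not merely over $\calA$.

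One concrete caveat on the realizability issue you flag (which the paper silently skips, exhibiting only p.m.f.s over abstract configurations): your suggested shortcut of taking $f_2\equiv f_3$ cannot realize the quarter-point instance. That instance requires the configuration $\{\vc{e}_2,\vc{e}_3\}$ — perturbations fooling $f_2$ alone and $f_3$ alone — to occur with probability $\nicefrac{1}{2}$, which is impossible for identical classifiers. The repair is routine and mirrors the tightness construction in the paper's proof of Theorem~\ref{thm:bounds}: since the classifiers are arbitrary, prescribe their error sets directly. Take $\calD$ uniform on two points $\tuple{z}_1,\tuple{z}_2$; near $\tuple{z}_1$ let $f_2$ and $f_3$ share one error region disjoint from $f_1$'s, and near $\tuple{z}_2$ give $f_2$ and $f_3$ disjoint error regions with $f_1$ unfooled. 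With that replacement your minimality argument is complete and matches the paper's.
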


\subsection{Tight Fundamental Bounds} \label{ssec:bounds}
A fundamental question remains to be answered: given an ensemble $\calF$ of $M$ classifiers with adversarial risks $\eta_1, ..., \eta_M$, what is the tightest bound we can provide for the adversarial risk $\eta(\bm{\alpha})$ of a randomized ensemble constructed from $\calF$? The following theorem answers this question:
\begin{theorem}
\label{thm:bounds} For a perturbation set $\calS$, data distribution $\calD$, and collection of $M$ classifiers $\calF$ with individual adversarial risks $\eta_i$ ($i\in [M]$) such that $0<\eta_1\leq ... \leq \eta_M\leq 1$, we have $\forall \bm{\alpha}\in \Delta_M$:
\begin{equation}\label{eq:thm2-bound}
       \min_{k\in [M]} \left\{\frac{\eta_k}{k}\right\}  \leq  \eta(\bm{\alpha}) \leq \eta_M  
\end{equation}
Both bounds are tight in the sense that if all that is known about the setup $\calF$, $\calD$, and $\calS$ is $\{\eta_i\}_{i=1}^M$, then there exist no tighter bounds. Furthermore, the upper bound is always met if $\bm{\alpha}=\vc{e}_M$, and the lower bound (if achievable) can be met if $\bm{\alpha}=\tp{\left[\frac{1}{m}\ ...\ \frac{1}{m}\ 0\ ...\ 0\right]}$, where $m=\argmin_{k\in [M]} \{\frac{\eta_k}{k}\}$.
\end{theorem}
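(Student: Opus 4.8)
The plan is to treat the two bounds separately, since the upper bound follows almost immediately from convexity while the lower bound carries all the difficulty. Throughout I would work from the analytic form \eqref{eq:eta-2} of Proposition~\ref{prop:eta}, writing $\eta(\bm{\alpha}) = \sum_{k=1}^K p_k \max_{\vc{u}\in\calU_k}\{\tp{\vc{u}}\bm{\alpha}\}$ and reading each index $k$ as a ``configuration'' occurring with probability $p_k$. For the upper bound I would note that $\eta$ is convex on the compact convex set $\Delta_M$, so its maximum is attained at an extreme point; the extreme points of $\Delta_M$ are exactly the vertices $\vc{e}_1,\dots,\vc{e}_M$, and $\eta(\vc{e}_i)=\eta_i$. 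Hence $\max_{\bm{\alpha}}\eta(\bm{\alpha}) = \max_i\eta_i = \eta_M$, attained at $\bm{\alpha}=\vc{e}_M$; this simultaneously gives $\eta(\bm{\alpha})\le\eta_M$ and its tightness for every setup.

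For the lower bound, fix any $\bm{\alpha}$ and relabel the classifiers so that their $\alpha$-weights are nonincreasing, writing $[1],\dots,[M]$ for this order. The first key step is a per-configuration estimate: in configuration $k$ the adversary can in particular fool the foolable classifier of largest weight, so $\max_{\vc{u}\in\calU_k}\{\tp{\vc{u}}\bm{\alpha}\}\ge \alpha_{[j_k]}$, where $j_k$ is the rank of that classifier. Taking expectations gives $\eta(\bm{\alpha})\ge\sum_k p_k\,\alpha_{[j_k]}$. The second step rewrites this by summation-by-parts: setting $\beta_l=\alpha_{[l]}-\alpha_{[l+1]}\ge 0$ (with $\alpha_{[M+1]}=0$) and $W_l=\prob{j_k\le l}$, the probability that some classifier among the top-$l$ weighted ones is foolable, I obtain $\sum_k p_k\alpha_{[j_k]}=\sum_{l} \beta_l W_l$ together with the identity $\sum_l l\,\beta_l = \sum_l\alpha_{[l]} = 1$.

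The heart of the argument is then two monotonicity facts that decouple the $\bm{\alpha}$-ordering from the risk-ordering. First, $W_l\ge \max_{s\le l}\eta_{[s]}$, since the event ``some top-$l$ classifier is foolable'' contains each single event ``$f_{[s]}$ is foolable'' of probability $\eta_{[s]}$. Second, the maximum of any $l$ of the risks is at least the $l$-th smallest risk $\eta_{(l)}$, hence $W_l\ge\eta_{(l)}$. Substituting and using $\sum_l l\beta_l=1$ yields $\eta(\bm{\alpha})\ge\sum_l\beta_l\eta_{(l)}=\sum_l\frac{\eta_{(l)}}{l}(l\beta_l)\ge\big(\min_k\frac{\eta_k}{k}\big)\sum_l l\beta_l=\min_k\frac{\eta_k}{k}$, which holds for every $\bm{\alpha}$. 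I expect this decoupling to be the main obstacle: the per-configuration cost is governed by the weight ordering while the bound is stated in the risk ordering, and the inequality ``max of any $l$ risks $\ge$ the $l$-th smallest'' is precisely what reconciles the two.

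Finally, for tightness of the lower bound I would exhibit a setup meeting it. Taking $m=\argmin_k\eta_k/k$ and $\bm{\alpha}^*=\tp{[\tfrac1m\cdots\tfrac1m\,0\cdots 0]}$, I would build a distribution from singleton configurations (each perturbation fooling a single classifier, so every inequality above becomes an equality) whose fooling events on $f_1,\dots,f_m$ are nested, i.e. $\{\text{fool }f_1\}\subseteq\cdots\subseteq\{\text{fool }f_m\}$ with $\prob{\text{fool }f_i}=\eta_i$, and which realizes the remaining risks $\eta_{m+1},\dots,\eta_M$ on disjoint data where $\bm{\alpha}^*$ places no mass. A direct computation then gives $\eta(\bm{\alpha}^*)=\frac1m\prob{\text{fool some of }f_1,\dots,f_m}=\frac{\eta_m}{m}=\min_k\frac{\eta_k}{k}$, and since the lower bound holds for all $\bm{\alpha}$, this $\bm{\alpha}^*$ is a minimizer. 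Verifying that such nested singleton configurations are geometrically realizable with exactly the prescribed marginals is the second delicate point, but it only requires assigning point masses to appropriately chosen data-point types.
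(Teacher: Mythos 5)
Your proof is correct, and while your upper bound and tightness arguments essentially coincide with the paper's (convexity plus extreme points of $\Delta_M$ for the former; nested fooling sets realized by distinct perturbations --- exactly the paper's $\calT_1 \subseteq \cdots \subseteq \calT_M$ construction with singleton configurations --- for the latter), your lower-bound argument takes a genuinely different and considerably more direct route. The paper's proof goes through two lemmas: Lemma~\ref{lemma:bound-eta} repeatedly applies the redistribution Lemma~\ref{lemma:re-distribute} to the configuration decomposition of Proposition~\ref{prop:eta} to establish the pointwise envelope $\eta(\bm{\alpha}) \geq h(\bm{\alpha}) = \sum_{i}(\eta_i - \eta_{i-1})\max_{j\in\{i,\dots,M\}}\{\alpha_j\}$, and Lemma~\ref{lemma:min-bound} then minimizes $h$ over $\Delta_M$ by splitting the simplex into $M!$ sorted cones, writing each cone as the convex hull of $M$ explicit points, and evaluating $h$ at those points. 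You bypass both pieces of machinery: sorting by the weights $\alpha_{[1]}\geq\cdots\geq\alpha_{[M]}$, lower-bounding each configuration's value by the largest weight among its foolable classifiers, Abel summation with $\beta_l = \alpha_{[l]}-\alpha_{[l+1]}$ and $\sum_l l\beta_l = 1$, and the two monotonicity facts $W_l \geq \max_{s\leq l}\eta_{[s]} \geq \eta_l$ (the latter --- the maximum of any $l$ of the risks is at least the $l$-th smallest --- is precisely what reconciles the weight ordering with the risk ordering, and it holds even with ties) give $\eta(\bm{\alpha}) \geq \sum_l (l\beta_l)\,(\eta_l/l) \geq \min_k\{\eta_k/k\}$ in a few lines. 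What the paper's heavier route buys is the envelope $h$ itself: $h$ is the exact adversarial risk of the extremal ensemble, so it simultaneously reveals the minimizer structure $\bm{\alpha}^*=\tp{\left[\frac{1}{m}\ \dots\ \frac{1}{m}\ 0\ \dots\ 0\right]}$ and is the object matched by the tightness construction, whereas your argument recovers the role of $\bm{\alpha}^*$ only through the explicit construction (or, if pushed, through the equality conditions of your chain of inequalities). Two small points to polish: configurations in which no classifier is foolable leave $j_k$ undefined, so set their contribution to zero; and the realizability step of your construction should be written out as the paper does --- fix $M$ distinct perturbations $\bm{\delta}_i\in\calS$ and let $f_i$ err only at the input $\vc{x}+\bm{\delta}_i$ when $\tuple{z}\in\calT_i$ --- which produces exactly the nested singleton configurations your computation requires.
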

\textbf{Upper bound}: The upper bound in \eqref{eq:thm2-bound} holds due to the convexity of $\eta$ (Proposition~\ref{prop:eta}) and the fact $\Delta_M = \calH\left(\{\vc{e}_i\}_{i=1}^M\right)$, where $\calH(\calX)$ is the convex hull of the set of points $\calX$. 

\textbf{Implications of upper bound}: Intuitively, we expect that a randomized ensemble cannot be worse than the worst performing member (in this case $f_M$). A direct implication of this is that if all the members have similar robustness $\eta_i \approx \eta_j\ \forall i,j$, then randomized ensembling is \emph{guaranteed} to either improve or achieve the same robustness. In contrast, deterministic ensemble methods that average logits \citep{mrboost,abernethy2021multiclass,kariyappa2019improvingGAL} do not even satisfy this upper bound (see Appendix~\ref{app:upper-bound}). In other words, there are \emph{no} worst-case performance guarantees with deterministic ensembling, even if all the classifiers are robust. Note that this does not imply that deterministic ensembling methods are inherently more vulnerable.

\textbf{Lower bound}: The main idea behind the proof of the lower bound in \eqref{eq:thm2-bound} is to show that $\forall \bm{\alpha}\in \Delta_M$:
\begin{align}
    \begin{split}
     \eta(\bm{\alpha}) &\geq \sum_{i=1}^M \left(\left( \eta_i - \eta_{i-1}\right) \cdot \max_{j\in \{i,...,M\}}\{\alpha_j\} \right) \\
     &= h(\bm{\alpha}) \geq \min_{\bm{\alpha}\in \Delta_M} h(\bm{\alpha}) = h(\bm{\alpha}^*) = \frac{\eta_m}{m}   
    \end{split}
\end{align}
where $\eta_0 \doteq0$, $m=\argmin_{k\in [M]} \{\nicefrac{\eta_k}{k}\}$, and $h$ can be interpreted as the adversarial risk of an REC constructed from an optimal set of classifiers $\calF'$ with the same individual risks as $\calF$. We make the following observations:

\textbf{Implications of lower bound}: The lower bound in \eqref{eq:thm2-bound} provides us with a \emph{fundamental limit} on the adversarial risk of RECs viz., it is \emph{impossible} for any REC constructed from $M$ classifiers with sorted risks $\{\eta_i\}_{i=1}^M$ to achieve an adversarial risk smaller than $\min_{k\in [M]} \{\nicefrac{\eta_k}{k}\} = \nicefrac{\eta_m}{m}$. This limit is not always achievable and generalizes the one in \eqref{eq:thm1-limit} which holds for $M=2$. Theorem~\ref{thm:bounds} states that \emph{if} the limit is achievable then the corresponding optimal sampling probability $\bm{\alpha}^*=\tp{\left[\frac{1}{m}\ ...\ \frac{1}{m}\ 0\ ...\ 0\right]}$. Note that this \underline{does not} imply that the optimal sampling probability is always equiprobable sampling $\forall \calF$!

Additionally, the lower bound in \eqref{eq:thm2-bound} provides guidelines for robustifying individual classifiers in order for randomized ensembling to enhance the overall adversarial risk. Given classifiers $f_1, ..., f_m$ obtained via any sequential ensemble training algorithm, a good rule of thumb for the classifier obtained via the training iteration $m+1$ is to have:
\begin{equation} \label{eq:thumb-rule}
    \eta_m \leq \eta_{m+1} \leq \left(1+\frac{1}{m}\right)\eta_m
\end{equation}
Note that only for $m=1$ does  \eqref{eq:thumb-rule} become a \emph{necessary} condition: If $\eta_2 > 2\eta_1$, then $f_1$ will always achieve better risk than an REC of $f_1$ and $f_2$.
If a training method generates classifiers $f_1, ..., f_M$ with risks: $\eta_1 <1$ and $\eta_i = 1$ $\forall i \in \{2,...,M\}$, i.e., only the first classifier is somewhat robust and the remaining $M-1$ classifiers are compromised (such as BAT), the lower bound in \eqref{eq:thm2-bound} reduces to:
\begin{equation}\label{eq:bat-bound}
    \eta(\bm{\alpha}) \geq \min\left\{\eta_1, \frac{1}{M}\right\}
\end{equation}
implying the \emph{necessary} condition $M \geq \lceil\eta_1^{-1}\rceil$ for RECs constructed from $\calF$ to achieve better risk than $f_1$. Note: the fact that this condition is violated by Pinot et al. \yrcite{pinot2020randomization} hints to the existence of strong attacks that can break it \citep{mrboost,dbouk2022adversarial}.

\begin{algorithm}[t]
   \caption{The Optimal Sampling Probability (OSP) Algorithm for Randomized Ensembles}
   \label{alg:osp}
\begin{algorithmic}[1]
   \STATE {\bfseries Input:} classifiers $\calF=\{f_i\}_{i=1}^M$, perturbation set $\calS$, attack algorithm $\mathsf{attack}$, training set $\{\tuple{z}_j\}_{j=1}^n$, initial step-size $a>0$, and number of iterations $T\geq 1$.
   \STATE {\bfseries Output:} optimal sampling probability $\bm{\alpha}^*$.
   \STATE initialize $\bm{\alpha}^{(1)} \in \Delta_M$, $\eta_{\text{best}} \leftarrow 1$ 
   \CCOMMENT{we find that $\bm{\alpha}^{(1)}=\tp{\left[\frac{1}{M}\ ...\ \frac{1}{M}\right]}$ performs well }
    \FOR{$t \in \{1,...,T\}$}
        \STATE $\vc{g}\leftarrow \vc{0}$, $a_t \leftarrow \frac{a}{t}$  
        \FOR{$j \in \{1,...,n\}$}
            \STATE $\bm{\delta}_j \leftarrow \mathsf{attack}\left(\calF,\calS,\bm{\alpha}^{(t)},\tuple{z}_j\right)$   
            \STATE $\forall i\in[M]$: $g_i \leftarrow g_i + \identityf{f_i(\vc{x}_j+\bm{\delta}_j)\neq y_j}$
        \ENDFOR
        \STATE $\vc{g} \leftarrow \frac{1}{n}\vc{g}$ \COMMENT{ sub-gradient of $\eta(\bm{\alpha}^{(t)})$}
        \STATE $\eta^{(t)}\leftarrow \tp{\vc{g}}\bm{\alpha}^{(t)}$ \COMMENT{$\eta(\bm{\alpha}^{(t)})$}
        \SHORTIF{$\eta^{(t)} \leq \eta_{\text{best}}$}{$t_{\text{best}} \leftarrow t$, $\eta_{\text{best}} \leftarrow \eta^{(t)}$}
        \CCOMMENT{projection-update step}
        \STATE $\bm{\alpha}^{(t+1)} \leftarrow \proj{\Delta_M}{\bm{\alpha}^{(t)} - a_{t} \vc{g}}$ 
   \ENDFOR
   \STATE {\textbf{return}} $\bm{\alpha}^{(t_{\text{best}})}$
\end{algorithmic}
\end{algorithm}

\subsection{Optimal Sampling} \label{ssec:sampling}

In this section, we leverage Proposition~\ref{prop:eta} to extend the results in Section~\ref{ssec:two} to provide a theoretically optimal and efficient solution for computing the optimal sampling probability (OSP) algorithm (Algorithm~\ref{alg:osp}) for $M>3$.

In practice, we do not know the true data distribution $\calD$. Instead, we are provided a training set $\tuple{z}_1, ..., \tuple{z}_n$, assumed to be sampled i.i.d. from $\calD$. Given the training set, and a fixed collection of classifiers $\calF$, we wish to find the optimal sampling probability:
\begin{align}
    \begin{split}
    \label{eq:min-alpha-opt}
    \argmin_{\bm{\alpha} \in \Delta_M} \hat{\eta}(\bm{\alpha})= \argmin_{\bm{\alpha} \in \Delta_M} \frac{1}{n} \sum_{j=1}^n \left(\max_{\bm{\delta}\in\calS}{ r(\tuple{z}_j,\bm{\delta},\bm{\alpha})}\right)
    \end{split}
\end{align}
where $r$ is the per-sample risk from \eqref{eq:per-sample-risk}. Note that the empirical adversarial risk $\hat{\eta}$ is also piece-wise linear and convex in $\bm{\alpha}$, and hence all our theoretical results apply naturally. In order to numerically solve \eqref{eq:min-alpha-opt}, we first require access to an adversarial attack oracle ($\mathsf{attack}$) for RECs that solves the internal maximization $\forall \calS, \calF, \tuple{z},$ and $\bm{\alpha}$.

Using the oracle $\mathsf{attack}$, Algorithm~\ref{alg:osp} updates its solution iteratively given the adversarial error-rate of each classifier over the training set. The projection operator $\Pi_{\Delta_M}$ in Line (15) of Algorithm~\ref{alg:osp} ensures that the solution is a valid p.m.f.. Wang \& Carreira-Perpinan \yrcite{wang2013projection} provide a simple and exact method for computing $\Pi_{\Delta_M}$. Finally, we state the following result on the optimality of OSP: 
\begin{theorem}\label{thm:osp} The OSP algorithm output $\bm{\alpha}_{T}$ satisfies:
\begin{align}
    \begin{split}
     0 &\leq \hat{\eta}(\bm{\alpha}_{T}) - \hat{\eta}(\bm{\alpha}^*) \\
     &\leq \frac{\pnorm{\bm{\alpha}^{(1)}-\bm{\alpha}^*}{2}^2 + M a^2\sum_{t=1}^T t^{-2} }{2 a\sum_{t=1}^T t^{-1}} \xrightarrow[T \to \infty]{} 0       
    \end{split}
\end{align}
for all initial conditions $\bm{\alpha}^{(1)} \in \Delta_M$, $a>0$, where $\bm{\alpha}^*$ is a global minimum.
\end{theorem}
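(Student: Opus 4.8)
The plan is to recognize Algorithm~\ref{alg:osp} as the projected subgradient method applied to the empirical adversarial risk $\hat{\eta}$ over the simplex $\Delta_M$, and then run the classical convergence analysis for subgradient descent with non-summable, square-summable step sizes. The first task is to justify that the vector $\vc{g}$ assembled in Lines~(6)--(10) is a genuine subgradient of $\hat{\eta}$ at the current iterate $\bm{\alpha}^{(t)}$. By the empirical analogue of Proposition~\ref{prop:eta}, each per-sample term $\max_{\bm{\delta}\in\calS} r(\tuple{z}_j,\bm{\delta},\bm{\alpha})$ equals $\max_{\vc{u}\in\calU_j}\{\tp{\vc{u}}\bm{\alpha}\}$, a pointwise maximum of linear functions of $\bm{\alpha}$, whose subdifferential is the convex hull of the active binary vectors; hence any maximizing $\vc{u}$ is a subgradient. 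Because the oracle $\mathsf{attack}$ solves the inner maximization exactly, the perturbation $\bm{\delta}_j$ it returns induces precisely such a maximizer $\vc{u}_j$ with $[\vc{u}_j]_i=\identityf{f_i(\vc{x}_j+\bm{\delta}_j)\neq y_j}$. Averaging over $j$ and using linearity of the subdifferential of a finite sum, $\vc{g}=\frac{1}{n}\sum_j \vc{u}_j$ is a subgradient of $\hat{\eta}$ at $\bm{\alpha}^{(t)}$.

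With the subgradient established, I would carry out the standard one-step contraction argument. Using non-expansiveness of the Euclidean projection onto $\Delta_M$ together with $\bm{\alpha}^*=\proj{\Delta_M}{\bm{\alpha}^*}$ and expanding the square,
\[
\pnorm{\bm{\alpha}^{(t+1)}-\bm{\alpha}^*}{2}^2 \leq \pnorm{\bm{\alpha}^{(t)}-\bm{\alpha}^*}{2}^2 - 2a_t\,\tp{\vc{g}}(\bm{\alpha}^{(t)}-\bm{\alpha}^*) + a_t^2\pnorm{\vc{g}}{2}^2 .
\]
The defining subgradient inequality $\tp{\vc{g}}(\bm{\alpha}^{(t)}-\bm{\alpha}^*)\geq \hat{\eta}(\bm{\alpha}^{(t)})-\hat{\eta}(\bm{\alpha}^*)$ converts the cross term into an optimality gap. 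Telescoping over $t\in[T]$ and dropping the nonnegative final distance term yields
\[
2\sum_{t=1}^T a_t\left(\hat{\eta}(\bm{\alpha}^{(t)})-\hat{\eta}(\bm{\alpha}^*)\right) \leq \pnorm{\bm{\alpha}^{(1)}-\bm{\alpha}^*}{2}^2 + \sum_{t=1}^T a_t^2 \pnorm{\vc{g}}{2}^2 .
\]

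To finish, I would exploit the best-iterate selection: since the algorithm returns the iterate of minimal recorded risk, $\hat{\eta}(\bm{\alpha}_{T})-\hat{\eta}(\bm{\alpha}^*)\leq \hat{\eta}(\bm{\alpha}^{(t)})-\hat{\eta}(\bm{\alpha}^*)$ for every $t$, so the left-hand side is at least $2\left(\sum_t a_t\right)(\hat{\eta}(\bm{\alpha}_{T})-\hat{\eta}(\bm{\alpha}^*))$. Dividing through gives the claimed fraction after two elementary substitutions: each component of $\vc{g}$ is an average of indicators and hence lies in $[0,1]$, so $\pnorm{\vc{g}}{2}^2\leq M$; and $a_t=a/t$ turns $\sum a_t$ and $\sum a_t^2$ into $a\sum t^{-1}$ and $a^2\sum t^{-2}$, respectively. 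The nonnegativity $\hat{\eta}(\bm{\alpha}_{T})-\hat{\eta}(\bm{\alpha}^*)\geq 0$ is immediate from $\bm{\alpha}^*$ being a global minimum (guaranteed by convexity of $\hat{\eta}$ and compactness of $\Delta_M$). The limit then follows because the harmonic series $\sum t^{-1}$ diverges while $\sum t^{-2}$ converges, so the bounded numerator over the diverging denominator tends to $0$.

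The main obstacle I anticipate is the first step, not the arithmetic: rigorously arguing that the empirically computed $\vc{g}$ is a valid subgradient hinges entirely on $\mathsf{attack}$ returning an \emph{exact} maximizer of the inner problem, so that the induced $\vc{u}_j$ lies in the active set $\argmax_{\vc{u}\in\calU_j}\tp{\vc{u}}\bm{\alpha}^{(t)}$. If the oracle were only approximate, $\vc{g}$ would be an inexact subgradient and the telescoped bound would acquire an additive error term that need not vanish; the clean statement therefore relies on exactness of $\mathsf{attack}$, the standing assumption introduced just before the theorem.
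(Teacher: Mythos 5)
Your proposal is correct and follows essentially the same route as the paper: the paper's proof also identifies OSP as the projected sub-gradient method on the convex, sub-differentiable $\hat{\eta}$ over $\Delta_M$, notes that $\vc{g}$ is a sub-gradient with $\pnorm{\vc{g}}{2}\leq\sqrt{M}$, and then invokes the classical convergence result of \cite{shor2012minimization} (stated as a lemma) rather than re-deriving it. The only difference is one of packaging: you inline the telescoping/best-iterate analysis and spell out why the attack oracle's output yields an exact sub-gradient via the active-maximizer and sum-rule argument, both of which the paper asserts or cites rather than proves.
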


Theorem~\ref{thm:osp} follows from a direct application of the classic convergence result of the projected sub-gradient method for constrained convex minimization \cite{shor2012minimization}. The optimality of OSP relies on the existence of an attack oracle which may not always exist. However, attack algorithms such as ARC \cite{dbouk2022adversarial} were found to yield good results in the common setting of differentiable classifiers and $\ell_p$-restricted adversaries.

\begin{algorithm}[t]
   \caption{The Boosting Algorithm for Robust Randomized Ensembles (BARRE)}
   \label{alg:BARRE}
\begin{algorithmic}[1]
\STATE {\bfseries Input:} Number of classifiers $M$, perturbation set $\calS$, training set $\{\tuple{z}_j\}_{j=1}^n$, learning rate $\rho$, mini-batch size $B$, number of epochs $E$, OSP frequency $E_o$, OSP number of iterations $T_o$.
   \STATE {\bfseries Output:} Robust randomized ensemble classifier $(\calF, \bm{\alpha})$
   \STATE initialize $\bm{\theta}_0 \in \Theta$, $\calF \leftarrow \varnothing$ 
    \FOR{$m \in \{1,...,M\}$}
        \STATE $\bm{\theta}_m \leftarrow \bm{\theta}_{m-1}$, $\calF \leftarrow \calF \cup \{f(\cdot|\bm{\theta}_m)\}$, $\bm{\alpha} \leftarrow \tp{\left[\frac{1}{m}\ ...\ \frac{1}{m}\right]}$  
        \FOR{$e \in \{1,...,E\}$}
            \FOR{mini-batch $\{\tuple{z}_b\}_b^B$}
                \STATE compute $\forall b\in [B]$: $\bm{\delta}_b \leftarrow \mathsf{attack}\left(\calF,\calS,\bm{\alpha},\tuple{z}_b\right)$   
                \STATE update $\bm{\theta}_{m}$ via SGD: 
                \begin{equation*}
                    \bm{\theta}_m \leftarrow \bm{\theta}_m - \frac{\rho}{B}\sum_{b=1}^B \nabla_{\bm{\theta}_m} l\left(\tilde{f}(\vc{x}_b + \bm{\delta}_b|\bm{\theta}_m),y_b\right) 
                \end{equation*}
            \ENDFOR
            \CCOMMENT{update $\bm{\alpha}$ every $E_o$ epochs}
            \IF{$e \mod E_o = 0$}
            \STATE {$\bm{\alpha} \leftarrow \mathsf{OSP} ( \calF, \calS, \{z_j\}_{j=1}^n, T_o)$} 
            \ENDIF
        \ENDFOR
   \ENDFOR
   \STATE {\textbf{return}} $\calF, \bm{\alpha}$
\end{algorithmic}
\end{algorithm}

\section{A Robust Boosting Algorithm for Randomized Ensembles}

In this section, we leverage our theoretical results in Section~\ref{sec:theory} as we explore designing robust RECs in practice.  Specifically, we propose BARRE: a unified \textbf{B}oosting \textbf{A}lgorithm for \textbf{R}obust \textbf{R}andomized \textbf{E}nsembles described in Algorithm~\ref{alg:BARRE}. Given a dataset $\{\tuple{z}_j\}_{j=1}^n$ and an REC attack algorithm $\mathsf{attack}$, BARRE iteratively trains a set of parametric classifiers $f(\cdot|\bm{\theta}_1)$, ..., $f(\cdot|\bm{\theta}_M)$ such that the adversarial risk of the corresponding REC is minimized. The first iteration of BARRE reduces to standard AT \citep{madry2018towards}. Doing so typically guarantees that the first classifier achieves the lowest adversarial risk and $\eta(\bm{\alpha}^*) \leq \eta_1$, i.e., Theorem~\ref{thm:osp} ensures the REC is \emph{no worse} than single model AT.

In each iteration $m\geq 2$, BARRE initializes the $m$-th classifier $f(\cdot|\bm{\theta}_m)$ with $\bm{\theta}_m = \bm{\theta}_{m-1}$. The training procedure alternates between updating the parameters $\bm{\theta}_m$ via SGD using adversarial samples of the current REC and solving for the optimal sampling probability $\bm{\alpha}^* \in \Delta_m$ via OSP. Including $f(\cdot|\bm{\theta}_m)$ in the attack (Line (8)) is crucial, as it ensures that the robustness of $f(\cdot|\bm{\theta}_m)$ is not completely compromised, thereby improving the bounds in Theorem~\ref{thm:bounds}. Note that for iterations $m\leq 3$, we replace the OSP procedure in Line (12) with a simplified search over a finite set of candidate solutions (see Section~\ref{ssec:two}). 

Furthermore, the rationale behind the sequence of steps in BARRE can be better understood using Theorem~\ref{thm:two-classifiers} (for the case of $M=2$). Theorem~\ref{thm:two-classifiers} states that the optimal REC adversarial risk would be $\eta(\bm{\alpha}^*)=\frac{1}{2}\left(\eta_1+\eta_2-\prob{\tuple{z}\in \calR_1}\right)$ (assuming \eqref{eq:thm1-cond} is met), therefore it is equally important to minimize both $\eta$'s and maximize $\prob{\tuple{z}\in \calR_1}$. BARRE does so by initially adversarially training a robust classifier $f_1$ (minimizing $\eta_1$), then training $f_2$ (initialized from $f_1$ to minimizes $\eta_2$) on the adversarial examples of the REC of $f_1$ and $f_2$. Doing so increases $\prob{\tuple{z}\in \calR_1}$ while maintaining $\eta_2$ as small as possible.

\begin{table}[hp]

\centering
\caption{Comparing BARRE with other methods in constructing robust RECs across network architectures and datasets. All methods incur the complexity of a single classifier. Robust accuracy is measured against an $\ell_\infty$ norm-bounded adversary using ARC with $\epsilon=\nicefrac{8}{255}$.} \label{tab:barre-vs-others}

\vskip 0.15in
\begin{sc}
\resizebox{1\columnwidth}{!}{%
\begin{tabular}{l l c  c r  r r }

\toprule
\multirow{2}{*}{Network}   & \multirow{2}{*}{Method}  & \multirow{2}{*}{Size $M$}  & \multicolumn{2}{c}{CIFAR-10} & \multicolumn{2}{c}{CIFAR-100} \\
  &   & & $A_{\normalfont\text{nat}}$ [\%]& $A_{\normalfont\text{rob}}$ [\%] & $A_{\normalfont\text{nat}}$ [\%] & $A_{\normalfont\text{rob}}$ [\%]\\
\midrule
\multirow{4}[2]{*}{\shortstack{ResNet-20 \\ (81 MFLOPs)}}   &  AT & $M=1$ & $73.18$ & $41.99$ & $38.34$ & $17.69$\\
   \cmidrule(lr{1em}){2-7}
   & IAT & $M=5$ & $73.90$ & $45.77$ & $38.57$ & $19.65$\\
   & MRBoost-R & $M=5$ & $75.89$ & $46.66$ & $41.69$ & $21.04$\\
   & BARRE & $M=5$ & \bfu{$76.28$} &	\bfu{$47.35$} & \bfu{$41.86$} & \bfu{$21.11$}\\
\midrule
\multirow{5}[4]{*}{\shortstack{MobileNetV1 \\ (312 MFLOPs)}}  &  AT & $M=1$ & $79.01$ & $46.22$ & $51.87$ & $23.45$\\
   \cmidrule(lr{1em}){2-7}
   &  IAT & $M=5$ & $78.89$ & $49.57$ & $51.41$	& $25.74$\\
   &  MRBoost-R$^\dagger$ & $M=5$ & $76.70$	 & $48.05$ & $50.14$ & $24.76$\\
   & MRBoost-R & $M=5$ & $78.65$ & $48.91$ & \bfu{$52.96$} & $25.95$\\
   &  BARRE & $M=5$ & \bfu{$79.55$} & \bfu{$49.91$} & $52.95$ & \bfu{$27.53$}\\
\midrule
\multirow{5}[4]{*}{\shortstack{ResNet-18\\ (1.1 GFLOPs)}}  &  AT & $M=1$ & $80.96$ & $48.72$ & $53.85$ & $24.15$\\
  \cmidrule(lr{1em}){2-7}
   & IAT & $M=4$ & $80.99$ & $51.43$ & $54.30$ & $26.73$\\
   & MRBoost-R$^\dagger$ & $M=4$ &$83.13$ & $51.82$ & $51.06$ & $24.04$\\
   & MRBoost-R & $M=4$ &$83.13$ & $51.82$ & $52.04$ & $25.65$\\
   & BARRE & $M=4$ & \bfu{$83.54$} & \bfu{$52.13$} & \bfu{$54.63$} & \bfu{$26.93$}\\
\bottomrule
\multicolumn{6}{l}{$\dagger$ \normalfont{result obtained assuming equiprobable sampling instead of using OSP}}

\end{tabular}
}
\end{sc}
\end{table}

\begin{table*}[t]
\centering
\caption{Comparison between BARRE and MRBoost across different network architectures and ensemble sizes on CIFAR-10. Robust accuracy is measured against an $\ell_\infty$ norm-bounded adversary using ARC with $\epsilon=\nicefrac{8}{255}$.} \label{tab:barre-vs-mrboost-cifar10}
\vskip 0.15in
\begin{sc}
\resizebox{1.8\columnwidth}{!}{%
\begin{tabular}{l  l| r  r r | r  r r | r r r | r r r}

\toprule
\multirow{2}{*}{Network}  &  \multirow{2}{*}{Method}& \multicolumn{3}{c|}{$M=1$} & \multicolumn{3}{c|}{$M=2$}  & \multicolumn{3}{c|}{$M=3$} & \multicolumn{3}{c}{$M=4$}
\\
  &   & $A_{\normalfont\text{nat}}$ & $A_{\normalfont\text{rob}}$   & FLOPs & $A_{\normalfont\text{nat}}$ & $A_{\normalfont\text{rob}}$   & FLOPs & $A_{\normalfont\text{nat}}$  & $A_{\normalfont\text{rob}}$ &  FLOPs & $A_{\normalfont\text{nat}}$ & $A_{\normalfont\text{rob}}$ &  FLOPs \\
  \midrule
  \multirow{2}{*}{ResNet-20} & MRBoost & \multirow{2}{*}{$73.18$}& \multirow{2}{*}{$41.99$}& \multirow{2}{*}{81 M}&$75.22$ & $44.68$ & 162 M & $76.13$ & $46.09$ & 243 M & $76.96$ & $46.34$ & 324 M  \\ 
    & BARRE &  &  &  &$74.63$ & $44.38$ &  81 M & $75.55$ & $45.41$ &  81 M& $75.95$ & $46.44$ &  81 M \\ 
  \cmidrule(lr{1em}){2-14}
  \multirow{2}{*}{MobileNetV1} & MRBoost & \multirow{2}{*}{$79.01$}& \multirow{2}{*}{$46.22$}& \multirow{2}{*}{312 M}& $80.19$ & $48.58$ & 624 M & $79.79$ & $49.39$ & 936 M & $80.14$ & $49.36$ & 1.2 B  \\ 
    & BARRE &  &  &  & $79.58$ & $48.32$&  312 M & $79.53$ & $48.75$ &  312 M& $79.54$ & $49.38$ &  312 M\\
  \cmidrule(lr{1em}){2-14}
  \multirow{2}{*}{ResNet-18} & MRBoost & \multirow{2}{*}{$80.96$}& \multirow{2}{*}{$48.7$}& \multirow{2}{*}{1.1 B}& $83.90$ & $50.72$ & 2.2 B & $85.07$ & $52.15$ & 3.3 B & $85.07$ & $52.15$ & 4.4 B   \\ 
    & BARRE &  &  &  & $82.66$ & $50.51$ &  1.1 B & $83.40$ & $51.57$ & 1.1 B& $83.54$ & $52.13$ &  1.1 B \\
\bottomrule

\end{tabular}
}
\end{sc}
\end{table*}
\subsection{Experimental Results} \label{ssec:experiments}

\textbf{Setup}. Per standard practice, we focus on defending against $\ell_\infty$ norm-bounded adversaries. We report results for three network architectures with different complexities: ResNet-20 \citep{he2016deep}, MobileNetV1 \citep{howard2017mobilenets}, and ResNet-18, across CIFAR-10 and CIFAR-100 datasets \citep{cifar10}. Computational complexity is measured via the number of floating-point operations (FLOPs) required per inference. The discrete nature of RECs allows us to compute the adversarial risk (or accuracy) \emph{exactly}. To ensure a fair comparison across different baselines, we use the same hyper-parameter settings detailed in Appendix~\ref{app:setup}. 

\textbf{Attack Algorithm}. For all our robust evaluations, we will adopt the state-of-the-art ARC algorithm \citep{dbouk2022adversarial} which can be used for both RECs and single models. Specifically, we shall use a slightly modified version that achieves better results in the equiprobable sampling setting (see Appendix~\ref{app:improved-arc}). For training with BARRE,  
we adopt adaptive PGD \citep{mrboost} for better generalization performance (see Appendix~\ref{app:arc-vs-apgd}).

\textbf{Results}. We first explore the efficacy of BARRE in constructing RECs that are robust against strong adversarial examples. Since there is an apparent lack of dedicated randomized ensemble defense methods in the literature, amplified further by the recent vulnerability of BAT \citep{pinot2020randomization}, we  establish baselines by constructing RECs from classifiers trained using MRBoost (denoted as MRBoost-R) and independent adversarial training (IAT). While MRBoost is dedicated to designing robust deterministic ensemble classifiers, it seems intuitive to investigate how well does the same ensemble performs when we randomly sample it. IAT, on the other hand, simply adversarially trains a set of classifiers using different random initialization. Thus, IAT does not enforce any explicit diversity within the ensemble, but maintains the highest individual model robustness. We use OSP (Algorithm~\ref{alg:osp}) to find the optimal sampling probability for each REC. All RECs share the same first classifier $f_1$, which is adversarially trained. Doing so ensures a fair comparison, and guarantees that none of the methods are worse than AT.

Table~\ref{tab:barre-vs-others} summarizes the performance of each method across network architectures and datasets. We note that all methods provide significant improvement in robustness compared to single model AT, indicating that RECs indeed provide increased robustness \emph{in practice} while maintaining compute complexity. Table~\ref{tab:barre-vs-others} provides evidence that the proposed BARRE algorithm outperforms both IAT and MRBoost. Interestingly, we find that MRBoost ensembles can be quite ill-suited for RECs. This can be seen for MobileNetV1, where the MRBoost REC achieves good performance only after completely disregarding the last classifier, i.e., the optimal sampling probability obtained was $\bm{\alpha}^{*}=\tp{[0.25\ 0.25\ 0.25\ 0.25\ 0]}$. This is due to the fact that MRBoost is meant for deterministic ensembles, and thus does not guarantee good performance in the randomized setting. In contrast, both IAT and BARRE-trained RECs utilize all members of the ensemble with non-zero probabilities.

We now compare the robustness and complexity of BARRE-trained RECs and MRBoost-trained deterministic ensembles. While both methods have the same\footnote{ignoring the negligible memory overhead of storing $\bm{\alpha}$} memory footprint, the computational complexity of RECs is $1/M$ of that of deterministic ensembles for ensemble size $M$. 
Table~\ref{tab:barre-vs-mrboost-cifar10} demonstrates that BARRE can successfully construct RECs that achieve competitive robustness (within $\sim 0.5\%$) compared to MRBoost-trained deterministic ensembles, across three different network architectures on CIFAR-10. The benefit of randomization can be seen for $M\geq 2$, as we obtain \emph{massive} $2-4\times$ savings in compute requirements.  These observations are further corroborated by CIFAR-100 experiments in Appendix~\ref{app:additional-results}.



\section{Discussion}
We have demonstrated both theoretically and empirically that robust randomized ensemble classifiers (RECs) are realizable. Theoretically, we derive the robustness limits of RECs, necessary and sufficient conditions for them to be useful, and efficient methods for finding the optimal sampling probability. Guided by theory, we propose BARRE, a new boosting algorithm for constructing robust RECs and demonstrate its effectiveness at defending against strong $\ell_\infty$ norm-bounded adversaries. 

Despite the empirical effectiveness of BARRE, there is a decent gap between the theoretical limits of RECs and robustness achieved in practice, leading us to believe there is much room for improvement in terms of achievable robustness.

\section*{Acknowledgements}

This work was supported by the Center for the Co-Design of Cognitive Systems (CoCoSys) funded by the Semiconductor Research Corporation (SRC) and the Defense Advanced Research Projects Agency (DARPA), and SRC’s Artificial Intelligence Hardware (AIHW) program.


\bibliography{ref}
\bibliographystyle{icml2023}

\newpage
\appendix
\onecolumn

\section{Omitted Proofs and Derivations}\label{app:proofs}

\subsection{Proof of Proposition~\ref{prop:eta}}
We provide the proof of Proposition~\ref{prop:eta} (restated below):
\begin{proposition*}[Restated]
For any $\calF=\{f_i\}_{i=1}^M$, perturbation set $\calS \subset \reals^d$, and data distribution $\calD$, the adversarial risk $\eta$ is a piece-wise linear convex function $\forall \bm{\alpha} \in \Delta_M$. Specifically, $\exists K\in \naturals$ configurations $\calU_k\subseteq\{0,1\}^M$ $\forall k\in[K]$ and p.m.f. $\vc{p}\in \Delta_K$ such that:
\begin{equation}
    \eta(\bm{\alpha}) = \sum_{k=1}^K\left(p_k \cdot \max_{\vc{u}\in \calU_k}\left\{\tp{\vc{u}}\bm{\alpha}\right\}\right)
\end{equation}
\end{proposition*}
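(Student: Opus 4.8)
The plan is to start from the REC adversarial risk in \eqref{eq:eta-1} and build the analytic form in two stages: first resolve the inner maximization over $\bm{\delta}\in\calS$ pointwise in the data-point $\tuple{z}$, and then take the expectation over $\tuple{z}\sim\calD$, with the finiteness of the number of distinct configurations supplied by a pigeon-hole argument.

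First I would fix $\tuple{z}$ and study the map $\bm{\delta}\mapsto\vc{u}(\bm{\delta})\in\{0,1\}^M$, where $u_i(\bm{\delta})=\identityf{f_i(\vc{x}+\bm{\delta})\neq y}$. Since the codomain $\{0,1\}^M$ is finite, this map is many-to-one, so $\calS$ partitions into at most $2^M$ nonempty level sets $\calG_1,\dots,\calG_n$ on each of which $\vc{u}$ equals a constant vector $\vc{u}_j$. Crucially, $r(\tuple{z},\bm{\delta},\bm{\alpha})=\tp{\vc{u}(\bm{\delta})}\bm{\alpha}$ is constant in $\bm{\delta}$ on each $\calG_j$, so \emph{no compactness argument is needed}: the inner maximum over $\calS$ collapses to the finite maximum $\max_{\bm{\delta}\in\calS} r(\tuple{z},\bm{\delta},\bm{\alpha})=\max_{\vc{u}\in\calU(\tuple{z})}\{\tp{\vc{u}}\bm{\alpha}\}$, which is exactly \eqref{eq:singe-risk-equiv}, with $\calU(\tuple{z})=\{\vc{u}_j\}_{j=1}^n$ the image of the map. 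As a pointwise maximum of finitely many linear functions of $\bm{\alpha}$, this inner term is already piece-wise linear and convex on $\Delta_M$.

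The key step is the pigeon-hole observation: $\calU(\tuple{z})$ is a subset of the fixed finite set $\{0,1\}^M$, so as $\tuple{z}$ ranges over the support of $\calD$ the assignment $\tuple{z}\mapsto\calU(\tuple{z})$ takes only finitely many distinct values, say $\calU_1,\dots,\calU_K$ with $K\le 2^{2^M}$. These induce a partition of the data space into the events $\{\tuple{z}:\calU(\tuple{z})=\calU_k\}$, and setting $p_k=\prob{\calU(\tuple{z})=\calU_k}$ yields a p.m.f. $\vc{p}\in\Delta_K$. I would then apply the law of total expectation to $\eta(\bm{\alpha})=\means{\tuple{z}}{\max_{\vc{u}\in\calU(\tuple{z})}\{\tp{\vc{u}}\bm{\alpha}\}}$, splitting across these events; on each event the integrand equals $\max_{\vc{u}\in\calU_k}\{\tp{\vc{u}}\bm{\alpha}\}$, which no longer depends on $\tuple{z}$ and factors out of the conditional expectation, giving $\eta(\bm{\alpha})=\sum_{k=1}^K p_k\max_{\vc{u}\in\calU_k}\{\tp{\vc{u}}\bm{\alpha}\}$. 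Convexity and piece-wise linearity of $\eta$ then follow since each summand has these properties and the $p_k$ are nonnegative.

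I expect the main obstacle to be technical rather than conceptual: ensuring that the level sets $\calG_j$ and the configuration events $\{\tuple{z}:\calU(\tuple{z})=\calU_k\}$ are measurable, so that $\vc{p}$ is well defined and the law of total expectation legitimately applies. This is where I would spend the care, arguing measurability of $\vc{u}(\cdot)$ and of the induced partition, whereas the finiteness of the number of configurations, which is the heart of the result, is immediate from the pigeon-hole principle once the pointwise reduction \eqref{eq:singe-risk-equiv} is in hand.
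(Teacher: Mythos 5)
Your proposal is correct and follows essentially the same route as the paper's proof: the pointwise reduction of the inner maximization to a finite maximum over configuration vectors $\calU(\tuple{z})\subseteq\{0,1\}^M$ via the level-set decomposition of $\calS$, followed by partitioning the data space according to the (finitely many, by pigeon-hole) distinct configurations and factoring the configuration-dependent maxima out of the expectation. The only differences are cosmetic refinements on your part --- making the bound $K\le 2^{2^M}$ explicit, noting that no compactness is needed for the inner maximum, and flagging the measurability of the configuration events, which the paper glosses over.
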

\begin{proof}

Consider having one data-point $\tuple{z}\in \reals^d \times [C]$, then for any $\bm{\delta} \in \calS$ and $\bm{\alpha}\in \Delta_M$ we have:
\begin{equation}
    r\left(\tuple{z},\bm{\delta},\bm{\alpha}\right)= \sum_{i=1}^M \alpha_i \identityf{f_i(\vc{x}+\bm{\delta})\neq y}  = \tp{\vc{u}}\bm{\alpha}
\end{equation}
where $\vc{u}\in\{0,1\}^M$ such that $u_i=1$ if and only if $\bm{\delta}$ is adversarial to $f_i$ at $\tuple{z}$. Since $\vc{u}$ is independent of $\bm{\alpha}$, we thus obtain a many-to-one mapping from $\bm{\delta}\in \calS$ to $\vc{u}\in\{0,1\}^M$. Therefore, for any $\bm{\alpha}$ and $\tuple{z}$, we can always decompose the perturbation set $\calS$, i.e., $\calS = \calG_1 \cup ... \cup \calG_n$, into $n\leq 2^M$ subsets, such that: $\forall \bm{\delta} \in \calG_j: r\left(\tuple{z},\bm{\delta},\bm{\alpha}\right) = \tp{\bm{\alpha}}\vc{u}_j$
for some binary vector $\vc{u}_j$ independent of $\bm{\alpha}$. Let $\calU=\{\vc{u}_j\}_{j=1}^{n}$ be the collection of these vectors, then we can write:
\begin{align}
    \begin{split}
        \max_{\bm{\delta}\in\calS} r\left(\tuple{z},\bm{\delta},\bm{\alpha}\right) &= \max_{\bm{\delta}\in\calG_1 \cup ... \cup \calG_n} r\left(\tuple{z},\bm{\delta},\bm{\alpha}\right)=\max_{j \in [n]} \left\{ \max_{\bm{\delta}\in\calG_j} r\left(\tuple{z},\bm{\delta},\bm{\alpha}\right)\right\} = \max_{\vc{u}\in \calU} \left\{ \tp{\vc{u}}\bm{\alpha}\right\}
    \end{split}
\end{align}

The vectors $\{\vc{u}_j\}_{j=1}^n$ define a unique classifier and data-point configuration that is independent of the sampling probability. The function $\max_{\bm{\delta}} r$ is thus convex and piece-wise linear in $\bm{\alpha}$.

Partitioning the data-point space $\calR \subseteq \reals^d\times[C]$ into $K$ subsets $\calR=\calR_1 \cup ... \cup \calR_K$ such that all the data-points $\tuple{z}\in \calR_k$ share the same set \say{configuration} $\calU_k$, we obtain:
\begin{align}
    \begin{split}
        \eta(\bm{\alpha}) &= \means{\tuple{z}\sim \calD}{\max_{\bm{\delta}\in\calS}{ \sum_{i=1}^M \alpha_i \identityf{f_i(\vc{x}+\bm{\delta})\neq y}}} \\
        &= \int_{\tuple{z}\in\calR} p_z(\tuple{z}) \cdot \max_{\bm{\delta}\in\calS}{r\left(\tuple{z},\bm{\delta},\bm{\alpha}\right)} \,d\tuple{z}\\
        &= \sum_{k=1}^K \int_{\tuple{z}\in\calR_k} p_z(\tuple{z}) \cdot \max_{\bm{\delta}\in\calS}{r\left(\tuple{z},\bm{\delta},\bm{\alpha}\right)} \,d\tuple{z} \\
        &= \sum_{k=1}^K \int_{\tuple{z}\in\calR_k} p_z(\tuple{z}) \cdot \left( \max_{\vc{u}\in \calU_k} \left\{ \tp{\vc{u}}\bm{\alpha}\right\}\right) \,d\tuple{z} \\
        &= \sum_{k=1}^K \left(\max_{\vc{u}\in \calU_k} \left\{ \tp{\vc{u}}\bm{\alpha}\right\} \cdot \int_{\tuple{z}\in\calR_k} p_z(\tuple{z})  \,d\tuple{z}\right) \\
        &= \sum_{k=1}^K  \left(p_k \cdot \max_{\vc{u}\in \calU_k} \left\{ \tp{\vc{u}}\bm{\alpha}\right\} \right)
    \end{split}
\end{align}
where the total size of the partition $K$ is finite (exponential in the size $M$) and $\vc{p}\in\Delta_K$ such that $p_k = \prob{\tuple{z}\in \calR_k}$. Finally, $\eta$ is convex and piece-wise linear in $\bm{\alpha}$ since the summation of convex and piece-wise linear functions  is also convex and piece-wise linear.
\end{proof}

\subsection{Proof of Theorem~\ref{thm:two-classifiers}}
First, we state and prove this useful lemma: 
\begin{lemma}
\label{lemma:opt}
Let $h: \reals \rightarrow \reals$ be a convex piece-wise linear, hence sub-differentiable, function of the form:
\begin{equation}\label{eqn:hx}
    h(x) = \max\{a_1x+b_1, a_2x+b_2\} + a_3x+b_3
\end{equation}
such that $a_1 < a_2$. We wish to minimize $h$ over $x\in [c,d]$ where $c\leq y \leq d$, and $y$ is the intersection point $\frac{b_2-b_1}{a_1-a_2}$. 

Then, the optimal value $x^*$ that minimizes $h(x)$ in \eqref{eqn:hx}, is given by
\begin{align*}
    x^* &= \begin{cases}
         y,\qquad\text{if }a_3 \in (-a_2,-a_1)\\
         c,\qquad\text{if }a_3 \geq-a_1\\
         d,\qquad\text{if }a_3 \leq-a_2
         \end{cases}
\end{align*}
Note: only in the first case is the solution unique.
\end{lemma}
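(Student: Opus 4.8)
The plan is to analyze the piecewise-linear structure of $h$ directly. Since $a_1 < a_2$, for $x \leq y$ we have $a_1 x + b_1 \geq a_2 x + b_2$, so the max equals $a_1 x + b_1$, whereas for $x \geq y$ the max equals $a_2 x + b_2$. Thus $h$ is piecewise linear with exactly one breakpoint at $x = y$:
\begin{equation*}
    h(x) = \begin{cases} (a_1 + a_3)x + (b_1 + b_3), & x \leq y, \\ (a_2 + a_3)x + (b_2 + b_3), & x \geq y. \end{cases}
\end{equation*}
The left piece has slope $a_1 + a_3$ and the right piece has slope $a_2 + a_3$, and since $a_1 < a_2$ the right slope strictly exceeds the left slope, confirming convexity (as expected for a max of affine functions plus an affine term).

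First I would observe that minimizing a convex piecewise-linear function over $[c,d]$ reduces to a sign analysis of these two slopes, because the minimizer of such a function is determined by where the slope changes sign. Concretely, I would split into three cases according to the sign pattern of $(a_1+a_3, a_2+a_3)$. If $a_3 \in (-a_2, -a_1)$, then $a_1 + a_3 < 0 < a_2 + a_3$: the function strictly decreases on $[c,y]$ and strictly increases on $[y,d]$, so the unique minimizer is the breakpoint $x^* = y$ (using $c \leq y \leq d$). If $a_3 \geq -a_1$, then both slopes satisfy $a_1 + a_3 \geq 0$ and $a_2 + a_3 > 0$, so $h$ is nondecreasing on all of $[c,d]$ and is minimized at the left endpoint $x^* = c$. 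Symmetrically, if $a_3 \leq -a_2$, both slopes are $\leq 0$, so $h$ is nonincreasing and is minimized at $x^* = d$.

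The uniqueness claim falls out of the same analysis: only in the first case are both slopes strictly nonzero with opposite signs, forcing a strict decrease then strict increase and hence a unique minimizer at $y$. In the boundary cases $a_3 = -a_1$ or $a_3 = -a_2$, one of the two affine pieces is flat, so the minimum is attained on an entire segment and is not unique, which is consistent with the statement asserting uniqueness only in the interior case. I do not anticipate a genuine obstacle here; the only care needed is to handle the endpoint edge cases cleanly (the weak inequalities at $a_3 = -a_1$ and $a_3 = -a_2$ must be assigned to the endpoint cases, and I would verify that the stated case boundaries partition $\reals$ consistently with non-uniqueness at the seams). The whole argument is essentially a one-dimensional convex-optimization fact, so the work is bookkeeping rather than deep.
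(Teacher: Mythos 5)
Your proof is correct, and it reaches the lemma by a more elementary route than the paper. You resolve the max explicitly into its two affine pieces meeting at $y$ and argue by monotonicity: the minimizer is read off from the signs of the two slopes $a_1+a_3$ and $a_2+a_3$. The paper performs exactly the same sign bookkeeping but phrases it through the optimality condition of constrained convex optimization: $x^*$ minimizes $h$ over $[c,d]$ if some subgradient $g \in \partial h(x^*)$ satisfies $g\,(x-x^*) \geq 0$ for all $x\in[c,d]$; in the interior case it exhibits $0 \in \partial h(y)$ using the subdifferential $\partial h(y) = \{a_3 + \beta a_1 + (1-\beta) a_2 : \beta\in[0,1]\}$ at the kink, and in the endpoint cases it verifies this variational inequality at $c$ or $d$. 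Your version is self-contained and avoids subgradient calculus altogether, which is perfectly adequate for a one-dimensional piecewise-linear function; the paper's formulation instead ties the lemma to the subdifferential machinery it reuses elsewhere (e.g., the projected sub-gradient method behind the OSP algorithm and Theorem~\ref{thm:osp}). One minor point: your remark that the minimum is non-unique at the seams $a_3=-a_1$ or $a_3=-a_2$ tacitly assumes the corresponding flat piece has positive length ($c<y$, resp.\ $y<d$); this is immaterial to the lemma, which asserts uniqueness only in the interior case.
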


\begin{proof}
From constrained convex optimization (\cite{boyd2004convex, shor2012minimization}), we know that $x^*$ is the minimizer of $h$ over $[c,d]$ if there exists a sub-gradient $g \in \partial h(x^*)$ such that:
\begin{equation}
    g\cdot(x-x^*)\geq 0 \ \ \ \forall x \in [c,d]
\end{equation}

For $x\neq y$, $h$ is differentiable with $\nabla h = a_3 + a_1$ (if $x<y$) or $\nabla h = a_3 + a_2$ (if $x>y$), and for $x=y$ the sub-differential is given by $\partial h(y) = \{a_3 + \beta a_1 + (1-\beta)a_2: \beta \in [0,1]\}$. 

If $a_3 \in (-a_2,-a_1)$, then $\exists \beta \in [0,1]$ such that $a_3 + \beta a_1 + (1-\beta)a_2 = 0$, and thus $ 0 \in \partial h(y)$, which is a sufficient condition for global minimization, thus $x^* = y$. Furthermore, $x^*=y$ is unique, since $\forall x \neq y$, we will have $\nabla h = a_1 + a_3 <0$ (if $x<y$) or $\nabla h = a_2 + a_3 >0$ (if $x>y$) which in both cases implies $\forall z\neq y$ $\exists x \in [c,d]$ such that $\nabla h(z) (x-z)<0$.

If $a_3 \notin (-a_2,-a_1)$, then either $a_3 \geq -a_1$ or $a_3 \leq -a_2$. If $a_3 \geq-a_1$, then $a_1+a_3 = \nabla h(c) \geq0$, which implies that: $(a_1+a_3)(x-c) \geq 0$ $\forall x \in [c,d]$, hence $x^* =c$. Otherwise if $a_3 \leq -a_2$, then $a_2+a_3 = \nabla h(d) \leq0$, which implies that: $(a_2+a_3)(x-d) \geq 0$ $\forall x \in [c,d]$, hence $x^* =d$.
\end{proof}

We now provide the proof of Theorem~\ref{thm:two-classifiers} (restated below):
\begin{theorem*}[Restated]
For any two classifiers $f_1$ and $f_2$ with individual adversarial risks $\eta_1$ and $\eta_2$, respectively, subject to a perturbation set $\calS \subset \reals^d$ and data distribution $\calD$, if:
\begin{equation}
    \prob{\tuple{z}\in\calR_1} > |\eta_1-\eta_2|
\end{equation}
where:
\begin{equation}
    \calR_1 = \{\tuple{z}\in\reals^d \times [C]: \calS_1(\tuple{z})\neq \varnothing, \calS_2(\tuple{z})\neq \varnothing, \calS_1(\tuple{z})\cap \calS_2(\tuple{z})=\varnothing\}
\end{equation}
then the optimum sampling probability $\bm{\alpha}^*=\tp{(\nicefrac{1}{2},\nicefrac{1}{2})}$ uniquely minimizes $\eta(\bm{\alpha})$ resulting in $\eta(\bm{\alpha}^*)=\frac{1}{2}\left(\eta_1+\eta_2-\prob{\tuple{z}\in \calR_1}\right)$. Otherwise,  $\bm{\alpha}^* \in \{\vc{e}_1, \vc{e}_2\}$ minimizes $\eta(\bm{\alpha})$, where $\vc{e}_i$s are the standard basis vectors of $\reals^2$.
\end{theorem*}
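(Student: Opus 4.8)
The plan is to reduce the problem to the one-dimensional lemma just proved (Lemma~\ref{lemma:opt}), exploiting the fact that $\Delta_2$ is effectively a single scalar parameter. First I would parametrize $\bm{\alpha} = \tp{[\alpha_1\ \alpha_2]}$ with $\alpha_2 = 1-\alpha_1$, so that $\eta$ becomes a function of the single variable $\alpha_1 \in [0,1]$. Using the explicit analytic form in \eqref{eq:eta-m2}, namely $\eta(\bm{\alpha}) = p_1 \max\{\alpha_1,\alpha_2\} + p_2 + p_3\alpha_1 + p_4\alpha_2 + 0\cdot p_5$, I substitute $\alpha_2 = 1-\alpha_1$ to obtain an expression of exactly the shape $\max\{a_1\alpha_1 + b_1,\ a_2\alpha_1 + b_2\} + a_3\alpha_1 + b_3$ required by the lemma. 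Concretely, the $\max\{\alpha_1, 1-\alpha_1\}$ term supplies the two affine pieces (with slopes $+1$ and $-1$ crossing at the intersection point $y = \nicefrac{1}{2}$), and the remaining linear terms $p_3\alpha_1 + p_4(1-\alpha_1)$ fold into the affine offset $a_3\alpha_1 + b_3$.

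Next I would identify the coefficients against the lemma's template. Writing $\max\{1-\alpha_1,\ \alpha_1\}$ so that $a_1 = -p_1 < a_2 = +p_1$ (assuming $p_1>0$), the intersection is $y = \nicefrac{1}{2}$, which lies in $[c,d]=[0,1]$ as required. The residual slope is $a_3 = p_3 - p_4$. The lemma then says the interior minimizer $\alpha_1^* = \nicefrac{1}{2}$ is attained \emph{uniquely} precisely when $a_3 \in (-a_2, -a_1)$, i.e.\ when $-p_1 < p_3 - p_4 < p_1$, which is equivalent to $|p_3 - p_4| < p_1$. Otherwise the minimizer is an endpoint, $\alpha_1^* = 0$ or $\alpha_1^* = 1$, i.e.\ $\bm{\alpha}^* \in \{\vc{e}_1,\vc{e}_2\}$. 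The key step is then to translate the probabilistic quantities into the $p_k$. From the enumeration in Fig.~\ref{fig:enumerate} one reads off $p_1 = \prob{\tuple{z}\in\calR_1}$ directly (configuration~1 is exactly the event that a perturbation fools one classifier but not both), and by summing the per-configuration contributions one checks $\eta_1 = p_1 + p_2 + p_3$ and $\eta_2 = p_1 + p_2 + p_4$, whence $\eta_1 - \eta_2 = p_3 - p_4$. Substituting gives $|p_3-p_4| = |\eta_1-\eta_2|$, so the condition $|p_3-p_4| < p_1$ is exactly \eqref{eq:thm1-cond}. Evaluating $\eta$ at $\alpha_1 = \nicefrac{1}{2}$ yields $\eta(\bm{\alpha}^*) = \tfrac{p_1}{2} + p_2 + \tfrac{p_3+p_4}{2} = \tfrac{1}{2}(\eta_1+\eta_2-p_1)$, matching the claimed value.

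The main obstacle I anticipate is the bookkeeping that establishes the identities $\eta_i = p_1 + p_2 + p_{2+i}$ and, relatedly, confirming that the five configurations of Fig.~\ref{fig:enumerate} are exhaustive and that the $\vc{0}$ vector is correctly dropped from $\calU_k$ (so that $K=5$). This requires carefully arguing that marginalizing $\eta(\bm{\alpha})$ onto a single classifier $f_i$ (i.e.\ setting $\bm{\alpha}=\vc{e}_i$) recovers exactly the total probability of all configurations in which $f_i$ can be individually fooled, which is where the $\calR_1$ event must be accounted for in \emph{both} $\eta_1$ and $\eta_2$. A secondary subtlety is the degenerate case $p_1 = 0$: then the $\max$ term vanishes, $\eta$ is globally affine in $\alpha_1$, and the condition \eqref{eq:thm1-cond} fails (since $\prob{\tuple{z}\in\calR_1}=0 \le |\eta_1-\eta_2|$), so one checks directly that the minimizer is an endpoint, consistent with the ``otherwise'' branch and with the lemma's non-uniqueness in the boundary cases. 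Everything else is a routine substitution once these combinatorial identities are in place.
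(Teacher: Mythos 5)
Your proposal is correct and follows essentially the same route as the paper's proof: reduce to a single variable via $\alpha_2 = 1-\alpha_1$, apply Lemma~\ref{lemma:opt} with $a_1=-p_1$, $a_2=p_1$, $a_3=p_3-p_4$, and close using the identities $\eta_1 = p_1+p_2+p_3$, $\eta_2 = p_1+p_2+p_4$. Your explicit treatment of the degenerate case $p_1=0$ (where the lemma's hypothesis $a_1<a_2$ fails and $\eta$ becomes affine) is in fact slightly more careful than the paper, which leaves that case implicit.
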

\begin{proof}
We know that, for $M=2$, the adversarial risk $
\eta$ can be re-written $\forall \bm{\alpha}\in \Delta_2$:
\begin{equation}
    \eta(\bm{\alpha}) = p_1 \cdot \max\{\alpha_1,\alpha_2\} + p_2 \cdot 1 + p_3 \cdot \alpha_1 + p_4 \cdot \alpha_2 + p_5 \cdot 0
\end{equation}
where $p_k = \prob{\tuple{z}\in \calR_k}$, and the regions $\{\calR_k\}_{k=1}^K$ partition the input space $\reals^d \times [C]$ as follows:
\begin{align}
    \begin{split}
    \calR_1 &= \{\tuple{z}\in\reals^d \times [C]: \calS_1(\tuple{z})\neq \varnothing, \calS_2(\tuple{z})\neq \varnothing, \calS_1(\tuple{z})\cap \calS_2(\tuple{z})=\varnothing\} \\
    \calR_2 &= \{\tuple{z}\in\reals^d \times [C]: \calS_1(\tuple{z})\cap \calS_2(\tuple{z})\neq\varnothing\} \\
    \calR_3 &= \{\tuple{z}\in\reals^d \times [C]: \calS_1(\tuple{z}) \neq \varnothing, \calS_2(\tuple{z})=\varnothing\} \\ 
    \calR_4 &= \{\tuple{z}\in\reals^d \times [C]: \calS_1(\tuple{z}) = \varnothing, \calS_2(\tuple{z})\neq\varnothing\} \\
    \calR_5 &= \{\tuple{z}\in\reals^d \times [C]: \calS_1(\tuple{z}) =  \calS_2(\tuple{z})=\varnothing\}
    \end{split}
\end{align}

Using $\alpha_1=1-\alpha_2=\alpha$, we have $\forall \alpha \in[0,1]$:
\begin{equation}
    \eta\left(\tp{\left(\alpha,1-\alpha\right)}\right) = h(\alpha) =  p_1 \cdot \max\{\alpha,1-\alpha\} + (p_3-p_4) \cdot \alpha + p_2+p_4
\end{equation}
where we wish to find $\alpha^*\in [0,1]$ that minimizes $h(\alpha)$.  Applying Lemma~\ref{lemma:opt} with:
\begin{equation}
    a_1 = -p_1,\ b_1 = p_1,\ a_2=p_1,\ b_2 = 0,\ a_3 = p_3-p_4,\ b_3 = p_2+p_4
\end{equation}
and utilizing $\eta_1 = \eta(\vc{e}_1) = p_1 + p_2 + p_3$ and $\eta_2 = \eta(\vc{e}_2) = p_1 + p_2 + p_4$, yields the main result.
\end{proof}

\subsection{Proof of Corollary~\ref{cor:limit}}
\label{app:proof-corr}
We provide the proof of Corollary~\ref{cor:limit} (restated below):

\begin{corollary*} For any two classifiers $f_1$ and $f_2$ with individual adversarial risks $\eta_1$ and $\eta_2$, respectively, perturbation set $\calS$, and data distribution $\calD$: 
\begin{equation}\label{eq:thm1-limit-r}
   \min_{\bm{\alpha}\in \Delta_2} \eta(\bm{\alpha}) = \eta(\bm{\alpha}^*) \geq \min\left\{\frac{1}{2} \max\{\eta_1, \eta_2\}, \min\{\eta_1,\eta_2\}\right\}.
\end{equation}
\end{corollary*}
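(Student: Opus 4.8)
The plan is to lean entirely on the complete case analysis already furnished by Theorem~\ref{thm:two-classifiers}, so that proving the corollary reduces to checking the claimed lower bound separately in each of the two regimes that theorem identifies. Reusing the notation $p_k = \prob{\tuple{z}\in\calR_k}$ from the proof of that theorem, I would first record the two decompositions of the individual risks, $\eta_1 = p_1 + p_2 + p_3$ and $\eta_2 = p_1 + p_2 + p_4$, in which every $p_k \geq 0$. These supply the one arithmetic fact that drives the whole argument.

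In the first regime, suppose condition \eqref{eq:thm1-cond} holds, i.e.\ $p_1 = \prob{\tuple{z}\in\calR_1} > |\eta_1-\eta_2|$. Theorem~\ref{thm:two-classifiers} then gives the exact value $\eta(\bm{\alpha}^*) = \tfrac{1}{2}\left(\eta_1+\eta_2-p_1\right)$. The key observation is that $p_1 \leq \min\{\eta_1,\eta_2\}$, which is immediate from the two decompositions above since all $p_k$ are nonnegative. Substituting this into the exact expression and using the identity $\eta_1+\eta_2-\min\{\eta_1,\eta_2\} = \max\{\eta_1,\eta_2\}$ yields $\eta(\bm{\alpha}^*)\geq \tfrac{1}{2}\max\{\eta_1,\eta_2\}$, which is in particular at least the right-hand side of \eqref{eq:thm1-limit-r}.

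In the second regime, condition \eqref{eq:thm1-cond} fails, and Theorem~\ref{thm:two-classifiers} gives $\bm{\alpha}^*\in\{\vc{e}_1,\vc{e}_2\}$, so $\eta(\bm{\alpha}^*)=\min\{\eta_1,\eta_2\}$, which trivially dominates $\min\left\{\tfrac{1}{2}\max\{\eta_1,\eta_2\},\ \min\{\eta_1,\eta_2\}\right\}$. Taking the two regimes together establishes \eqref{eq:thm1-limit-r}.

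I do not anticipate a genuine obstacle here: the only step with any content is noticing $p_1\leq\min\{\eta_1,\eta_2\}$, which is exactly what converts the closed-form optimal risk of Theorem~\ref{thm:two-classifiers} into the stated bound. Everything else is bookkeeping, so the proof should be a short two-case verification rather than a fresh optimization argument.
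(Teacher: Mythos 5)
Your proof is correct and follows essentially the same route as the paper: both reduce the corollary to Theorem~\ref{thm:two-classifiers} and the key bound $\prob{\tuple{z}\in\calR_1}\leq\min\{\eta_1,\eta_2\}$, which turns the closed-form optimal risk $\tfrac{1}{2}(\eta_1+\eta_2-\prob{\tuple{z}\in\calR_1})$ into $\tfrac{1}{2}\max\{\eta_1,\eta_2\}$. The only cosmetic difference is that the paper packages the two regimes into a single expression $\eta(\bm{\alpha}^*)=\min\bigl\{\tfrac{1}{2}(\eta_1+\eta_2-\prob{\tuple{z}\in\calR_1}),\,\min\{\eta_1,\eta_2\}\bigr\}$ while you handle them as separate cases, and you make explicit (via $\eta_1=p_1+p_2+p_3$, $\eta_2=p_1+p_2+p_4$) the justification of the bound that the paper merely cites.
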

\begin{proof}
From Theorem~\ref{thm:two-classifiers}, we have that:
\begin{equation}
    \eta(\bm{\alpha}^*) = \min\left\{ \frac{1}{2}\left(\eta_1+\eta_2-\prob{\tuple{z}\in \calR_1}\right), \min\{\eta_1,\eta_2\}\right\}
\end{equation}
Using the tight upper bound on $\prob{\tuple{z}\in\calR_1} \leq \min\{\eta_1,\eta_2\}$, we obtain the main result.
\end{proof}

\subsection{Proof of Theorem~\ref{thm:bounds}}
\label{app:proof-bounds}
\subsubsection{Useful Lemmas}
We first state and prove a few useful lemmas that are vital for proving Theorem~\ref{thm:bounds}. While some lemmas are trivial and have been proven elsewhere, we nonetheless state their proofs for completeness.

\begin{lemma}
    \label{lemma:convex-ub}
    Let $h: \reals^n \rightarrow \reals$ be a convex function, and $\calH(\calX) \subset \reals^n$ be the convex hull of $\calX = \{\vc{x}_1, ..., \vc{x}_d\}$ where $\{\vc{x}_i\}_{i=1}^d\in\reals^n$, then there exists $\vc{x}_m \in \calX$ such that:
\begin{equation}\label{eqn:lemma-ub}
    \max_{\vc{u}\in \calH(\calX)} h(\vc{u}) = h(\vc{x}_m)
\end{equation}
\end{lemma}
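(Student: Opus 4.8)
The plan is to prove the standard fact that a convex function attains its maximum over a polytope at one of the generating vertices, using only the definition of the convex hull together with Jensen's inequality. The argument splits naturally into an upper bound (every point of $\calH(\calX)$ has value at most the largest vertex value) and a matching lower bound (one vertex already achieves that value), after which equality follows.

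First I would fix $\vc{x}_m \in \calX$ to be a point achieving $h(\vc{x}_m) = \max_{i\in[d]} h(\vc{x}_i)$, i.e. $m = \argmax_{i\in[d]} h(\vc{x}_i)$; such a maximizer exists because $\calX$ is a finite set. The goal is then to show $\max_{\vc{u}\in\calH(\calX)} h(\vc{u}) = h(\vc{x}_m)$.

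Next I would establish the upper bound. By definition of the convex hull, any $\vc{u}\in\calH(\calX)$ can be written as $\vc{u} = \sum_{i=1}^d \lambda_i \vc{x}_i$ for some $\bm{\lambda}\in\Delta_d$. Convexity of $h$ and Jensen's inequality give
\begin{equation}
    h(\vc{u}) = h\!\left(\sum_{i=1}^d \lambda_i \vc{x}_i\right) \leq \sum_{i=1}^d \lambda_i\, h(\vc{x}_i) \leq \sum_{i=1}^d \lambda_i\, h(\vc{x}_m) = h(\vc{x}_m),
\end{equation}
where the second inequality replaces each $h(\vc{x}_i)$ by its upper bound $h(\vc{x}_m)$ and uses $\lambda_i\geq 0$, and the final equality uses $\sum_i \lambda_i = 1$. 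Since $\vc{u}$ was arbitrary, this shows $\max_{\vc{u}\in\calH(\calX)} h(\vc{u}) \leq h(\vc{x}_m)$.

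Finally I would close the argument with the trivial reverse inequality: because $\vc{x}_m\in\calX\subseteq\calH(\calX)$, the maximum over the larger set $\calH(\calX)$ is at least $h(\vc{x}_m)$. Combining the two inequalities yields equality, and taking $\vc{x}_m$ as the claimed point establishes \eqref{eqn:lemma-ub}. There is no real obstacle here; the only point worth stating carefully is that the maximum over the finite vertex set is attained (hence $\vc{x}_m$ is well defined) and that a convex combination of the real numbers $h(\vc{x}_i)$ cannot exceed their maximum, which is exactly what the two chained inequalities above encode.
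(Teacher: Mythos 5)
Your proof is correct and follows essentially the same argument as the paper's: choose $\vc{x}_m$ maximizing $h$ over the finite set $\calX$, write an arbitrary $\vc{u}\in\calH(\calX)$ as a convex combination, and apply Jensen's inequality to bound $h(\vc{u})\leq h(\vc{x}_m)$. The only difference is that you explicitly state the trivial reverse inequality ($\vc{x}_m\in\calH(\calX)$ so the maximum is attained), which the paper leaves implicit; this is a minor improvement in rigor, not a different approach.
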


\begin{proof}
Let $\vc{u}$ be any arbitrary vector in $\calH(\calX)$, that is $\exists \bm{\alpha} \in \Delta_d$:
\begin{equation}
    \vc{u} = \sum_{i=1}^d \alpha_i \vc{x}_i
\end{equation}

Let $m\in [d]$ such that $h(\vc{x}_m)\geq h(\vc{x}_i)$ $\forall i\in [d]$. From the convexity of $h$, we upper bound $h(\vc{u})$ as follows:
\begin{equation}
    h(\vc{u}) =  h\left(\sum_{i=1}^d\alpha_i \vc{x}_i\right) \leq  \sum_{i=1}^d \alpha_i h(\vc{x}_i) \leq  \sum_{i=1}^d \alpha_i h(\vc{x}_m) =   h(\vc{x}_m) \sum_{i=1}^d \alpha_i = h(\vc{x}_m)
\end{equation}
Thus, \eqref{eqn:lemma-ub} holds for any $\vc{u}\in \calH(\calX)$. 
\end{proof}

\begin{lemma}[Redistribution Lemma] \label{lemma:re-distribute}
$\forall p,q$ such that  $0\leq p\leq q \leq 1$, $\forall \bm{\alpha}\in \Delta_M$, and $\forall \calI, \calJ \subseteq [M]$ such that $\calI \notin \calJ \notin \calI$ we have:
\begin{equation}
    p \cdot \max_{i \in \calI}\{\alpha_i\} + q \cdot \max_{j\in \calJ} \{\alpha_j\} \geq  p \cdot \max_{i \in \calI \cup \calJ}\{\alpha_i\} + (q-p) \cdot \max_{j\in \calJ} \{\alpha_j\} +p \cdot \max_{k\in \calI \cap \calJ} \{\alpha_k\}
\end{equation}
\end{lemma}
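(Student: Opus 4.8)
The plan is to prove the inequality directly by analyzing the three terms on the left-hand side and showing how the maxima over $\calI$ and $\calJ$ can be ``redistributed'' into the three maxima over $\calI\cup\calJ$, $\calJ$, and $\calI\cap\calJ$ appearing on the right. The key structural fact to exploit is that for nonnegative coefficients, $\max_{i\in\calI\cup\calJ}\{\alpha_i\}=\max\{\max_{i\in\calI}\{\alpha_i\},\max_{j\in\calJ}\{\alpha_j\}\}$, and that $\max_{k\in\calI\cap\calJ}\{\alpha_k\}\leq\min\{\max_{i\in\calI}\{\alpha_i\},\max_{j\in\calJ}\{\alpha_j\}\}$ since the intersection is a subset of both index sets. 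I would abbreviate $A=\max_{i\in\calI}\{\alpha_i\}$, $B=\max_{j\in\calJ}\{\alpha_j\}$, $C=\max_{k\in\calI\cap\calJ}\{\alpha_k\}$, so that $\max_{i\in\calI\cup\calJ}\{\alpha_i\}=\max\{A,B\}$ and $C\leq\min\{A,B\}$.

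With this notation, the claim reduces to the purely algebraic inequality
\begin{equation}
pA+qB \;\geq\; p\max\{A,B\}+(q-p)B+pC,
\end{equation}
which I would verify under the constraints $0\leq p\leq q$ and $0\leq C\leq\min\{A,B\}$. The right-hand side simplifies: the $B$-terms combine as $p\max\{A,B\}+qB-pB+pC$, so subtracting the left-hand side, the inequality is equivalent to $pA\geq p\max\{A,B\}-pB+pC$, i.e. $p\bigl(A+B-\max\{A,B\}-C\bigr)\geq 0$. Since $A+B-\max\{A,B\}=\min\{A,B\}$, this is just $p\bigl(\min\{A,B\}-C\bigr)\geq 0$, which holds because $p\geq 0$ and $C\leq\min\{A,B\}$.

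The two facts I would establish carefully are therefore (i) the identity $\max_{i\in\calI\cup\calJ}\{\alpha_i\}=\max\{A,B\}$, which follows immediately from $\alpha_i\geq 0$ and the definition of the union, and (ii) the bound $C\leq\min\{A,B\}$, which holds because $\calI\cap\calJ\subseteq\calI$ and $\calI\cap\calJ\subseteq\calJ$. I do not expect any genuine obstacle here; the whole statement collapses to a one-line nonnegativity check once the max terms are named. The main thing to be careful about is the edge case where $\calI\cap\calJ=\varnothing$, in which the convention $\max$ over the empty set should be taken as $0$ (consistent with the nonnegativity of the $\alpha_i$), so that the bound $0=C\leq\min\{A,B\}$ still holds and the argument goes through unchanged. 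I would also note that the hypothesis $\calI\notin\calJ\notin\calI$ (meaning neither index set contains the other) is not actually needed for the inequality to hold, but it reflects the intended application where the redistribution is nontrivial.
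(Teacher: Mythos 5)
Your proof is correct and follows essentially the same route as the paper's: both hinge on the identity $A+B=\max\{A,B\}+\min\{A,B\}$ together with the facts that $\max_{i\in\calI\cup\calJ}\{\alpha_i\}=\max\{A,B\}$ and $\max_{k\in\calI\cap\calJ}\{\alpha_k\}\leq\min\{A,B\}$, the paper merely presenting this as a chain of bounds on the left-hand side rather than your rearrangement to $p\bigl(\min\{A,B\}-C\bigr)\geq 0$. Your added remarks (the empty-intersection convention and the dispensability of the non-containment hypothesis) are sound but inessential.
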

\begin{proof}

\begin{align}
\begin{split}
    p \cdot \max_{i \in \calI}\{\alpha_i\} + q \cdot \max_{j \in \calJ}\{\alpha_j\} &= p \cdot \alpha_{i^*} + q \cdot \alpha_{j^*} \\
    &= p\cdot (\alpha_{i^*} + \alpha_{j^*}) + (q-p) \cdot \alpha_{j^*} \\
    &\myeq{(a)} p\cdot \left( \max_{i \in \calI\cup\calJ}\{\alpha_i\} + \min\{\alpha_{i^*},\alpha_{j^*}\}\right) + (q-p) \cdot \alpha_{j^*} \\
    &\mygeq{(b)} p\cdot \max_{i \in \calI\cup\calJ}\{\alpha_i\} + (q-p) \cdot \alpha_{j^*} + p \cdot \max_{k \in \calI \cap \calJ}\{\alpha_k\} \\
    &= p \cdot \max_{i \in \calI\cup\calJ}\{\alpha_i\} + \left(q-p\right)\cdot \max_{j \in \calJ}\{\alpha_j\} +p \cdot \max_{k \in \calI \cap \calJ}\{\alpha_k\}
\end{split}
\end{align}
where (a) holds because the maximum over $\calI \cup \calJ$ is either $\alpha_{i^*}$ or $\alpha_{j^*}$, and (b) holds since the smallest of the two maximizers cannot be smaller than the maximizer of the smaller set $\calI \cap \calJ$.
\end{proof}

\begin{lemma} \label{lemma:bound-eta}
Let $\{f_i\}_{i=1}^M$ be an arbitrary collection of $C$-ary classifiers with individual adversarial risks $\eta_i$ such that $0<\eta_1\leq ... \leq \eta_M\leq 1$. For any data distribution $\calD$ and perturbation set $\calS$ we have $\forall \bm{\alpha}\in \Delta_M$:
\begin{equation}
    \eta(\bm{\alpha}) \geq \sum_{i=1}^M \left(\left( \eta_i - \eta_{i-1}\right) \cdot \max_{j\in \{i,...,M\}}\{\alpha_j\} \right)
\end{equation}
where $\eta_{0}\doteq0$. 
\end{lemma}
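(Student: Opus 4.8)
The plan is to first collapse the inner maximization over $\bm{\delta}$ to a maximization over a single classifier, and then reduce the lemma to a purely combinatorial inequality about binary ``foolability'' events with prescribed marginals. For a fixed $\tuple{z}$, write $T(\tuple{z})=\{i\in[M]:\calS_i(\tuple{z})\neq\varnothing\}$ for the set of classifiers that admit some adversarial perturbation. Since every $\alpha_i\geq 0$, picking any single $\bm{\delta}\in\calS_i(\tuple{z})$ shows $\max_{\bm{\delta}\in\calS}\sum_j\alpha_j\identityf{f_j(\vc{x}+\bm{\delta})\neq y}\geq\alpha_i$ for each $i\in T(\tuple{z})$, hence
\begin{equation}
\max_{\bm{\delta}\in\calS}\sum_{j=1}^M\alpha_j\identityf{f_j(\vc{x}+\bm{\delta})\neq y}\;\geq\;\max_{i\in T(\tuple{z})}\alpha_i .
\end{equation}
Taking expectations gives $\eta(\bm{\alpha})\geq\means{\tuple{z}\sim\calD}{\max_{i\in T(\tuple{z})}\alpha_i}$. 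By the finite partition into configurations from Proposition~\ref{prop:eta} this expectation is a simple finite sum, and writing $A_i=\{\tuple{z}:\calS_i(\tuple{z})\neq\varnothing\}$, the only information I retain about the (arbitrary) joint law of the events $A_1,\dots,A_M$ is their marginals $\prob{A_i}=\eta_i$.

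It then remains to show $\mean{\max_{i\in T(\tuple{z})}\alpha_i}\geq\sum_{i=1}^M(\eta_i-\eta_{i-1})\beta_i$, where $\beta_i=\max_{j\in\{i,\dots,M\}}\alpha_j$. I would do this with the layer-cake (tail-integral) representation of the nonnegative simple random variable $g(\tuple{z})=\max_{i\in T(\tuple{z})}\alpha_i$, namely $\mean{g}=\int_0^\infty\prob{g>t}\,dt$. For a fixed $t$ the set $I_t=\{i:\alpha_i>t\}$ is deterministic, so $\{g>t\}=\bigcup_{i\in I_t}A_i$, and the trivial bound $\prob{\bigcup_{i\in I_t}A_i}\geq\max_{i\in I_t}\prob{A_i}$ together with the monotonicity $\eta_1\leq\cdots\leq\eta_M$ yields $\prob{g>t}\geq\max_{i\in I_t}\eta_i=\eta_{j^*(t)}$, where $j^*(t)=\max\{i:\alpha_i>t\}$ is the largest active index (and both sides are $0$ once $I_t=\varnothing$).

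The final step is a routine evaluation of $\int_0^\infty\eta_{j^*(t)}\,dt$. Telescoping $\eta_{j^*(t)}=\sum_{i=1}^M(\eta_i-\eta_{i-1})\identityf{j^*(t)\geq i}$ with $\eta_0\doteq 0$, and observing that $j^*(t)\geq i\iff\max_{j\geq i}\alpha_j>t\iff\beta_i>t$, the set $\{t\geq 0:j^*(t)\geq i\}$ is exactly $[0,\beta_i)$ of Lebesgue measure $\beta_i$; hence $\int_0^\infty\eta_{j^*(t)}\,dt=\sum_{i=1}^M(\eta_i-\eta_{i-1})\beta_i$, which is the claimed bound.

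I expect the main obstacle to be conceptual rather than computational: the joint dependence among the foolability events $A_i$ is completely unconstrained, so any valid argument must produce a bound uniform over all couplings with marginals $\eta_i$, and must do so \emph{sharply} enough to recover the exact suffix-max expression rather than a looser estimate. The union bound $\prob{\bigcup A_i}\geq\max_i\prob{A_i}$ is precisely the extremal (nested) case, which is why it is tight here. Equivalently, one can reach the same inequality by starting from $\eta(\bm{\alpha})\geq\sum_k p_k\max_{i\in S_k}\alpha_i$ (where the $S_k$ are the configuration support sets, i.e.\ the classifiers foolable on $\calR_k$) and repeatedly applying the Redistribution Lemma~\ref{lemma:re-distribute} to ``uncross'' any two incomparable sets, monotonically decreasing the objective while preserving the marginals until the support is the nested chain $\{i,\dots,M\}$ realizing $\sum_i(\eta_i-\eta_{i-1})\beta_i$. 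I would present the layer-cake version as the primary argument, since it avoids the bookkeeping needed to argue termination of the uncrossing procedure.
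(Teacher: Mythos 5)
Your proof is correct, and its key argument is genuinely different from the paper's. Both proofs share the same opening reduction: retain only the information of \emph{which} classifiers are foolable at each $\tuple{z}$, giving $\eta(\bm{\alpha})\geq\means{\tuple{z}\sim\calD}{\max_{i\in T(\tuple{z})}\alpha_i}$ (in the paper this is the passage from $\eta$ to $\eta'$ via the active index sets $\calL_k$, which coincide with your $T(\tuple{z})$). From there the paper proceeds combinatorially: it repeatedly applies the Redistribution Lemma~\ref{lemma:re-distribute} to uncross any two incomparable support sets, argues the process terminates in a nested chain $\calJ_L\subset\cdots\subset\calJ_1=[M]$, and then evaluates the chain expression, with separate bookkeeping for repeated risks. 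You replace all of that with the layer-cake identity $\mean{g}=\int_0^\infty\prob{g>t}\,dt$ for $g(\tuple{z})=\max_{i\in T(\tuple{z})}\alpha_i$, the observation that $\{g>t\}=\bigcup_{i\in I_t}A_i$ where $I_t=\{i:\alpha_i>t\}$ and $A_i=\{\tuple{z}:\calS_i(\tuple{z})\neq\varnothing\}$ has $\prob{A_i}=\eta_i$, the elementary bound $\prob{\bigcup_{i\in I_t}A_i}\geq\max_{i\in I_t}\eta_i=\eta_{j^*(t)}$ (valid by the assumed ordering of the $\eta_i$), and a telescoping evaluation of the integral; each step checks out, including the identification $\{t\geq 0:j^*(t)\geq i\}=[0,\beta_i)$ with $\beta_i=\max_{j\in\{i,\dots,M\}}\alpha_j$, and the degenerate case $I_t=\varnothing$ is handled consistently. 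What your route buys: it is shorter, needs no auxiliary lemma and no termination argument, and it exposes the tightness mechanism clearly --- the only inequality beyond the first reduction is union-versus-max, which is an equality precisely when the events $A_i$ are nested, matching the extremal construction in the paper's proof of Theorem~\ref{thm:bounds}. What the paper's route buys: its intermediate quantities $\eta',\eta'',\dots,h$ remain interpretable as adversarial risks of modified ensembles with the same marginals $\{\eta_i\}$, which is the picture the main text leans on to motivate both the bound and the optimal ensemble structure.
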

\begin{proof} From Proposition~\ref{prop:eta} we know that  $\exists K\in \naturals$, $\vc{p}\in \Delta_K$, and $\calU_k\subseteq\{0,1\}^M$ $\forall k\in[K]$ such that:
\begin{equation}
    \eta(\bm{\alpha}) = \sum_{k=1}^K\left(p_k \cdot \max_{\vc{u}\in \calU_k}\left\{\tp{\vc{u}}\bm{\alpha}\right\}\right)
\end{equation}
Let $\calL_k \subseteq [M]$ represent the set of classifier indices $i_1, ..., i_n$ that are active in the configuration $\calU_k$, that is:
\begin{equation}
    m \in \calL_k \iff \exists \vc{v} \in \calU_k \text{ such that }v_{m} = 1
\end{equation}
We then lower bound $\eta$ as follows:
\begin{equation}
    \eta(\bm{\alpha}) = \sum_{k=1}^K\left(p_k \cdot \max_{\vc{u}\in \calU_k}\left\{\tp{\vc{u}}\bm{\alpha}\right\}\right) \geq \sum_{k=1}^K\left(p_k \cdot \max_{i\in \calL_k}\left\{\alpha_i\right\}\right) = \eta'(\bm{\alpha})
\end{equation}
The bound trivially holds, since the sum of positive numbers is always larger than any summand. It is noteworthy to point out that the RHS quantity $\eta'(\bm{\alpha})$ can be interpreted as the adversarial risk of an \emph{auxiliary} set of classifiers $\calF'$ with same individual risks $\{\eta_i\}$ such that for any $\tuple{z}\in \reals^d\times [C]$, the classifiers have no common adversarial perturbations, i.e.:
\begin{equation}
    \bigcap_{i=1}^M \calS'_i(\tuple{z}) = \varnothing
\end{equation}
and:
\begin{equation}
    \eta'_i = \eta'(\vc{e}_i) = \sum_{k: i\in \calL_k} p_k = \eta(\vc{e}_i) = \eta_i    
\end{equation}

Assume that the conditions of Lemma~\ref{lemma:re-distribute} are met by two terms in $\eta'$, i.e., $\exists k_1,k_2 \in [K]$ such that $\calL_{k_1} \notin \calL_{k_2}\notin \calL_{k_1}$ and $p_{k_1}\leq p_{k_2}$, then we can apply the bound in Lemma~\ref{lemma:re-distribute} and obtain:
\begin{align}
    \begin{split}
        \eta'(\bm{\alpha}) &- \sum_{k\in [K]\setminus \{k_1,k_2\}}\left(p_k \cdot \max_{i\in \calL_k}\left\{\alpha_i\right\}\right) =    p_{k_1} \cdot \max_{i \in \calL_{k_1}}\{\alpha_i\} + p_{k_2} \cdot \max_{i\in \calL_{k_2}} \{\alpha_i\} \\
        &\geq   p_{k_1} \cdot \max_{i \in \calL_{k_1}\cup\calL_{k_2}}\{\alpha_i\} + \left(p_{k_2}-p_{k_1}\right)\cdot \max_{j \in \calL_{k_2}}\{\alpha_j\} +p_{k_1} \cdot \max_{k \in \calL_{k_1} \cap \calL_{k_2}}\{\alpha_k\} \\
        &= \eta''(\bm{\alpha}) - \sum_{k\in [K]\setminus \{k_1,k_2\}}\left(p_k \cdot \max_{i\in \calL_k}\left\{\alpha_i\right\}\right)
    \end{split}
\end{align}
where $\eta''(\bm{\alpha})$ is the modified ensemble adversarial risk. The application of Lemma~\ref{lemma:re-distribute} can be understood as a way to \say{re-distribute} the classifiers' adversarial vulnerabilities while  preserving the adversarial risk identities $\forall i\in[M]$:
\begin{equation}
      \eta_i= \eta'(\vc{e}_i) = \sum_{k: i\in \calL_k} p_k = \eta''(\vc{e}_i)
\end{equation}
The main idea of this proof is to keep applying Lemma~\ref{lemma:re-distribute} to the modified ensemble adversarial risks (if possible) to obtain a better lower bound. The process stops when the conditions are no longer met, and we obtain an adversarial risk $h(\bm{\alpha})$:
\begin{equation}
   \eta'(\bm{\alpha}) \geq \eta''(\bm{\alpha}) \geq .. \geq h(\bm{\alpha}) = \sum_{k=1}^{L}\left( q_k \cdot \max_{j\in \calJ_k}\left\{\alpha_j\right\}\right)
\end{equation}
Without loss of generality, we will assume that $\{\calJ_k\}$ are distinct and $q_k\neq 0$. Furthermore, since the conditions of Lemma~\ref{lemma:re-distribute} cannot be met by any two sets in $\{\calJ_k\}$, we must have (up to a re-ordering of the indices):
\begin{equation} \label{eq:set-ordering}
    \calJ_{L} \subset \calJ_{L-1} \subset ... \subset \calJ_{1} \subseteq [M]
\end{equation}
We now make the following observations:
\begin{enumerate}
    \item Due to \eqref{eq:set-ordering}, we have that $L\leq M$ and for all $i\in [M]$, $\exists m_i \in [L]$ such that:
    \begin{equation}
        \eta_i = \sum_{k: i\in \calJ_k}q_k = \sum_{k=1}^{m_i}q_k
    \end{equation}
    \item Since $\{\eta_i\}$ are sorted, we get that $m_{i+1} = m_i + 1$ if $\eta_i < \eta_{i+1}$ or $m_{i+1}=m_i$ otherwise
    \item $\calJ_1 = [M]$ since $\eta_1 \neq 0$ 
    \item For any two consecutive sets $\calJ_k$ and $\calJ_{k+1}$, we can always find $n\geq 1$ indices from $[M]$ such that $\calJ_k = \calJ_{k+1}\cup \{i_1, ..., i_n\}$. The indices $i_1, ..., i_n$ are consecutive, share the same $m_i$ (i.e., $\eta_{i_l}$ is the same for all $l\in[n]$), and also satisfy:
    \begin{equation}
        \min_{l\in [n]}\{i_l\} =   \max_{j \in \calJ_{k+1}}\{j\} + 1         
    \end{equation}
\end{enumerate}

We first prove the lemma for the special case of distinct risks, i.e. $\eta_i < \eta_{i+1}$ $\forall i$.

\textbf{Special Case}. The risks are distinct, then we must have $L=M$, with every two consecutive sets $\calJ_k$ and $\calJ_{k+1}$ differing by one index. Therefore we have $\calJ_k = \calJ_{k+1}\cup \{k\}$ and $\calJ_{M+1}=\varnothing$. Furthermore, we will get $\eta_{i}-\eta_{i-1} = q_{i}$ $\forall i \in [M]$ with $\eta_0 = 0$. Thus we can write:
\begin{align}
    \begin{split}
    h(\bm{\alpha}) = \sum_{k=1}^{M}\left( q_k \cdot \max_{j\in \calJ_k}\left\{\alpha_j\right\}\right)= \sum_{i=1}^M \left(\left( \eta_i - \eta_{i-1}\right) \cdot \max_{j\in \{i,...,M\}}\{\alpha_j\} \right)
    \end{split}
\end{align}

\textbf{General Case}. For the general case
we will have $L\leq M$ distinct risks $\eta_{i_1}< ...<\eta_{i_L}$ and $M-L$ repeated risks, where $i_1=1$. Thus we have $q_k = \eta_{i_{k}}-\eta_{i_{k-1}}$ $\forall k \in[L]$, and $\eta_{i_0}=\eta_0 = 0$ by definition. Using observations 3 and 4, we have that $\calJ_k = \{u_k,...,M\}$ for some index $u_k \in [M]$, with $u_1=1$. Thus we have $u_{k+1}-u_k-1\geq0$ to be the number of of consecutive repeated risks equal to $\eta_{i_k}$. Let $\{\calJ'_k\}$ be the $M-L$ index sets missing from $\{i\in [M]: \{i,...,M\}\}$, then we have:
\begin{align}
    \begin{split}
    h(\bm{\alpha}) &= \sum_{k=1}^{L}\left( q_k \cdot \max_{j\in \calJ_k}\left\{\alpha_j\right\}\right)\\ &= \sum_{k=1}^{L}\left( \left(\eta_{i_{k+1}}-\eta_{i_k}\right) \cdot \max_{j\in \{u_k, ..., M\}}\left\{\alpha_j\right\}\right)\\
    &= \sum_{k=1}^{L}\left( \left(\eta_{i_{k+1}}-\eta_{i_k}\right) \cdot \max_{j\in \{u_k, ..., M\}}\left\{\alpha_j\right\}\right) + \sum_{k=1}^{M-L}\left( 0 \cdot \max_{j\in \calJ'_k}\left\{\alpha_j\right\}\right)\\
    &\myeq{(a)} \sum_{i=1}^M \left(\left( \eta_i - \eta_{i-1}\right) \cdot \max_{j\in \{i,...,M\}}\{\alpha_j\} \right)
    \end{split}
\end{align}
where (a) holds due to the fact $\eta_i - \eta_{i-1} = 0$ for all the merged $M-L$ terms.
\end{proof}
\begin{lemma} \label{lemma:min-bound}
Given a sequence $\{\gamma_i\}_{i=0}^M$
such that $0=\gamma_0<\gamma_1 \leq ...\leq \gamma_M\leq1$, the vector $\bm{\alpha}^*=\tp{\left[\frac{1}{m}\ ...\ \frac{1}{m}\ 0\ ...\ 0\right]} \in \Delta_M$ is a solution to the following minimization problem:
\begin{equation}\label{eq:cor-opt}
    \min_{\bm{\alpha}\in \Delta_M} h(\bm{\alpha})=\min_{\bm{\alpha}\in \Delta_M} \sum_{i=1}^M \left(\left(\gamma_i-\gamma_{i-1}\right)\cdot \max_{j\in \{i,...,M\}}\{\alpha_j\}\right) = \frac{\gamma_m}{m}
 \end{equation}
 where $\frac{\gamma_m}{m} \leq \frac{\gamma_i}{i}$, $\forall i \in [M]$.
\end{lemma}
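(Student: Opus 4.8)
The plan is to prove the two claims separately: that the proposed $\bm{\alpha}^*$ attains the value $\gamma_m/m$, and that $\gamma_m/m$ is a global lower bound on $h$ over the simplex. It will be convenient to introduce the suffix-maxima $\beta_i = \max_{j\in\{i,\dots,M\}}\{\alpha_j\}$, so that $h(\bm{\alpha}) = \sum_{i=1}^M (\gamma_i - \gamma_{i-1})\beta_i$. Two elementary facts about $\beta$ drive the whole argument: the sequence is non-increasing ($\beta_1 \geq \cdots \geq \beta_M \geq 0$), and each term dominates the corresponding coordinate, $\beta_i \geq \alpha_i$, whence $\sum_{i=1}^M \beta_i \geq \sum_{i=1}^M \alpha_i = 1$.

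For achievability, I would substitute $\bm{\alpha}^*$ directly. Since its first $m$ coordinates equal $1/m$ and the rest vanish, the suffix max is $\beta_i = 1/m$ for $i \leq m$ and $\beta_i = 0$ for $i > m$. Plugging into $h$ and telescoping the differences $\gamma_i - \gamma_{i-1}$ over $i = 1, \dots, m$ gives $h(\bm{\alpha}^*) = \frac{1}{m}(\gamma_m - \gamma_0) = \gamma_m/m$, using $\gamma_0 = 0$. This also confirms that $\bm{\alpha}^* \in \Delta_M$ is feasible.

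The heart of the proof is the matching lower bound. I would rewrite $h$ by summation by parts, setting $\beta_{M+1} := 0$, to obtain $h(\bm{\alpha}) = \sum_{i=1}^M \gamma_i(\beta_i - \beta_{i+1})$, in which every increment $\beta_i - \beta_{i+1} \geq 0$ by monotonicity of $\beta$. Writing $\gamma_i = (\gamma_i/i)\cdot i$ and invoking the defining property $\gamma_i/i \geq \gamma_m/m$ for all $i$, each nonnegative term is bounded below, yielding $h(\bm{\alpha}) \geq \frac{\gamma_m}{m}\sum_{i=1}^M i(\beta_i - \beta_{i+1})$. A second telescoping identity collapses the weighted increments to the plain sum, $\sum_{i=1}^M i(\beta_i - \beta_{i+1}) = \sum_{i=1}^M \beta_i$, and the simplex bound $\sum_i \beta_i \geq 1$ established above then gives $h(\bm{\alpha}) \geq \gamma_m/m$ for every $\bm{\alpha} \in \Delta_M$. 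Combined with achievability, this proves that the minimum equals $\gamma_m/m$ and is attained at $\bm{\alpha}^*$.

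I expect the main obstacle to be bookkeeping rather than conceptual: correctly handling the boundary conventions $\gamma_0 = 0$ and $\beta_{M+1} = 0$ across the two telescoping manipulations, and pinning down $\sum_i \beta_i \geq 1$ from the coordinatewise domination $\beta_i \geq \alpha_i$. Conceptually the lower bound simply expresses that $h$ is a nonnegative combination of the increments $\beta_i - \beta_{i+1}$ and that concentrating all simplex mass at index $m$ minimizes the cost; making the per-index comparison $\gamma_i/i$ explicit through the factorization $\gamma_i = (\gamma_i/i)\cdot i$ is the one nonobvious algebraic step.
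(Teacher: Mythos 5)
Your proof is correct, and after the initial rewriting it takes a genuinely different and more elementary route than the paper. Both arguments begin by passing to the increments of the suffix maxima: the paper defines $g_i(\bm{\alpha}) = \max_{j\in\{i,\dots,M\}}\{\alpha_j\} - \max_{j\in\{i+1,\dots,M\}}\{\alpha_j\}$ and writes $h(\bm{\alpha}) = \tp{\bm{\gamma}}g(\bm{\alpha})$, which is exactly your summation by parts $h(\bm{\alpha}) = \sum_{i=1}^M \gamma_i(\beta_i - \beta_{i+1})$. From there the paper goes geometric: it partitions $\Delta_M$ into the $M!$ regions on which the coordinates have a fixed sorted order, shows each region is the convex hull of $M$ explicit extreme points of the form $\tp{[\frac{1}{k}\ \dots\ \frac{1}{k}\ 0\ \dots\ 0]}$ (up to permutation), observes that $g$ is linear on each region so the minimum of $h$ is attained at one of these $M!\times M$ vertices, and finally argues that among all vertices the unpermuted ones $\vc{p}^m_k$ win, giving $\min_k \gamma_k/k$. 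Your argument replaces all of that machinery with a three-line chain of inequalities: the factorization $\gamma_i \geq (\gamma_m/m)\, i$, the telescoping identity $\sum_{i=1}^M i(\beta_i-\beta_{i+1}) = \sum_{i=1}^M \beta_i$, and the simplex bound $\sum_i \beta_i \geq \sum_i \alpha_i = 1$; each step is verified correctly, including the boundary conventions $\gamma_0 = 0$ and $\beta_{M+1} = 0$. What you lose relative to the paper is structural information: the paper's vertex enumeration identifies the full finite candidate set of minimizers over each sorted region, and that same decomposition technique is reused almost verbatim in the proof of Theorem~\ref{thm:three-class} to produce the ten-point search set $\calA$ for $M=3$. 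Your proof delivers only the optimal value and one minimizer, which is precisely what the lemma claims, and does so with considerably less bookkeeping; it would be a clean drop-in replacement for this lemma, though not for the downstream enumeration results.
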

\begin{proof}
 We know that $h$ is a piece-wise linear convex function over a closed and convex set, which implies the existence of a global minimizer.

Define the mapping $g: \Delta_M \rightarrow [0,1]^M$ such that $\forall i\in [M]$:
\begin{equation}
    g_i(\bm{\alpha}) = \max_{j\in\{i,...,M\}}\{\alpha_j\} - \max_{j\in \{i+1,...,M\}}\{\alpha_j\} 
\end{equation}

We can re-write the function $h$ via a simple re-arrangement to obtain:
\begin{equation}
    h(\bm{\alpha}) = \sum_{i=1}^{M}\gamma_i \cdot \left(\max_{j\in\{i,...,M\}}\{\alpha_j\}-\max_{j\in\{i+1,...,M\}}\{\alpha_j\}\right) = \sum_{i=1}^{M}\gamma_i \cdot g_i(\bm{\alpha}) = \tp{\bm{\gamma}}g(\bm{\alpha})
\end{equation}

Define the decomposition over the probability simplex: $\Delta_M = \Delta^1_M \cup \Delta^2_M \cup ... \cup \Delta^{M!}_M$, where $\forall n \in [M!]$, $\exists i_1, i_2, ..., i_M$ such that $\forall \bm{\alpha}\in\Delta^n_M$ we have:
\begin{equation}
    \alpha_{i_1} \geq \alpha_{i_2} \geq \alpha_{i_3} \geq ... \geq \alpha_{i_M}
\end{equation}
In other words, $\Delta^n_M$ is the set of all probability vectors that share the same sorting indices. Since we have $M!$ ways to arrange $M$ numbers, the size of the decomposition will be $M!$.
We now make the following observations:

\textbf{1}. $\forall n$, $\Delta^n_M$ is a convex set. \textit{quick proof}: Let $\bm{\alpha},\bm{\beta} \in \Delta^n_M$, then $\exists i_1, i_2, ..., i_M$ such that $\alpha_{i_1}\geq ... \geq \alpha_{i_M}$ and $\beta_{i_1} \geq ... \geq \beta_{i_M}$. $\forall \lambda \in [0,1]$ we have $\vc{q} = \lambda \bm{\alpha} + (1-\lambda)\bm{\beta} \in \Delta_M$, since $\sum q_i = \sum \lambda \alpha_i + (1-\lambda)\beta_i = 1$ and $q_i\geq 0$. We also have $\forall l\in [M-1]$:
\begin{equation}
    q_{i_l} = \lambda \alpha_{i_l} + (1-\lambda)\beta_{i_l} \geq \lambda \alpha_{i_{l+1}} + (1-\lambda)\beta_{i_{l+1}} = q_{i_{l+1}}
\end{equation}

\textbf{2}. $\forall n$, $\exists \calP^n=\{ \vc{p}^n_1,...,\vc{p}^n_M\} \subset \Delta^n_M$ such that $\Delta^n_M = \calH(\calP^n)$, where $\calH(\calX)$ is the convex hull of the set of points $\calX$. \textit{quick proof}: Let $i_1,...,i_M$ be the sorted indices associated with an arbitrary subset $\Delta^n_M$. Construct the $M$ probability vectors as follows: $\forall k\in [M]$ $p^n_{k,j} = \frac{1}{k}$ if $j\in\{i_1,...,i_{k}\}$ else $p^n_{k,j} = 0$. It is easy to verify that $\vc{p}^n_k\in\Delta^n_M$, since $\sum_jp^n_{k,j}=k/k=1$, and $p^n_{k,i_1}\geq ... \geq p^n_{k,i_M}$. Since $\Delta^n_M$ is convex (Claim 1), we thus have that $\calH(\calP^n) \subseteq \Delta^n_M$. What is left is to show that $\Delta^n_M\subseteq\calH(\calP^n)$, which can be established if we show that $\forall \bm{\alpha}\in \Delta^n_M$, $\exists \bm{\lambda}\in \Delta_M$ such that $\bm{\alpha} = \sum_k \lambda_k \vc{p}^n_{k}$. We shall prove it by construction, specifically define:
\begin{equation}
    \lambda_{k} = k\cdot(\alpha_{i_k}-\alpha_{i_{k+1}}) \geq 0
\end{equation}
This induces a valid convex coefficient vector $\bm{\lambda}$, since $\sum_k \lambda_k = \sum_k (\alpha_{i_k} - \alpha_{i_{k+1}})\cdot k = \sum_k \alpha_{i_k}  = 1$. It is also easy to verify that $\alpha_{i_l} = \sum_k \lambda_k p^n_{k,i_l}$ for all indices $i_l\in[M]$, since:
\begin{align}
\begin{split}
    \sum_{k=1}^M \lambda_k p^n_{k,i_l} &= \frac{M}{M}\cdot(\alpha_{i_M}-0) +\frac{M-1}{M-1}\cdot (\alpha_{i_{M-1}}-\alpha_{i_M}) + ... + \frac{l}{l}\cdot(\alpha_{i_{l}}-\alpha_{i_{l+1}}) = \alpha_{i_l}
\end{split}
\end{align}
by construction of $\bm{\lambda}$ and $\calP^n$.

\textbf{3}. $\forall n$, the function $g$ is linear over $\bm{\alpha}\in \Delta^n_M$.
\textit{quick proof}: Define the maximum index $s(\bm{\alpha},i) = \argmax_{j\in\{i,...,M\}}\{\alpha_j\}$.
By definition, $\bm{\alpha} \in \Delta^n_M$ implies that $s(i)=s(\bm{\alpha},i)$ is independent of $\bm{\alpha}$. Therefore $\forall i\in [M]$ we have $g_i(\bm{\alpha}) = \alpha_{s(i)}-\alpha_{s(i+1)}$ with the slight abuse of notation $\alpha_{M+1}=0$. Therefore $\exists \mtx{G}^n\in\{-1,0,1\}^{M\times M}$ such that $g(\bm{\alpha}) = \mtx{G}^n \bm{\alpha}$ for all $\bm{\alpha}\in\Delta^n_M$.

Combining observations 1,2\&3, we can re-write the original optimization problem as follows:
\begin{align}
    \label{eq:h-123}
    \begin{split}
    \min_{\bm{\alpha}\in \Delta_M} h(\bm{\alpha})&=  \min_{\bm{\alpha}\in \Delta^1_M \cup ... \cup \Delta^{M!}_M} \tp{\bm{\gamma}}g(\bm{\alpha}) \\
    &= \min_{n\in[M!]}\left\{\min_{\bm{\alpha}\in\Delta^n_M}\tp{\bm{\gamma}}g(\bm{\alpha})\right\}\\
    &= \min_{n\in[M!]}\left\{\min_{\bm{\alpha}\in\calH(\calP^n)}\tp{\bm{\gamma}}\mtx{G}^n\bm{\alpha}\right\} \\
    &\myeq{(a)} \min_{n\in[M!]}\left\{\min_{\vc{p}\in\calP^n}\tp{\bm{\gamma}}\mtx{G}^n\vc{p}\right\} \\
    &= \min_{n\in[M!], k\in [M]} \tp{\bm{\gamma}}g(\vc{p}^n_k)
    \end{split}
\end{align}
where (a) holds because the minimum of a linear function over the convex hull of a set of points $\calX$ is obtained at one of the points in $\calX$. 

Thus, to solve the original optimization problem, we only need to evaluate $M!$ linear functions with $M$ vectors each, and pick the one that achieves the smallest value. Finally, we will now show that the search space can be significantly reduced from $M!\times M$ to $M$ possible solutions. 

Let $\Delta^n_M$ be an arbitrary subset of $\Delta_M$ whose associated sorted indices are $i^n_1, i^n_2, ..., i^n_M$, and $\calP^n=\{\vc{p}^n_k\}_k$ are the associated extreme points. We first note that, $\forall k\in [M]$, $g(\vc{p}^n_k) = \tp{\left[0\ ...\ 0\ \frac{1}{k}\ 0\ ...\ 0\right]}$ with $j^n_k = \max\{i^n_1,...,i^n_k\}$ is the non-zero index. Therefore, we have that $\forall n,k$:
\begin{equation}
    \label{eq:h-eval}
    h(\vc{p}^n_k) = \tp{\bm{\gamma}}g(\vc{p}^n_k)=\frac{\gamma_{j^n_k}}{k}
\end{equation}
Equation \eqref{eq:h-eval} reveals that, amongst all vectors $\vc{p}^n_k$ with fixed $k$, the smallest error is always achieved by the subset $n$ whose associated $j^n_k$ index is the smallest, since the robust errors are always assumed to be sorted. Furthermore, the smallest value that $j^n_k$ can achieve is $k$, since it is the largest index amongst $k$ arbitrary indices from $[M]$. Therefore, let $\Delta^m_M$ be the subset whose sorting indices are $i_k=k$, i.e. $\bm{\alpha}\in\Delta^m_M$ implies $\alpha_1\geq...\geq\alpha_M$. For this subset, we will always have $j^m_k=\max\{1,...,k\}=k$ which implies that $\forall n\in[M!]$ and $\forall k\in[M]$:
\begin{equation}
    \label{eq:h-ineq}
    h(\vc{p}^n_k) = \frac{\gamma_{j^n_k}}{k} \geq \frac{\gamma_{k}}{k} = h(\vc{p}^m_k)
\end{equation}
where $\vc{p}^m_k = \tp{\left[\frac{1}{k}\ ...\ \frac{1}{k}\ 0\ ...\ 0\right]}$. Combining \eqref{eq:h-123}\&\eqref{eq:h-ineq} we obtain:
\begin{align}
    \begin{split}
    \min_{\bm{\alpha}\in \Delta_M} h(\bm{\alpha}) = \min_{k\in [M]} \tp{\bm{\gamma}}g(\vc{p}^m_k) = \min_{k\in[M]} \frac{\gamma_k}{k} = \frac{\gamma_{k^*}}{k^*}
    \end{split}
\end{align}
which can be achieved using $\bm{\alpha}^*=\tp{\left[\frac{1}{k^*}\ ...\ \frac{1}{k^*}\ 0\ ...\ 0\right]}$.
\end{proof}
\subsubsection{Main Proof}
We now restate and prove Theorem~\ref{thm:bounds}:
\begin{theorem*}[Restated]
For a perturbation set $\calS$, data distribution $\calD$, and collection of $M$ classifiers $\calF$ with individual adversarial risks $\eta_i$ ($i\in [M]$) such that $0<\eta_1\leq ... \leq \eta_M\leq 1$, we have $\forall \bm{\alpha}\in \Delta_M$:
\begin{equation}\label{eqn:bounds}
       \min_{k\in [M]} \left\{\frac{\eta_k}{k}\right\}  \leq  \eta(\bm{\alpha}) \leq \eta_M  
\end{equation}
Both bounds are tight in the sense that if all that is known about the setup $\calF$, $\calD$, and $\calS$ is $\{\eta_i\}_{i=1}^M$, then there exist no tighter bounds. Furthermore, the upper bound is always met if $\bm{\alpha}=\vc{e}_M$, and the lower bound (if achievable) can be met if $\bm{\alpha}=\tp{\left[\frac{1}{m}\ ...\ \frac{1}{m}\ 0\ ...\ 0\right]}$, where $m=\argmin_{k\in [M]} \{\frac{\eta_k}{k}\}$.
\end{theorem*}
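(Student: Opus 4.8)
My plan is to obtain the upper bound directly from convexity. By Proposition~\ref{prop:eta}, $\eta$ is convex on $\Delta_M$, and since $\Delta_M = \calH(\{\vc{e}_i\}_{i=1}^M)$, Lemma~\ref{lemma:convex-ub} tells me that the maximum of $\eta$ over the simplex is attained at one of the vertices $\vc{e}_i$. Hence $\eta(\bm{\alpha}) \leq \max_{i\in[M]} \eta(\vc{e}_i) = \max_{i\in[M]}\eta_i = \eta_M$ for all $\bm{\alpha}\in\Delta_M$, where the last equality uses the assumed sorting $\eta_1\leq\dots\leq\eta_M$; equality clearly holds at $\bm{\alpha}=\vc{e}_M$.

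\textbf{Lower bound.} For the lower bound I would simply chain the two lemmas established above. Lemma~\ref{lemma:bound-eta} (with $\gamma_i = \eta_i$) gives $\eta(\bm{\alpha}) \geq h(\bm{\alpha}) = \sum_{i=1}^M (\eta_i - \eta_{i-1})\max_{j\in\{i,\dots,M\}}\{\alpha_j\}$ for every $\bm{\alpha}$, and Lemma~\ref{lemma:min-bound} evaluates $\min_{\bm{\alpha}\in\Delta_M} h(\bm{\alpha}) = \eta_m/m$ with $m = \argmin_{k\in[M]}\{\eta_k/k\}$, attained at $\bm{\alpha}^* = \tp{[\frac{1}{m}\ \dots\ \frac{1}{m}\ 0\ \dots\ 0]}$. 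Combining, $\eta(\bm{\alpha}) \geq h(\bm{\alpha}) \geq \min_{\bm{\alpha}'} h(\bm{\alpha}') = \min_{k\in[M]}\{\eta_k/k\}$ for all $\bm{\alpha}$. Note that the minimizer of $h$ need not minimize $\eta$ itself; the value $\eta_m/m$ is attained by $\eta$ only when the first inequality is an equality, which is exactly the meaning of the phrase ``if achievable.''

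\textbf{Tightness.} The upper bound is trivially tight, since $\eta(\vc{e}_M)=\eta_M$ is realized for \emph{every} setup. For the lower bound I would exhibit a setup, depending only on $\{\eta_i\}$, that attains $\eta_m/m$; this reuses the auxiliary classifiers from the interpretation of Lemma~\ref{lemma:bound-eta}. Concretely, take a discrete distribution with $M$ point-types (plus residual mass $1-\eta_M$ on points no classifier errs on), where type $k$ occurs with probability $p_k = \eta_k - \eta_{k-1}$. Choose $\calS$ (say an $\ell_\infty$ ball) large enough that around each type-$k$ point it splits into pairwise disjoint sub-regions $\calG_{k,i}$, one for each $i\in\{k,\dots,M\}$, and define the (arbitrary) classifiers $f'_i$ to err precisely on $\bigcup_k \calG_{k,i}$ and nowhere else. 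Then the adversarial regions are disjoint, the configuration of type $k$ is $\calU_k = \{\vc{e}_i: i\in\{k,\dots,M\}\}$, so $\eta'(\bm{\alpha}) = h(\bm{\alpha})$ holds with equality and $\min_{\bm{\alpha}}\eta' = \eta_m/m$ by Lemma~\ref{lemma:min-bound}. A quick check confirms the prescribed individual risks: classifier $i$ errs on type $k$ iff $k\leq i$, so $\eta'_i = \sum_{k\leq i} p_k = \eta_i$.

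I expect the tightness construction to be the main obstacle: the two inequalities are essentially one-line consequences of the preceding lemmas, but showing that no tighter bound exists requires actually \emph{realizing} the disjoint-adversarial-region configuration with genuine classifiers, a distribution, and a perturbation set having the prescribed individual risks. Since the classifiers may be arbitrary maps and $\calS$ need only be closed and bounded, this is an existence/construction argument rather than a computation, and the care lies in verifying that the sub-regions $\calG_{k,i}$ can be made simultaneously disjoint while keeping each individual risk exactly $\eta_i$.
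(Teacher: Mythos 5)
Your proposal is correct and follows essentially the same route as the paper: upper bound from convexity of $\eta$ plus Lemma~\ref{lemma:convex-ub} over the simplex vertices, lower bound by chaining Lemma~\ref{lemma:bound-eta} with Lemma~\ref{lemma:min-bound}, and tightness via an explicit setup in which each classifier's adversarial region is disjoint from the others so that $\eta$ coincides with $h$. The only (cosmetic) difference is in the tightness construction: the paper places probability mass $\eta_k-\eta_{k-1}$ on nested sets $\calT_1\subseteq\cdots\subseteq\calT_M$ and assigns each classifier $f_i$ a single dedicated perturbation $\bm{\delta}_i\in\calS$, whereas you use discrete point-types of mass $\eta_k-\eta_{k-1}$ with pairwise disjoint sub-regions $\calG_{k,i}$ of $\calS$ — both realize exactly the same configuration $\calU_k=\{\vc{e}_i: i\geq k\}$ and the same individual risks.
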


\begin{proof} We first prove the upper bound and then the lower bound.

\textbf{Upper bound}: From Proposition~\ref{prop:eta}, we have that $\eta$ is convex in $\bm{\alpha} \in \Delta_M$. Using  $\Delta_M = \calH\left(\{\vc{e}_1, ..., \vc{e}_M\}\right)$ and applying Lemma~\ref{lemma:convex-ub}, we get $\forall \bm{\alpha}\in\Delta_M$:
\begin{equation}
    \eta(\bm{\alpha})\leq \max_{\bm{\alpha}\in \Delta_M} \eta(\bm{\alpha}) = \max_{i\in [M]} \eta(\vc{e}_i) = \eta_M
\end{equation}
This establishes the upper bound in \eqref{eqn:bounds}. The bound is tight, since $\eta(\vc{e}_M) = \eta_M$ is achievable.

\textbf{Lower bound}: From Lemmas~\ref{lemma:bound-eta}\&\ref{lemma:min-bound}, we establish $\forall \bm{\alpha}\in \Delta_M$, the following result:
\begin{equation}
    \eta(\bm{\alpha}) \geq \sum_{i=1}^M \left(\left( \eta_i - \eta_{i-1}\right) \cdot \max_{j\in \{i,...,M\}}\{\alpha_j\} \right) = h(\bm{\alpha}) \geq \min_{\bm{\alpha}\in \Delta_M} h(\bm{\alpha}) = h(\bm{\alpha}^*) = \frac{\eta_m}{m}
\end{equation}
where $m = \argmin_{k\in[M]} \{\frac{\eta_k}{k}\}$ and $\bm{\alpha}^*=\tp{\left[\frac{1}{m}\ ...\ \frac{1}{m}\ 0\ ...\ 0\right]}$. This establishes the lower bound in \eqref{eqn:bounds}. 

The bound is tight, since for fixed $0<\eta_1\leq ...\leq \eta_M\leq 1$, we can construct $\calF$, $\calS$, and $\calD$ such that $\eta(\bm{\alpha}) = h(\bm{\alpha})$ and $\forall i\in[M]: \eta(\vc{e}_i)=h(\vc{e}_i)=\eta_i$, as shown next.

Let $\calS \subset \reals^d$ be any closed and bounded set containing at least $M$ distinct vectors $\{\bm{\delta}_j\}_{j=1}^M \subseteq \calS$. Let $\calD$ be any valid distribution over $\calR = \reals^d \times [C]$ such that $\forall i \in [M]$: $\prob{\tuple{z} \in \calT_i} = \eta_i$, $\prob{\tuple{z} \in \calT_{M+1}} = 1$, and $\varnothing=\calT_0\subset \calT_1 \subseteq \calT_2 \subseteq ... \subseteq \calT_M \subseteq \calT_{M+1} \subset \calR$. Finally, we construct classifiers $f_i$ ($\forall i\in [M]$) to satisfy the following assignment $\forall \tuple{z}\in \calT_{M+1}$:
\begin{equation}
     f_i(\vc{x}+\bm{\delta}) = y \ \ \ \forall \bm{\delta}\in \calS\setminus\{\bm{\delta}_i\} \ \ \ \ \ \text{\&} \ \ \ \ \  f_i(\vc{x}+\bm{\delta}_i) = 
     \begin{cases}
      y & \text{if $(\vc{x},y) \notin \calT_i$}\\
      y' \neq y & \text{otherwise}
    \end{cases}  
\end{equation}

 i.e., the $i$-th classifier decision $f_i(\vc{x}+\bm{\delta})$ is incorrect only if $\bm{\delta}=\bm{\delta}_i$ and $\tuple{z}\in\calT_i$.

Given the above construction, we establish 
\begin{align}
    \begin{split}
        \eta(\bm{\alpha}) &= \means{\tuple{z}\sim \calD}{\max_{\bm{\delta}\in\calS}{ \sum_{i=1}^M \alpha_i \identityf{f_i(\vc{x}+\bm{\delta})\neq y}}} \\
        &\myeq{(a)} \int_{\tuple{z}\in\calT_{M+1}} p_z(\tuple{z}) \cdot \left( \max_{\bm{\delta}\in\calS}{ \sum_{i=1}^M \alpha_i \identityf{f_i(\vc{x}+\bm{\delta})\neq y}}\right) \,d\tuple{z}\\
        &\myeq{(b)} \sum_{k=1}^M \int_{\tuple{z}\in\calT_k\setminus\calT_{k-1}} p_z(\tuple{z}) \cdot \left( \max_{\bm{\delta}\in\calS}{ \sum_{i=1}^M \alpha_i \identityf{f_i(\vc{x}+\bm{\delta})\neq y}}\right) \,d\tuple{z}\\
        &\myeq{(c)} \sum_{k=1}^M \int_{\tuple{z}\in\calT_k\setminus\calT_{k-1}} p_z(\tuple{z}) \cdot \left( \max_{j \in \{k,..,M\}} \{\alpha_j\}\right) \,d\tuple{z} \\
        &= \sum_{k=1}^M \left[\left( \max_{j \in \{k,..,M\}} \{\alpha_j\}\right) \int_{\calT_k\setminus\calT_{k-1}} p_z(\tuple{z})  \,d\tuple{z}\right] \\
        &\myeq{(d)} \sum_{k=1}^M \left[ (\eta_i - \eta_{i-1}) \cdot \max_{j \in \{k,..,M\}} \{\alpha_j\}\right] = h(\bm{\alpha})
    \end{split}
\end{align}
where: (a) holds because $\prob{\tuple{z}\in \calT_{M+1}} = 1$; (b) holds because we can partition  $\calT_{M+1}$ into $M+1$ sets:
$\calT_1\cup (\calT_2\setminus\calT_{1}) \cup \ldots \cup (\calT_{M+1}\setminus\calT_{M})$, and because the max term is 0 $\forall\tuple{z}\in\calT_{M+1}\setminus\calT_{M}$; (c) holds by construction of $\calF$ and $\calS$, and (d) holds since $\eta_i = \prob{\tuple{z}\in\calT_i}$ and $\calT_i \subseteq \calT_{i+1}$.

\end{proof}

\subsection{Proof of Theorem~\ref{thm:three-class}}
In this section, we derive a simplified search strategy for finding the optimal sampling probability for the special case of $M=3$, akin to Section~\ref{ssec:two}. 
\begin{theorem*}[Restated] Define $\calA \subset \Delta_3$ to be the set of the following vectors:
\begin{equation}
    \calA = \left\{ \begin{bmatrix}
1\\
0 \\
0
\end{bmatrix}, \begin{bmatrix}
0\\
1 \\
0
\end{bmatrix},
\begin{bmatrix}
0\\
0 \\
1
\end{bmatrix},
\begin{bmatrix}
\nicefrac{1}{2}\\
\nicefrac{1}{2} \\
0
\end{bmatrix},
\begin{bmatrix}
0\\
\nicefrac{1}{2} \\
\nicefrac{1}{2}
\end{bmatrix},
\begin{bmatrix}
\nicefrac{1}{2} \\
0\\
\nicefrac{1}{2}
\end{bmatrix},
\begin{bmatrix}
\nicefrac{1}{2} \\
\nicefrac{1}{4}\\
\nicefrac{1}{4}
\end{bmatrix}
,
\begin{bmatrix}
\nicefrac{1}{4} \\
\nicefrac{1}{2}\\
\nicefrac{1}{4}
\end{bmatrix},
\begin{bmatrix}
\nicefrac{1}{4} \\
\nicefrac{1}{4}\\
\nicefrac{1}{2}
\end{bmatrix},
\begin{bmatrix}
\nicefrac{1}{3} \\
\nicefrac{1}{3}\\
\nicefrac{1}{3}
\end{bmatrix}
\right\}
\end{equation}
Then for any three classifiers $f_1$, $f_2$, and $f_3$, perturbation set $\calS \subset \reals^d$, and data distribution $\calD$, we have:
\begin{equation}\label{eq:thm3-min-v2}
    \min_{\bm{\alpha} \in \Delta_3} \eta(\bm{\alpha})  =  \min_{\bm{\alpha} \in \calA} \eta(\bm{\alpha})
\end{equation}
The set $\calA$ is optimal, in the sense that there exist no smaller set $\calA'$ such that \eqref{eq:thm3-min-v2} holds.
\end{theorem*}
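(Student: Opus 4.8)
The plan is to exploit the explicit analytic form of $\eta$ from Proposition~\ref{prop:eta} together with the fact that, for $M=3$, the simplex $\Delta_3$ is a two-dimensional triangle, so the entire problem lives in the plane $\alpha_1+\alpha_2+\alpha_3=1$. By Proposition~\ref{prop:eta}, $\eta(\bm{\alpha}) = \sum_k p_k \max_{\vc{u}\in\calU_k}\{\tp{\vc{u}}\bm{\alpha}\}$ is a convex combination of piece-wise linear convex ``configuration functions'' $\phi_{\calU}(\bm{\alpha}) = \max_{\vc{u}\in\calU}\{\tp{\vc{u}}\bm{\alpha}\}$ with $\calU\subseteq\{0,1\}^3$. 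The key observation is that the set of hyperplanes on which any such $\phi_{\calU}$ can be non-differentiable is fixed and finite, independent of $\vc{p}$ and of $\calF,\calD,\calS$: a kink of $\phi_{\calU}$ occurs only where two maximizers tie, i.e. on $\tp{(\vc{u}-\vc{v})}\bm{\alpha}=0$ for $\vc{u},\vc{v}\in\{0,1\}^3$, whose normal lies in $\{-1,0,1\}^3$.

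First I would reduce this family of kink hyperplanes to the interior of $\Delta_3$. Since $\bm{\alpha}\geq \vc{0}$ on the simplex, any binary vector dominated componentwise by another is redundant inside the max, so only incomparable pairs matter; substituting the constraint $\sum_i\alpha_i=1$ then collapses the relevant normals to two families. Concretely, I would show that the only kink lines meeting the open simplex are the three medians $\alpha_i=\alpha_j$ (normals $\vc{e}_i-\vc{e}_j$) and the three lines $\alpha_i=\tfrac12$ (coming from $\alpha_i=\sum_{j\neq i}\alpha_j$, with normals such as $(1,-1,-1)$); normals with a single nonzero entry or with all-positive entries only vanish on the boundary faces of $\Delta_3$.

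Next I would invoke convexity. Cutting the triangle $\Delta_3$ by these six lines yields a polyhedral subdivision on each cell of which every $\phi_{\calU_k}$, hence $\eta$ itself, is affine; the minimum of an affine function over a convex cell is attained at a vertex, so $\min_{\Delta_3}\eta$ is attained at a $0$-cell of the arrangement (the $M=3$ analogue of the vertex argument behind Lemma~\ref{lemma:convex-ub}). I would then enumerate all $0$-cells: the three triangle corners $\vc{e}_i$; the three edge midpoints, where two half-lines meet on the boundary; the centroid $(\tfrac13,\tfrac13,\tfrac13)$, where the medians are concurrent; and the three points $(\tfrac12,\tfrac14,\tfrac14)$ and permutations, where $\alpha_i=\tfrac12$ meets the opposite median $\alpha_j=\alpha_k$. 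This gives exactly the ten vectors of $\calA$, establishing \eqref{eq:thm3-min-v2}.

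Finally, for the optimality of $\calA$ (that no proper subset suffices) I would exhibit, for each $\bm{\alpha}_0\in\calA$, an instance in which $\bm{\alpha}_0$ is the \emph{unique} minimizer, so it cannot be dropped. This rests on a realizability step, in the spirit of the tightness construction in the proof of Theorem~\ref{thm:bounds}, showing that any prescribed $\vc{p}$ over a chosen family $\{\calU_k\}$ arises from some $\calF,\calD,\calS$ by placing point masses at data points whose perturbation sets induce exactly those configurations. The corners come from one dominant classifier; the centroid from three mutually disjoint equal-risk classifiers (the $m=3$ case of Theorem~\ref{thm:bounds}); the edge midpoints from the two-classifier disjoint case of Theorem~\ref{thm:two-classifiers} with a third, useless classifier; and the asymmetric points $(\tfrac12,\tfrac14,\tfrac14)$ from combining a configuration $\{\vc{e}_1,(0,1,1)\}$ (forcing $\alpha_1=\tfrac12$) with a small weight on $\{\vc{e}_2,\vc{e}_3\}$ (forcing $\alpha_2=\alpha_3$). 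The main obstacle I anticipate is twofold: carrying out the kink-hyperplane reduction and vertex enumeration rigorously enough to be certain no interior $0$-cell is missed, and, for the optimality direction, verifying \emph{uniqueness} of the minimizer at the asymmetric points via the subgradient optimality condition rather than mere attainment.
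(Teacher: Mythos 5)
Your proposal is correct and takes essentially the same route as the paper: the paper likewise cuts $\Delta_3$ along the three medians $\alpha_i=\alpha_j$ (via its sorting decomposition) and the three lines $\alpha_i=\nicefrac{1}{2}$ (the only kink of $\max\{\alpha_j+\alpha_k,\alpha_i\}$ inside a sorting sector), observes that $\eta$ is affine on each resulting cell, reduces the minimization to the cells' extreme points---exactly the ten vectors of $\calA$---and proves minimality by exhibiting ten explicit $\vc{p}$-vectors in \eqref{eq:eta-m3} under which each element of $\calA$ is the unique global minimizer (e.g.\ $\tfrac{1}{2}\max\{\alpha_2+\alpha_3,\alpha_1\}+\tfrac{1}{2}\max\{\alpha_2,\alpha_3\}$ for $\tp{[\nicefrac{1}{2},\nicefrac{1}{4},\nicefrac{1}{4}]}$, matching your own construction). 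Your kink-line-arrangement bookkeeping versus the paper's explicit convex-hull identities (e.g.\ $\Delta_3^{1,1}=\calH\left(\left\{\tp{[1,0,0]},\tp{[\nicefrac{1}{2},\nicefrac{1}{2},0]},\tp{[\nicefrac{1}{2},\nicefrac{1}{4},\nicefrac{1}{4}]}\right\}\right)$), and your explicit attention to realizability and global (not merely within-$\calA$) uniqueness in the minimality step, are differences of presentation rather than of substance.
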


\begin{proof}
Similar to \eqref{eq:eta-m2}, we can enumerate all possible classifiers/data-point configurations around $\calS$, which allows us to write $\forall \bm{\alpha} \in \Delta_3$:
\begin{align}\label{eq:eta-m3}
    \begin{split}
        \eta(\bm{\alpha}) &= p_1 \cdot \max \{\alpha_1, \alpha_2, \alpha_3\} \\
        &+ p_2 \cdot \max \{\alpha_1 +  \alpha_2, \alpha_3\} + p_3 \cdot \max \{\alpha_2 +    \alpha_3, \alpha_1\} + p_4 \cdot \max \{\alpha_1 +  \alpha_3, \alpha_2\} \\
        &+ p_5 \cdot \max\{\alpha_1, \alpha_2\} + p_6 \cdot \max\{\alpha_2, \alpha_3\} +  p_7 \cdot \max\{\alpha_1, \alpha_3\} \\
        &+ p_8 \cdot \alpha_1 + p_9 \cdot \alpha_2 + p_{10}\cdot \alpha_3 + p_{11} \cdot 1 + p_{12} \cdot 0
    \end{split}
\end{align}
where $\vc{p}\in \Delta_{12}$. 
We shall use the same technique used in the proof of Lemma~\ref{lemma:min-bound}. We can decompose $\Delta_3$ into 6 such subsets $\Delta_3^1, ..., \Delta_3^6$, such that each subset contains vectors that share the same sorting indices. These subsets are convex, and they can be represented as the convex hull of three vectors. Due to the symmetry of the problem, we shall focus on one subset $\Delta_3^1 = \calH\left(\left\{\tp{[1,0,0]}, \tp{[\nicefrac{1}{2},\nicefrac{1}{2},0]}, \tp{[\nicefrac{1}{3},\nicefrac{1}{3},\nicefrac{1}{3}]}\right\}\right)$ where $\forall \bm{\alpha} \in \Delta_3^1$, we have: $\alpha_1 \geq \alpha_2 \geq \alpha_3$. Notice that for any $\bm{\alpha} \in \Delta_3^1$, all the terms in \eqref{eq:eta-m3} become linear in $\bm{\alpha}$, except for the term $\max\{\alpha_2+\alpha_3,\alpha_1\}$. Therefore, we can further decompose $\Delta_3^1$ into two convex subsets $\Delta_3^{1,1}$ and $\Delta_3^{1,2}$, such that:
\begin{equation}
    \Delta_3^{1,1} = \{\bm{\alpha} \in \Delta_3^1: \alpha_1 \geq \alpha_2 + \alpha_3\} \ \ \ \ \Delta_3^{1,2} = \{\bm{\alpha} \in \Delta_3^1: \alpha_1 \leq \alpha_2 + \alpha_3\} 
\end{equation}
and $\eta$ is linear over both subsets (but not their union).

\textbf{Claim}: we have:
\begin{align}\label{eq:claim}
\begin{split}
     \Delta_3^{1,1} &= \calH\left(\left\{\tp{[1,0,0]}, \tp{[\nicefrac{1}{2},\nicefrac{1}{2},0]}, \tp{[\nicefrac{1}{2},\nicefrac{1}{4},\nicefrac{1}{4}]}\right\}\right) \\
    \Delta_3^{1,2} &= \calH\left(\left\{\tp{[\nicefrac{1}{3},\nicefrac{1}{3},\nicefrac{1}{3}]}, \tp{[\nicefrac{1}{2},\nicefrac{1}{2},0]}, \tp{[\nicefrac{1}{2},\nicefrac{1}{4},\nicefrac{1}{4}]}\right\}\right)   
\end{split}
\end{align}
Since both $\Delta_{3}^{1,1}$ and $\Delta_{3}^{1,2}$ are convex, it is enough to show that:
\begin{align}
\begin{split}
     \Delta_3^{1,1} &\subseteq \calH\left(\left\{\tp{[1,0,0]}, \tp{[\nicefrac{1}{2},\nicefrac{1}{2},0]}, \tp{[\nicefrac{1}{2},\nicefrac{1}{4},\nicefrac{1}{4}]}\right\}\right) \\
    \Delta_3^{1,2} &\subseteq \calH\left(\left\{\tp{[\nicefrac{1}{3},\nicefrac{1}{3},\nicefrac{1}{3}]}, \tp{[\nicefrac{1}{2},\nicefrac{1}{2},0]}, \tp{[\nicefrac{1}{2},\nicefrac{1}{4},\nicefrac{1}{4}]}\right\}\right)   
\end{split}
\end{align}
for \eqref{eq:claim} to hold. For all $\bm{\alpha}\in \Delta_3^{1,1}$, define:
\begin{equation}
    \lambda_1 = \alpha_1 - \alpha_2 - \alpha_3 \geq 0, \ \ \ \ \lambda_2 = 2\cdot(\alpha_2 - \alpha_3) \geq 0, \ \ \& \ \ \lambda_3 = 4\alpha_3 \geq 0
\end{equation}
Then we always have:
\begin{equation}
    \bm{\alpha} = \lambda_1 \cdot \begin{bmatrix}
    1\\ 0 \\0
    \end{bmatrix} + \lambda_2 \cdot \begin{bmatrix}
    \nicefrac{1}{2}\\ \nicefrac{1}{2} \\0
    \end{bmatrix} + \lambda_1 \cdot \begin{bmatrix}
    \nicefrac{1}{2}\\ \nicefrac{1}{4} \\\nicefrac{1}{4}
    \end{bmatrix} 
\end{equation}
where it is easy to verify that $\lambda_1 + \lambda_2 + \lambda_3 = 1$. The same can be shown for any $\bm{\alpha}\in \Delta_3^{1,2}$, using the following:
\begin{equation}
    \lambda_1 = 2\cdot (\alpha_2 - \alpha_3)\geq 0, \ \ \ \ \lambda_2 = 4\cdot(\alpha_1 - \alpha_2) \geq 0, \ \ \& \ \ \lambda_3 = 3 \cdot(\alpha_2 + \alpha_3 - \alpha_1) \geq 0
\end{equation}
which establishes the claim in \eqref{eq:claim}.

Using \eqref{eq:claim} and the linearity of $\eta$ on each subset, we can write:
\begin{align}
\begin{split}
\min_{\bm{\alpha} \in \Delta_3^1} \eta(\bm{\alpha}) &= \min\left\{\min_{\bm{\alpha}\in \Delta_3^{1,1}} \eta(\bm{\alpha}), \min_{\bm{\alpha}\in \Delta_3^{1,2}} \eta(\bm{\alpha})   \right\} \\&=   \min\left\{\eta\left(\begin{bmatrix}
    1\\ 0 \\0
    \end{bmatrix}\right),\eta\left(\begin{bmatrix}
    \nicefrac{1}{2}\\ \nicefrac{1}{2} \\0
    \end{bmatrix}\right),\eta\left(\begin{bmatrix}
    \nicefrac{1}{2}\\ \nicefrac{1}{4} \\\nicefrac{1}{4}
    \end{bmatrix}\right),\eta\left(\begin{bmatrix}
    \nicefrac{1}{3}\\ \nicefrac{1}{3} \\\nicefrac{1}{3}
    \end{bmatrix}\right)  \right\}  
\end{split}
\end{align}
Finally, repeating this procedure for the remainder 5 sets $\Delta_{3}^2, ..., \Delta_3^{6}$ establishes \eqref{eq:thm3-min}. To show that the set $\calA$ is minimal, we provide 10 constructions of $\eta$ using the $\vc{p}$ vector in \eqref{eq:eta-m3} such that the $i^{\text{th}}$ vector $\bm{\alpha} \in \calA$ is a unique (amongst $\calA$) global optimum of $\eta$ characterized by the $i^{\text{th}}$ $\vc{p}$ vector (listed below): 
\begin{align}
\begin{split}
    \vc{p}_1 &= \tp{\left[0\ 0\ 0\ 0\ 0\ 0\ 0\ 0\ \frac{1}{2}\ \frac{1}{2}\ 0\ 0\right]} \\
    \vc{p}_2 &= \tp{\left[0\ 0\ 0\ 0\ 0\ 0\ 0\ \frac{1}{2}\ 0\ \frac{1}{2}\ 0\ 0\right]} \\
    \vc{p}_3 &= \tp{\left[0\ 0\ 0\ 0\ 0\ 0\ 0\ \frac{1}{2}\ \frac{1}{2}\ 0\ 0\ 0\right]} \\
    \vc{p}_4 &= \tp{\left[0\ 0\ 0\ 0\ \frac{1}{2}\ 0\ 0\ 0\ 0\ \frac{1}{2}\ 0\ 0\right]} \\
    \vc{p}_5 &= \tp{\left[0\ 0\ 0\ 0\ 0\ \frac{1}{2}\ 0\ \frac{1}{2}\ 0\ 0\ 0\ 0\right]} \\
    \vc{p}_6 &= \tp{\left[0\ 0\ 0\ 0\ 0\ 0\ \frac{1}{2}\ 0\ \frac{1}{2}\ 0\ 0\ 0\right]} \\
    \vc{p}_7 &= \tp{\left[0\ 0\ \frac{1}{2}\ 0\ 0\ \frac{1}{2}\ 0\ 0\ 0\ 0\ 0\ 0\right]} \\
    \vc{p}_8 &= \tp{\left[0\ 0\ 0\ \frac{1}{2}\ 0\ 0\ \frac{1}{2}\ 0\ 0\ 0\ 0\ 0\right]} \\
    \vc{p}_9 &= \tp{\left[0\ \frac{1}{2}\ 0\ 0\ \frac{1}{2}\ 0\ 0\ 0\ 0\ 0\ 0\ 0\right]} \\
    \vc{p}_{10} &= \tp{\left[1\ 0\ 0\ 0\ 0\ 0\ 0\ 0\ 0\ 0\ 0\ 0\right]} \\
\end{split}
\end{align}
\end{proof}

\subsection{Proof of Theorem~\ref{thm:osp}}
\label{app:proof-osp}
First, we state the classic result on the convergence of the projected sub-gradient method for convex minimization (\cite{shor2012minimization}):
\begin{lemma}[Projected Sub-gradient Method]\label{lemma:subgradient} Let $h: \reals^d \rightarrow \reals$ be a a convex and sub-differentiable function. Let $\calC \subset \reals^d$ be a convex set. For iterations $t=1,..,T$, define the projected sub-gradient method:
\begin{equation}
    \vc{x}^{(t+1)} = \proj{\calC}{\vc{x}^{(t)} - a_t \vc{g}^{(t)}} 
\end{equation}
\begin{equation}
    h_{\normalfont\text{best}}^{(t+1)} = \min \left\{h_{\normalfont\text{best}}^{(t)},h(\vc{x}^{(t+1)})\right\}
\end{equation}
where $a_t = \nicefrac{a}{t}$ for some positive $a>0$, $\vc{x}^{(1)} \in \calC$ is an arbitrary initial guess, $h_{\normalfont\text{best}}^{(1)} = h(\vc{x}^{(1)})$, and $\vc{g}^{(t)} \in \partial h(\vc{x}^{(t)})$ is a sub-gradient of $h$ at $\vc{x}^{(t)}$. Let $t_{\normalfont\text{best}}$ designate the best iteration index thus far. Then, if $h$ has norm-bounded sub-gradients: $\pnorm{\vc{g}}{2} \leq G$ for all $\vc{g}\in \partial h(\vc{x})$ and $\vc{x}\in \calC$, we have:
\begin{equation}
    h_{\normalfont\text{best}}^{(t)} - h^* \leq \frac{\pnorm{\vc{x}^{(1)}-\vc{x}^*}{2}^2 + G^2 \sum_{k=1}^t a_t^2 }{2 \sum_{k=1}^t a_k} \xrightarrow[t \to \infty]{} 0
\end{equation}
where:
\begin{equation}
    h^* = h(\vc{x}^*) = \min_{\vc{x} \in \calC} h(\vc{x})
\end{equation}
\end{lemma}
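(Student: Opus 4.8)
The plan is to prove the bound via the classic \emph{distance-to-optimizer} telescoping argument, whose only structural ingredient is the non-expansiveness of the Euclidean projection onto the convex set $\calC$. Fix any minimizer $\vc{x}^*\in\calC$, so that $h^*=h(\vc{x}^*)$. Since projection onto a convex set is non-expansive, we have $\pnorm{\proj{\calC}{\vc{y}}-\vc{x}^*}{2}\leq\pnorm{\vc{y}-\vc{x}^*}{2}$ for every $\vc{y}\in\reals^d$ (using $\vc{x}^*\in\calC$). Applying this with $\vc{y}=\vc{x}^{(t)}-a_t\vc{g}^{(t)}$ and expanding the square yields
\[
\pnorm{\vc{x}^{(t+1)}-\vc{x}^*}{2}^2 \leq \pnorm{\vc{x}^{(t)}-\vc{x}^*}{2}^2 - 2a_t\,\tp{\vc{g}^{(t)}}\!\left(\vc{x}^{(t)}-\vc{x}^*\right) + a_t^2\pnorm{\vc{g}^{(t)}}{2}^2 .
\]

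First I would convert the cross term into an objective gap. The subgradient inequality $h^*\geq h(\vc{x}^{(t)})+\tp{\vc{g}^{(t)}}(\vc{x}^*-\vc{x}^{(t)})$ gives $\tp{\vc{g}^{(t)}}(\vc{x}^{(t)}-\vc{x}^*)\geq h(\vc{x}^{(t)})-h^*$, and combining this with the uniform bound $\pnorm{\vc{g}^{(t)}}{2}\leq G$ produces the one-step recursion
\[
\pnorm{\vc{x}^{(t+1)}-\vc{x}^*}{2}^2 \leq \pnorm{\vc{x}^{(t)}-\vc{x}^*}{2}^2 - 2a_t\left(h(\vc{x}^{(t)})-h^*\right) + a_t^2 G^2 .
\]
Next I would telescope this over $k=1,\dots,t$. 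Summing and discarding the nonnegative left-hand term $\pnorm{\vc{x}^{(t+1)}-\vc{x}^*}{2}^2$ isolates
\[
2\sum_{k=1}^t a_k\left(h(\vc{x}^{(k)})-h^*\right) \leq \pnorm{\vc{x}^{(1)}-\vc{x}^*}{2}^2 + G^2\sum_{k=1}^t a_k^2 .
\]
Because $h_{\text{best}}^{(t)}=\min_{k\leq t}h(\vc{x}^{(k)})$, each summand obeys $h(\vc{x}^{(k)})-h^*\geq h_{\text{best}}^{(t)}-h^*$, so the left side is lower bounded by $2\bigl(h_{\text{best}}^{(t)}-h^*\bigr)\sum_{k=1}^t a_k$. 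Dividing by $2\sum_{k=1}^t a_k$ then delivers exactly the claimed inequality.

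Finally I would verify that the right-hand side vanishes under the harmonic schedule $a_t=a/t$. Here $\sum_{k=1}^t a_k^2 = a^2\sum_{k=1}^t k^{-2}$ stays bounded (by $a^2\pi^2/6$), so the numerator is bounded uniformly in $t$, whereas $\sum_{k=1}^t a_k = a\sum_{k=1}^t k^{-1}$ diverges as the harmonic series. Hence the denominator grows without bound while the numerator does not, forcing the ratio to $0$ as $t\to\infty$. There is no genuine obstacle in this argument; the single point requiring care is this divergence-versus-convergence dichotomy of the two step-size sums, which is precisely what mandates a schedule satisfying $\sum_k a_k=\infty$ and $\sum_k a_k^2<\infty$ (the choice $a_t=a/t$ being one such) and what drives the bound to zero.
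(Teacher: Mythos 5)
Your proof is correct, and each step is the standard one: non-expansiveness of the Euclidean projection (valid since $\vc{x}^*\in\calC$ is a fixed point of $\proj{\calC}{\cdot}$), the subgradient inequality to turn the cross term into the gap $h(\vc{x}^{(k)})-h^*$, telescoping, lower-bounding by $h_{\text{best}}^{(t)}-h^*$, and the divergence/convergence dichotomy of $\sum a_k$ versus $\sum a_k^2$ under the harmonic schedule. Note that the paper itself does not prove this lemma at all --- it states it and defers to the citation of Shor's monograph --- so your argument is precisely the classical proof that the citation stands in for; you have in effect filled in the omitted details, including implicitly fixing the small typo in the stated bound (the numerator's sum should run over $a_k^2$, not $a_t^2$). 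The only technical caveats, which are inherited from the lemma statement rather than introduced by you, are that $\calC$ must be closed for the projection to be well-defined and that a minimizer $\vc{x}^*$ is assumed to exist.
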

We then prove Theorem~\ref{thm:osp} (restated below) via a direct application of Lemma~\ref{lemma:subgradient}:
\begin{theorem*}[Restated]  The OSP algorithm output $\bm{\alpha}_{T}$ satisfies:
\begin{equation}
    0 \leq \hat{\eta}(\bm{\alpha}_{T}) - \hat{\eta}(\bm{\alpha}^*) \leq \frac{\pnorm{\bm{\alpha}^{(1)}-\bm{\alpha}^*}{2}^2 + M a^2\sum_{t=1}^T t^{-2} }{2 a\sum_{t=1}^T t^{-1}} \xrightarrow[T \to \infty]{} 0
\end{equation}
for any initial condition $\bm{\alpha}^{(1)} \in \Delta_M$, $a>0$, where $\bm{\alpha}^*$ is a global minimum.
\end{theorem*}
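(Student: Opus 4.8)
The plan is to recognize Algorithm~\ref{alg:osp} as an instance of the projected sub-gradient method and then invoke Lemma~\ref{lemma:subgradient} verbatim, checking that each hypothesis of that lemma is met and matching the constants. First I would set $\calC = \Delta_M$, $\vc{x}^{(t)} = \bm{\alpha}^{(t)}$, $h = \hat{\eta}$, and $a_t = \nicefrac{a}{t}$, so that the projection-update in Line (15) of the algorithm is literally the iteration in Lemma~\ref{lemma:subgradient}. The simplex $\Delta_M$ is closed and convex, and by Proposition~\ref{prop:eta} (applied to the empirical distribution, i.e.\ the uniform measure over $\tuple{z}_1,\dots,\tuple{z}_n$) the empirical risk decomposes as $\hat{\eta}(\bm{\alpha}) = \frac{1}{n}\sum_{j=1}^n \max_{\vc{u}\in\calU_j}\{\tp{\vc{u}}\bm{\alpha}\}$, a finite sum of point-wise maxima of linear functions, hence convex and sub-differentiable. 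This supplies two of the three ingredients immediately.

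The crux of the argument is verifying that the vector $\vc{g}$ assembled in Lines (6)--(10) is a genuine sub-gradient of $\hat{\eta}$ at $\bm{\alpha}^{(t)}$. Since each per-sample term $\max_{\vc{u}\in\calU_j}\{\tp{\vc{u}}\bm{\alpha}\}$ is a maximum of linear forms, any maximizer $\vc{u}_j^\star \in \argmax_{\vc{u}\in\calU_j}\{\tp{\vc{u}}\bm{\alpha}^{(t)}\}$ is a sub-gradient of that term at $\bm{\alpha}^{(t)}$. I would argue that the oracle $\mathsf{attack}$, which returns $\bm{\delta}_j$ solving $\max_{\bm{\delta}\in\calS} r(\tuple{z}_j,\bm{\delta},\bm{\alpha}^{(t)})$, induces exactly such a maximizer: the error-indicator vector $\left(\identityf{f_i(\vc{x}_j+\bm{\delta}_j)\neq y_j}\right)_{i=1}^M$ equals the binary configuration vector attaining the inner maximum over $\calU_j$. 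Summing over $j$ and normalizing (Lines (8)--(10)) then yields $\vc{g} = \frac{1}{n}\sum_{j=1}^n \vc{u}_j^\star \in \partial\hat{\eta}(\bm{\alpha}^{(t)})$ by linearity of the sub-differential of a sum. Because each $\vc{u}_j^\star \in \{0,1\}^M$, the average satisfies $\vc{g}\in[0,1]^M$, so $\pnorm{\vc{g}}{2}^2 \leq M$, giving the uniform sub-gradient bound $G = \sqrt{M}$.

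With these identifications I would substitute $a_k = \nicefrac{a}{k}$ and $G^2 = M$ into the conclusion of Lemma~\ref{lemma:subgradient}, using $\sum_{k=1}^T a_k = a\sum_{t=1}^T t^{-1}$ and $\sum_{k=1}^T a_k^2 = a^2\sum_{t=1}^T t^{-2}$, which reproduces the displayed bound in Theorem~\ref{thm:osp} exactly. The output $\bm{\alpha}_T = \bm{\alpha}^{(t_{\text{best}})}$ is the best-so-far iterate, so $\hat{\eta}(\bm{\alpha}_T) = h_{\text{best}}^{(T)}$, and the lower inequality $0 \leq \hat{\eta}(\bm{\alpha}_T) - \hat{\eta}(\bm{\alpha}^*)$ is immediate since $\bm{\alpha}^*$ is a global minimizer over $\Delta_M$. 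Convergence to $0$ follows because the denominator $2a\sum_{t=1}^T t^{-1}$ diverges (harmonic series) while the numerator stays bounded ($\sum_{t=1}^T t^{-2} \to \nicefrac{\pi^2}{6}$). The only genuine subtlety — and the point I would flag — is the sub-gradient validity in the second paragraph: it hinges on $\mathsf{attack}$ being an \emph{exact} maximization oracle, which the paper explicitly acknowledges need not hold for arbitrary classifiers; everything else is a mechanical matching of constants.
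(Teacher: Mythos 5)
Your proposal is correct and follows essentially the same route as the paper: both proofs verify the hypotheses of Lemma~\ref{lemma:subgradient} (convexity and sub-differentiability of $\hat{\eta}$ via Proposition~\ref{prop:eta}, the sub-gradient validity of $\vc{g}$, and the norm bound $G=\sqrt{M}$) and then invoke it with $a_t = \nicefrac{a}{t}$. In fact, your write-up is more thorough than the paper's one-paragraph proof, since you explicitly justify why the attack oracle's error-indicator vector is a maximizing configuration (hence a sub-gradient) and flag the dependence on the oracle's exactness, which the paper only remarks on outside the proof.
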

\begin{proof}
The ensemble empirical adversarial risk $\hat{\eta}$ is convex and sub-differentiable (Proposition~\ref{prop:eta}), being minimized over a convex set $\Delta_M$. At each iteration $t$ in OSP, the vector $\vc{g}$ obtained at line (12) is norm-bounded with $G=\sqrt{M}$, the vector $\vc{g}$ is also a sub-gradient of $\hat{\eta}$ at $\bm{\alpha}^{(t)}$, therefore Lemma~\ref{lemma:subgradient} applies.
\end{proof}
\subsection{Worst Case Performance of Deterministic Ensembles}\label{app:upper-bound}
In Section~\ref{ssec:bounds}, we showed via Theorem~\ref{thm:bounds} that the adversarial risk of any randomized ensemble classifier is upper bounded by the worst performing classifier in the ensemble $\calF$. In this section, we will show that the same cannot be said regarding deterministic ensemble classifiers. That is, there exist an ensemble $\calF$, data distribution $\calD$, and perturbation set $\calS$ such that:
\begin{equation}
    \eta(\bar{f}) > \max_{i\in[M]} \eta(f_i)
\end{equation}
where $\bar{f}$ is the deterministic ensemble classifier constructed via the rule:
\begin{equation}
    \bar{f}(\vc{x}) = \argmax_{c\in[C]} \left[\sum_{i=1}^M \tilde{f}_i(\vc{x})\right]_c
\end{equation}
Consider the following setup:
\begin{enumerate}
    \item two binary classifiers in $\reals^2$:

\begin{equation}
    f_i(\vc{x}) = \begin{cases}
1 &\text{if $\tp{\vc{w}}_i\vc{x} \geq 0$}\\
2 &\text{otherwise}
\end{cases}
\end{equation}
which can be obtained from the \say{soft} classifiers:
\begin{equation}
    \tilde{f}_i(\vc{x}) = \begin{bmatrix}
   \tp{\vc{w}}_i\vc{x} \\
    -\tp{\vc{w}}_i\vc{x}
    \end{bmatrix}
\end{equation}
using $f_i(\vc{x}) = \argmax_{c\in \{1,2\}} [\tilde{f}_i(\vc{x})]_c$, where $\vc{w}_1 = \tp{[1\ 1]}$ and $\vc{w}_2 = \tp{[1\ -1]}$.
\item a $\mathsf{Ber}(p)$ data distribution $\calD$ over two data-points in $\reals^2 \times[2]$:
\begin{equation}
    \tuple{z}_1=\left(\vc{x}_1,y_1\right) = \left(\begin{bmatrix}
   -1\\
    2
    \end{bmatrix},1\right) \ \ \ \ \text{and} \ \ \ \     \tuple{z}_2=\left(\vc{x}_2,y_2\right) = \left(\begin{bmatrix}
   -1\\
    -2
    \end{bmatrix},1\right)
\end{equation} 

\item the $\ell_2$ norm-bounded perturbation set $\calS = \{\bm{\delta}: \norm{\bm{\delta}}\leq \epsilon\}$ for some $0<\epsilon<\nicefrac{1}{\sqrt{2}}$.
\end{enumerate}
We first note that for binary linear classifiers and $\ell_2$-norm bounded adversaries, we have that:
\begin{itemize}
    \item the shortest distance between a point $\vc{x}$ and the decision boundary of linear classifier $f$ with weight $\vc{w}$ and bias $b$ is:
    \begin{equation}
        \zeta = \frac{|\tp{\vc{w}}\vc{x} + b|}{\norm{\vc{w}}}
    \end{equation}
    
    \item if $f(\vc{x}) \neq y$, then the optimal adversarial perturbation is given  by:
    \begin{equation}
        \bm{\delta} = -\text{sign}\left(\tp{\vc{w}}\vc{x} + b\right) \frac{\epsilon\vc{w}}{\norm{\vc{w}}} 
    \end{equation}
\end{itemize}

We can now evaluate the adversarial risks of each classifier:
\begin{align}
    \begin{split}
        \eta_1 &= p \cdot\left(\max_{\norm{\bm{\delta}}\leq \epsilon} \identityf{\tp{\vc{w}}_1(\vc{x}_1+\bm{\delta}) <0}\right)  + (1-p)\cdot\left(\max_{\norm{\bm{\delta}}\leq \epsilon} \identityf{\tp{\vc{w}}_1(\vc{x}_2+\bm{\delta}) <0}\right) \\
        &= p \cdot\left( \identityf{1-\sqrt{2}\epsilon <0}\right)  + (1-p)\cdot\left( \identityf{-3<0}\right) = 1-p
    \end{split}
\end{align}
where we use $\epsilon <\nicefrac{1}{\sqrt{2}}$. Due to symmetry, we also get $\eta_2 = p$.

The average ensemble classifier $\bar{f}$ constructed from $f_1$ and $f_2$ is defined via the rule:
\begin{equation}
    \bar{f}(\vc{x}) = \begin{cases}
1 &\text{if $x_1 \geq 0$}\\
2 &\text{otherwise}
\end{cases}
\end{equation}
whose adversarial risk can be computed as follows:
\begin{align}
    \begin{split}
        \bar{\eta} &= p \cdot\left(\max_{\norm{\bm{\delta}}\leq \epsilon} \identityf{x_{1,1}+\delta_1 <0}\right)  + (1-p)\cdot\left(\max_{\norm{\bm{\delta}}\leq \epsilon} \identityf{x_{2,1}+\delta_1 <0}\right) \\
        &= p \cdot\left( \identityf{-1<0}\right)  + (1-p)\cdot\left( \identityf{-1<0}\right) =  p + 1-p = 1
    \end{split}
\end{align}

which is strictly greater than $\max\{p,1-p\}$  $\forall p\in (0,1)$. Therefore, we have constructed an example where deterministic ensembling is always worse than using any of the individual classifiers, which proves that deterministic ensemble classifiers \emph{do not} satisfy the upper bound.

\clearpage
\section{Additional Experiments and Comparisons}
\subsection{Experimental Setup} \label{app:setup}
In this section, we describe the complete experimental setup used for all our experiments.

\textbf{Training}. All models are trained for $100$ epochs via SGD with a batch size of $256$ and $0.1$ initial learning rate, decayed by $0.1$ first at the $50^{\text{th}}$ epoch and twice at the $75^{\text{th}}$ epoch. We employ the recently proposed margin-maximizing cross-entropy (MCE) loss from \cite{mrboost} with $0.9$ momentum and a weight decay factor of $5\times 10^{-4}$. We use $10$ attack iterations during training with $\epsilon=\nicefrac{8}{255}$ and a step size $\beta=\nicefrac{2}{255}$. For IAT, each classifier is indepdenelty trained from a different random initialization, using a standard PGD adversary. For MRBoost, we use their public implementation from GitHub to reproduce all their results. For BARRE, we use an adaptive PGD (APGD) adversary (discussed in detail in Section~\ref{app:arc-vs-apgd}) as our training attack algorithm. We apply OSP for $T_o=10$ iterations every $E_o=10$ epochs.

To avoid catastrophic overfitting \citep{rice2020overfitting}, we always save the best performing checkpoint during training. Since all the ensemble methods considered reduce to adversarial training for the first iteration, we use a shared adversarially trained first classifier. Doing so ensures a fair comparison between different ensemble methods. For both CIFAR-10, and CIFAR-100 datasets, we adopt standard data augmentation (random crops and flips). Per standard practice, we apply input normalization as part of the model, so that the adversary operates on physical images $\vc{x}\in[0,1]^d$.

\textbf{Evaluation}. For all our robust evaluations, we will adopt the state-of-the-art ARC algorithm \citep{dbouk2022adversarial} which can be used for both RECs and single models. Specifically, we use $20$ iterations of ARC, with an attack strength $\epsilon=\nicefrac{8}{255}$ and approximation parameter $G=2$. Following the recommendations of \cite{dbouk2022adversarial}, we use a step size of $\nicefrac{2}{255}$ when evaluating single models ($M=1$) and a step size of $\nicefrac{8}{255}$ when evaluating RECs ($M\geq 2$).

\subsection{Individual model robustness}

\begin{table}[tphb]
\centering
\caption{Natural and robust accuracies of the individual classifiers of all ensembles methods trained on CIFAR-10 (from Table~\ref{tab:barre-vs-others}). Robust accuracy is measured against an $\ell_\infty$ norm-bounded adversary using ARC with $\epsilon=\nicefrac{8}{255}$.} \label{tab:individual-cifar10}
\vskip 0.15in
\resizebox{0.99\columnwidth}{!}{%
\begin{tabular}{l  l | r  r r r r r r r r r}

\toprule
\multirow{2}{*}{Network}  & \multirow{2}{*}{Method}  & \multicolumn{2}{c}{$f_1$}  & \multicolumn{2}{c}{$f_2$} & \multicolumn{2}{c}{$f_3$} &
\multicolumn{2}{c}{$f_4$} &
\multicolumn{2}{c}{$f_5$} 
\\
  &   & $A_{\normalfont\text{nat}}$ & $A_{\normalfont\text{rob}}$   & $A_{\normalfont\text{nat}}$  & $A_{\normalfont\text{rob}}$ & $A_{\normalfont\text{nat}}$ & $A_{\normalfont\text{rob}}$ & $A_{\normalfont\text{nat}}$ & $A_{\normalfont\text{rob}}$& $A_{\normalfont\text{nat}}$  & $A_{\normalfont\text{rob}}$  \\
  \midrule
  \multirow{3}{*}{ResNet-20} & IAT & \multirow{3}{*}{$73.18$} & \multirow{3}{*}{$41.99$} & $73.42$ & $41.94$ & $74.44$ &	$42.25$ & $74.27$ & $42.06$ & $74.17$ & $42.14$ \\
    & MRBoost &  &  & $76.00$ & $41.42$ & $76.59$ & $39.60$ & $77.25$ & $38.38$  & $76.43$ & $36.62$ \\
    & BARRE  &  &  & $76.08$ & $41.18$ & $77.40$ & $39.87$ & $77.12$ & $39.07$  & $77.60$ & $37.01$ \\
    \midrule
  \multirow{3}{*}{MobileNetV1} & IAT & \multirow{3}{*}{$79.01$} & \multirow{3}{*}{$46.22$} & $79.17$ & $46.21$ & $79.05$ &  $46.60$ & $78.44$ & $46.11$ & $78.76$ & $46.74$ \\
    & MRBoost &  &  & $80.11$ & $44.52$ & $77.54$ & $42.03$ & $77.94$ & $39.36$  & $68.89$ & $33.40$ \\
    & BARRE  &  &  & $80.15$ & $44.56$ & $79.43$ & $42.67$ & $79.56$ & $39.65$  & $79.60$ & $38.28$ \\
    \midrule
  \multirow{3}{*}{ResNet-18} & IAT &  \multirow{3}{*}{$80.96$} & \multirow{3}{*}{$48.72$} & $80.64$ & $48.23$ & $81.24$ & $48.83$ & $81.13$ & $48.70$ & $-$ & $-$ \\
    & MRBoost && & $84.01$ & $47.56$ & $83.67$ & $45.72$ & $83.88$ & $43.38$  & $-$ & $-$ \\
    & BARRE  && & $84.35$ & $46.48$ & $84.89$ & $45.86$ & $83.88$ & $43.09$  & $-$ & $-$ \\
\bottomrule

\end{tabular}
}
\end{table}
In Tables~\ref{tab:individual-cifar10}\&\ref{tab:individual-cifar100}, we provide the clean and robust accuracies of all the individual classifiers constructed via the different ensemble methods on CIFAR-10 and CIFAR-100, respectively. Robust accuracy is measured using ARC. 

As expected, only ensembles produced via IAT consist of classifiers achieving near-identical robust and natural accuracies. In contrast, ensembles produced via MRBosst or BARRE witness a degradation in individual classifier robust accuracy as the ensemble size grows. However, since MRBoost was not initially designed for randomized ensemble classifiers, this degradation in robust accuracy can be rather severe as seen for MobileNetV1 in both Tables~\ref{tab:individual-cifar10}\&\ref{tab:individual-cifar100}. This explains why, for such ensembles, the optimal sampling probability obtained for the constructed REC completely disregards the last classifier as highlighted in Section~\ref{ssec:experiments}.

\begin{table}[t]
\centering
\caption{Natural and robust accuracies of the individual classifiers of all ensembles methods trained on CIFAR-100 (from Table~\ref{tab:barre-vs-others}). Robust accuracy is measured against an $\ell_\infty$ norm-bounded adversary using ARC with $\epsilon=\nicefrac{8}{255}$.} \label{tab:individual-cifar100}
\vskip 0.15in
\begin{sc}
\resizebox{0.99\columnwidth}{!}{%
\begin{tabular}{l  l | r  r r r r r r r r r}

\toprule
\multirow{2}{*}{Network}  & \multirow{2}{*}{Method}  & \multicolumn{2}{c}{$f_1$}  & \multicolumn{2}{c}{$f_2$} & \multicolumn{2}{c}{$f_3$} &
\multicolumn{2}{c}{$f_4$} &
\multicolumn{2}{c}{$f_5$} 
\\
  &   & $A_{\normalfont\text{nat}}$ & $A_{\normalfont\text{rob}}$   & $A_{\normalfont\text{nat}}$  & $A_{\normalfont\text{rob}}$ & $A_{\normalfont\text{nat}}$ & $A_{\normalfont\text{rob}}$ & $A_{\normalfont\text{nat}}$ & $A_{\normalfont\text{rob}}$& $A_{\normalfont\text{nat}}$  & $A_{\normalfont\text{rob}}$  \\
  \midrule
  \multirow{3}{*}{ResNet-20} & IAT & \multirow{3}{*}{$38.34$} & \multirow{3}{*}{$17.69$} & $38.64$ & $17.68$ & $38.40$ & $17.89$ & $39.13$ & $17.63$  & $38.36$ & $18.13$ \\
    & MRBoost & & & $41.69$ & $17.29$ & $42.69$ & $17.67$ & $42.92$ & $17.44$  & $42.83$ & $16.11$ \\
    & BARRE  && & $41.57$ & $18.22$ & $42.96$ & $17.24$ & $42.69$ & $17.14$  & $43.72$ & $16.30$  \\
    \midrule
  \multirow{3}{*}{MobileNetV1} & IAT & \multirow{3}{*}{$51.87$} & \multirow{3}{*}{$23.45$} & $51.46$ & $23.01$ & $50.61$ & $23.00$ & $51.21$ & $23.40$  & $51.89$ & $23.56$ \\
    & MRBoost & & & $53.96$ & $22.63$ & $53.45$ & $20.48$ & $52.55$ & $19.90$  & $38.88$ & $11.34$ \\
    & BARRE  && & $52.75$ & $22.90$ & $53.61$ & $21.21$ & $54.31$ & $18.67$  & $51.99$ & $18.02$  \\
    \midrule
  \multirow{3}{*}{ResNet-18} & IAT & \multirow{3}{*}{$53.85$} & \multirow{3}{*}{$24.15$} & $53.85$ & $24.17$ & $54.80$ & $24.30$ & $54.71$ & $24.50$  & $-$ & $-$ \\
    & MRBoost & & & $54.78$ & $22.28$ & $47.49$ & $16.28$ & $48.13$ & $15.98$  & $-$ & $-$ \\
    & BARRE  && & $55.21$ & $22.26$ & $55.69$ & $21.05$ & $53.73$ & $17.99$  & $-$ & $-$  \\
\bottomrule

\end{tabular}
}
\end{sc}
\end{table}

\subsection{Attacks for Randomized Ensembles} \label{app:improved-arc}
Given a data-point $\tuple{z}=(\vc{x},y)$ and a potentially random classifier $f$, the goal of an adversary is to find an adversarial perturbation that maximizes the single-point expected adversarial risk:

\begin{equation}\label{eq:attack-r}
	\bm{\delta}^* = \argmax_{\bm{\delta}:\pnorm{\bm{\delta}}{p} \leq \epsilon} r(\tuple{z}, \bm{\delta}) =  \argmax_{\bm{\delta}:\pnorm{\bm{\delta}}{p} \leq \epsilon} \means{f}{\identityf{f(\vc{x}+\bm{\delta})\neq y}} = \argmax_{\bm{\delta}:\pnorm{\bm{\delta}}{p} \leq \epsilon} \prob{f(\vc{x}+\bm{\delta})\neq y}
\end{equation}
where we adopt the $\ell_p$ norm-bounded adversary for the remainder of this section.
\begin{table}[b]
\centering
\caption{Comparing the success of different attack algorithms at fooling various RECs using $\ell_\infty$ norm-bounded attacks with $\epsilon=\nicefrac{8}{255}$ on CIFAR-10. For adaptive CW, we use an $\ell_2$ adversary with $\epsilon=2$. All the RECs are constructed with equiprobable sampling.} \label{tab:attacks-cifar10}
\vskip 0.15in
\begin{tabular}{l  l | r r r  r r r }

\toprule
Network & Method  & ACW ($\ell_2$) & AFGSM & APGD-L  & APGD-S & ARC &
ARC-R \\

  \midrule
  \multirow{3}{*}{ResNet-20} & IAT & $56.18$ & $57.48$& $49.31$ & $49.34$ & $46.73$ & \bfu{$45.77$}   \\
    & MRBoost-R  & $57.88$& $59.00$& $49.65$ & $49.61$ & $47.74$ & \bfu{$46.66$} \\
    & BARRE   & $58.96$ & $59.53$& $49.79$ & $49.75$ & $48.05$ & \bfu{$47.35$}  \\
    \midrule
  \multirow{3}{*}{MobileNetV1} & IAT & $59.55$ & $62.27$&  $52.94$ & $52.91$ & $50.68$ & \bfu{$49.57$}   \\
    & MRBoost-R & $56.06$ &$60.45$ &  $51.19$ & $51.02$ & $49.37$ & \bfu{$48.05$} \\
    & BARRE  & $59.50$ &$62.60$ & $52.16$ & $51.94$ & $51.16$ & \bfu{$49.91$}  \\
    \midrule
  \multirow{3}{*}{ResNet-18} & IAT & $59.57$&$64.37$ &  $54.50$ & $54.49$ & $52.42$ & \bfu{$51.43$}   \\
    & MRBoost-R & $59.86$&  $64.51$& $54.51$ & $54.23$ & $53.19$ & \bfu{$51.82$} \\
    & BARRE   & $60.23$&$66.39$ & $54.52$ & $54.07$ & $53.62$ & \bfu{$52.13$}  \\

\bottomrule

\end{tabular}
\end{table}

Projected gradient descent (PGD) \citep{madry2018towards} is perhaps the most popular attack algorithm for solving \eqref{eq:attack-r} for the case of differentiable deterministic classifiers. Specifically, given a surrogate loss function $l$, such as the cross-entropy loss, PGD finds an adversarial $\bm{\delta}$ iteratively via the following:

\begin{equation} \label{eq:pgd}
    \bm{\delta}^{(k)} = \Pi_\epsilon^p \left( \bm{\delta}^{(k-1)} + \eta \mu_p\left(\nabla_{\vc{x}} l\left(\tilde{f}\left(\vc{x}+\bm{\delta}^{(k-1)}\right) ,y\right)\right)\right)
\end{equation}
where $\mu_p$ is the $\ell_p$ steepest direction projection operator, and $\Pi_{\epsilon}^p$ is the projection operator on the $\ell_p$ ball of radius $\epsilon$.

In order to adapt PGD for evaluating randomized ensemble classifiers, \cite{pinot2020randomization} first proposed adaptive PGD (APGD-L) using the expectation-over-transformation (EOT) method \citep{athalye2018obfuscated}, which uses \eqref{eq:pgd} with the expected loss function as follows:
\begin{equation} \label{eq:apgd-l}
    \bm{\delta}^{(k)} = \Pi_\epsilon^p \left( \bm{\delta}^{(k-1)} + \eta \mu_p\left(\nabla_{\vc{x}} \mean{l\left(\tilde{f}\left(\vc{x}+\bm{\delta}^{(k-1)}\right) ,y\right)}\right)\right)
\end{equation}
Note that the discrete nature of randomized ensembles allows for an exact computation of the expectation in \eqref{eq:apgd-l}.

Recently, \cite{mrboost} proposed a stronger version of adaptive PGD, where the expectation is taken at the softmax level (APGD-S). Using APGD-S, \cite{mrboost} were able to compromise the BAT defense. Independently, \cite{dbouk2022adversarial} studied the effectiveness of EOT-based adaptive attacks for evaluating the robustness of RECs, and concluded that such methods are fundamentally ill-suited for the task. Instead, they proposed the ARC attack (Algorithm 2 of \citep{dbouk2022adversarial}), which relied on iteratively updating the perturbation based on estimating the direction towards the decision boundary of each classifier and using an adaptive step size method. 

In this section, we propose a small modification to ARC (ARC-R) that proves to be quite more effective in the equiprobable setting. Specifically, instead of looping over the classifiers in a deterministic fashion based on the order of the sampling probability vector, we propose using a randomized order loop. This ensures that ARC is never biased towards certain classifiers. In fact, Table~\ref{tab:attacks-cifar10} demonstrates that ARC-R is better than EOT-adapted single classifier attacks (PGD, FGSM, and CW) and ARC \citep{dbouk2022adversarial} at evaluating the robustness of RECs on CIFAR-10, constructed with equiprobable sampling across various network architectures and ensemble training methods. Hence, we shall adopt this version of ARC for all our experiments.

\subsection{ARC vs. Adaptive PGD for BARRE}\label{app:arc-vs-apgd}

As highlighted in Section~\ref{ssec:experiments}, we find that ARC, despite being the strongest adversary, leads to poor performance when adopted as the training attack in BARRE. In this section, we investigate this phenomenon, as we study the performance of BARRE using two different attacks during training, APGD \citep{mrboost} and ARC \citep{dbouk2022adversarial}. Specifically, we train two RECs on CIFAR-10 using the ResNet-20 architecture. Both RECs share the same first classifier $f_1$, which is adversarially trained using standard PGD. The second classifier $f_2$ is trained via either APGD or ARC. 

\begin{figure}[t]
  \centering
    \includegraphics[width=0.8\columnwidth]{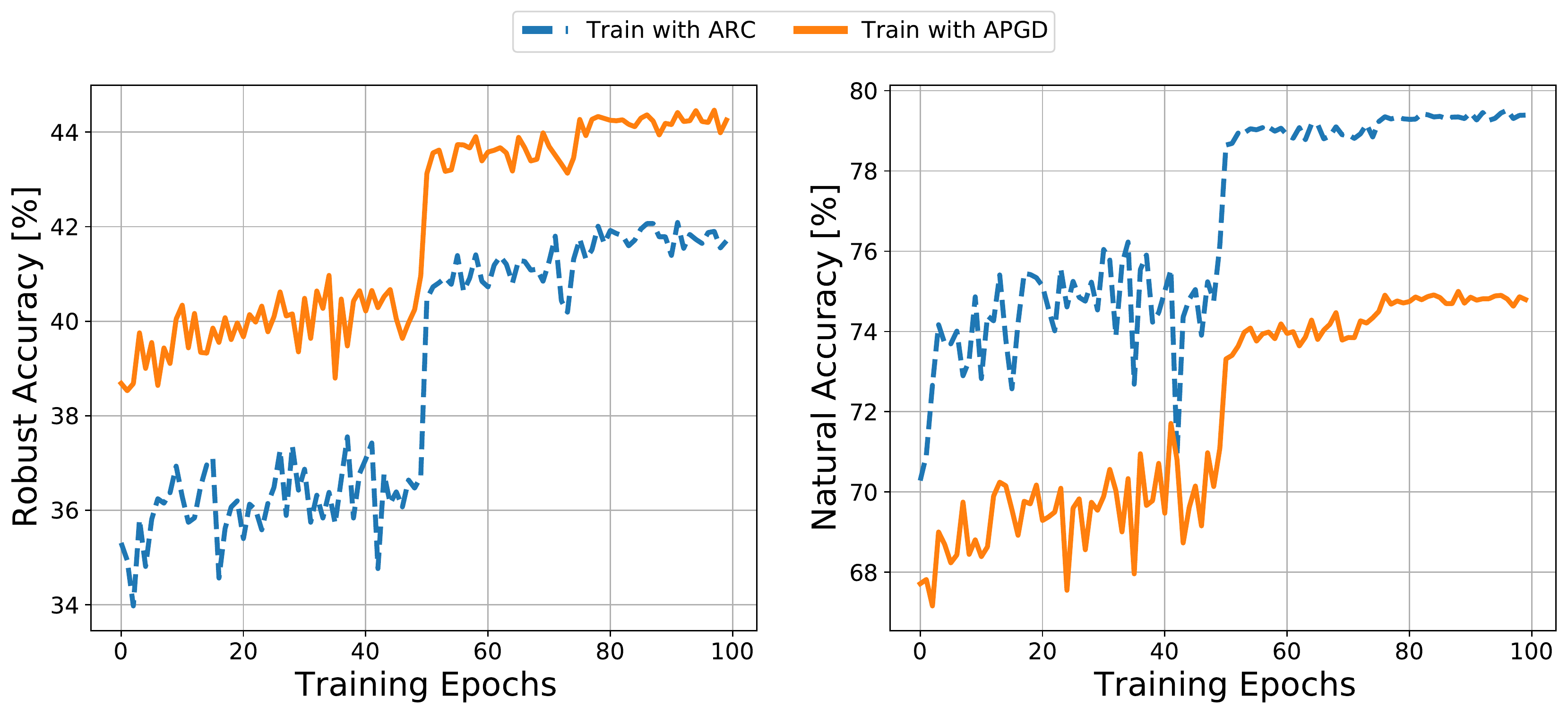}
    \caption{The robust (left) and natural (right) accuracies of an REC of two ResNet-20's trained on CIFAR-10 using BARRE vs. the training epochs of the second classifier $f_2$, where the first classifier $f_1$ is pre-adversarially trained. Robust and natural accuracies are reported on the test set, using $\ell_\infty$ norm-bounded adversaries with ARC and  $\epsilon=\nicefrac{8}{255}$.}
    \label{fig:train-arc-pgd}
\end{figure}

Figure~\ref{fig:train-arc-pgd} plots the evolution of both robust and clean accuracies of the two RECs across the 100 training epochs of $f_2$, measured on the test set. Note that in both RECs, the robust accuracy is evaluated via the stronger ARC adversary. When evaluated on clean images, we find that BARRE with ARC leads to significantly more accurate RECs when compared to BARRE with APGD. However, this comes at the expense of robust accuracy, as the REC obtained via BARRE with ARC is much more vulnerable than the APGD counterpart. We hypothesize that the adversarial samples generated via ARC during training do not generalize well to the test set. This explains why we observe that the REC obtained via BARRE with ARC achieves much higher robust accuracies on the \emph{training} set. Thus, for better generalization performance, we shall adopt adaptive PGD during training in all our experiments.

\subsection{Additional Results} \label{app:additional-results}
In this section, we complete the CIFAR-10 results reported in Table~\ref{tab:barre-vs-mrboost-cifar10} for showcasing the benefit of randomization. Specifically, Table~\ref{tab:barre-vs-mrboost-cifar100} provides further evidence that BARRE can train RECs of competitive robustness compared to MRBoost-trained deterministic ensembles, while requiring significantly less compute.

\begin{table}[tbhp]
\centering
\caption{Comparison between BARRE and MRBoost across different network architectures and ensemble sizes on CIFAR-100. Robust accuracy is measured against an $\ell_\infty$ norm-bounded adversary using ARC with $\epsilon=\nicefrac{8}{255}$.} \label{tab:barre-vs-mrboost-cifar100}
\vskip 0.15in

\resizebox{0.99\columnwidth}{!}{%
\begin{tabular}{l  l| r  r r | r  r r | r r r | r r r}

\toprule
\multirow{2}{*}{Network}  &  \multirow{2}{*}{Method}& \multicolumn{3}{c|}{$M=1$} & \multicolumn{3}{c|}{$M=2$}  & \multicolumn{3}{c|}{$M=3$} & \multicolumn{3}{c}{$M=4$}
\\
  &   & $A_{\normalfont\text{nat}}$ & $A_{\normalfont\text{rob}}$   & FLOPs & $A_{\normalfont\text{nat}}$ & $A_{\normalfont\text{rob}}$   & FLOPs & $A_{\normalfont\text{nat}}$  & $A_{\normalfont\text{rob}}$ &  FLOPs & $A_{\normalfont\text{nat}}$ & $A_{\normalfont\text{rob}}$ &  FLOPs \\
  \midrule
  \multirow{2}{*}{ResNet-20} & MRBoost & \multirow{2}{*}{$38.34$}& \multirow{2}{*}{$17.69$}& \multirow{2}{*}{81 M}&$41.08$ & $19.38$ & 162 M & $42.60$ & $20.48$ & 243 M & $43.62$ & $21.36$ & 324 M  \\ 
    & BARRE &  &  &  &$39.95$ & $19.13$ &  81 M & $40.96$ & $19.85$ &  81 M& $41.40$ & $21.41$ &  81 M \\ 
  \cmidrule(lr{1em}){2-14}
  \multirow{2}{*}{MobileNetV1} & MRBoost & \multirow{2}{*}{$51.87$}& \multirow{2}{*}{$23.45$}& \multirow{2}{*}{312 M}& $54.41$ & $25.73$ & 624 M & $54.91$ & $26.63$ & 936 M & $55.03$ & $26.97$ & 1.2 B  \\ 
    & BARRE &  &  &  & $52.31$ & $24.96$&  312 M & $52.74$ & $25.75$ &  312 M& $53.14$ & $27.12$ &  312 M\\
  \cmidrule(lr{1em}){2-14}
  \multirow{2}{*}{ResNet-18} & MRBoost & \multirow{2}{*}{$53.85$}& \multirow{2}{*}{$24.15$}& \multirow{2}{*}{1.1 B}& $55.83$ & $25.99$ & 2.2 B & $55.39$ & $26.09$ & 3.3 B & $55.80$ & $26.50$ & 4.4 B   \\ 
    & BARRE &  &  &  & $54.53$ & $25.37$ &  1.1 B & $54.92$ & $25.76$ & 1.1 B& $54.63$ & $26.90$ &  1.1 B \\
\bottomrule

\end{tabular}
}

\end{table}


\end{document}